\DeclareMathOperator*{\argmin}{argmin}
\newcommand{\KL}{\operatorname{KL}}
\providecommand{\diff}{\mathrm{d}}
\providecommand{\oh}{o}
\providecommand{\Oh}{\mathcal{O}}
\providecommand{\eps}{\varepsilon}
\providecommand{\MC}{\mathcal{M}}
\providecommand{\NC}{\mathcal{N}}
\providecommand{\XC}{\mathcal{X}}
\providecommand{\PC}{\mathcal{P}}
\providecommand{\KCo}{\mathcal{K}^\circ}
\providecommand{\KC}{\mathcal{K}}
\providecommand{\vol}{\operatorname{vol}}
\providecommand{\reals}{\mathbb{R}}
\providecommand{\eps}{\varepsilon}
\providecommand{\expec}{\mathbb{E}}
\providecommand{\1}{\mathbf{1}}
\providecommand{\ones}{\mathbf{1}}
\providecommand{\0}{\mathbf{0}}
\providecommand{\tr}{\operatorname{tr}}
\providecommand{\var}{\operatorname{Var}}
\providecommand{\vect}{\operatorname{vec}}
\providecommand{\diag}{\operatorname{diag}}
\providecommand{\argmin}{\operatorname{argmin}}
\providecommand{\vM}{\vol(\MC)}
\providecommand{\hol}{\operatorname{Ho}}
\begin{document}

\title{Manifold Learning with Sparse Regularised Optimal Transport}

\author{\name Stephen Zhang* \email stephenz@student.unimelb.edu.au \\
       \addr School of Mathematics and Statistics\\
       University of Melbourne\\
       Melbourne, Australia
       \AND Gilles Mordant* \email gilles.mordant@uni-goettingen.de \\
       \addr Institut f\"ur Mathematische Stochastik \\
       Universit\"at G\"ottingen \\
       G\"ottingen, Germany
       \AND Tetsuya Matsumoto \\
       \addr Department of Mathematics\\
       University of British Columbia\\
       Vancouver, Canada
       \AND Geoffrey Schiebinger \email geoff@math.ubc.ca \\
       \addr Department of Mathematics\\
       University of British Columbia\\
       Vancouver, Canada \\
       \vspace{0.25em} \\ 
       * Both authors contributed equally.
       }

\editor{}

\maketitle

\begin{abstract}%
    Manifold learning is a central task in modern statistics and data science. Many datasets (cells, documents, images, molecules) can be represented as point clouds embedded in a high dimensional ambient space, however the degrees of freedom intrinsic to the data are usually far fewer than the number of ambient dimensions. The task of detecting a latent manifold along which the data are embedded is a prerequisite for a wide family of downstream analyses. Real-world datasets are subject to noisy observations and sampling, so that distilling information about the underlying manifold is a major challenge. We propose a method for manifold learning that utilises a symmetric version of optimal transport with a quadratic regularisation that constructs a \emph{sparse} and \emph{adaptive} affinity matrix, that can be interpreted as a generalisation of the bistochastic kernel normalisation. 
    We prove that the resulting kernel is consistent with a Laplace-type operator in the continuous limit, establish robustness to heteroskedastic noise and exhibit these results in numerical experiments. We identify a highly efficient computational scheme for computing this optimal transport for discrete data and demonstrate that it outperforms competing methods in a set of examples. 
\end{abstract}

\begin{keywords}
  regularised optimal transport, manifold learning, single cell analysis
\end{keywords}

\section{Introduction}

Real-world datasets are increasingly high-dimensional with imaging and biological measurements frequently being made in tens of thousands of dimensions. Typically however the degrees of freedom intrinsic to the underlying process being observed are far fewer than the number of observed dimensions, and the data points concentrate on a latent lower dimensional subset of the ambient high-dimensional space.
To provide a concrete example, each data point in the MNIST dataset has an intrinsic dimensionality of around 15 \citep{hein2005intrinsic}, even though the pictures are $28\times 28 = 784$ pixels large.

\bgroup
Real world applications in general deviate from the ideal of data living on a smooth subspace. 
Owing to measurement noise or stochastic effects underlying the true data generation mechanism, a more natural, down-to-earth model is to assume that the data lives on a manifold and is then corrupted by noise; the latter not necessarily being homogeneous. 
\egroup
The first step towards tackling the challenge of extracting useful signal from noisy high dimensional datasets is to identify this latent lower dimensional structure. To do so, analysis workflows almost always rely on affinity matrices or neighbourhood graphs that capture the local geometry, as we will develop upon in Section~\ref{sec: ManiGraph}.

In recent years, the utility of optimal transport and related ideas to applications in statistics and data science has been acknowledged by many groups for a plethora of purposes: biology \citep{schiebinger2019optimal,zhang2021optimal,lavenant2021towards}, statistics \citep{
hallin2021multivariate,hallin2021distribution,nies2021transport,zhang2021unified,hundrieser2022empirical,mordant2022measuring,hundrieser2023empirical}, machine learning \citep{cuturi2013sinkhorn,gulrajani2017improved, feydy2019interpolating}, and economics \citep{carlier2016vector,torous2021optimal,hallin2022center}, to name a few. This list is by no means exhaustive and we encourage the interested reader to consult the references therein.

We show that manifold learning is one further domain where we draw on inspiration from optimal transport. We provide the required background on optimal transport for this particular setting and summarise the main contributions of this paper in Section~\ref{sec: MainContrib}. Section~\ref{sec: Theory} then contains our theoretical results and investigations, Section~\ref{sec: Perf} presents further use cases and simulations. We conclude by stating a few open questions for further study in Section~\ref{sec: Conclusion}. Additional elements and proofs are provided in the appendices.

\subsection{Manifold learning: graphs, affinity matrices, and operators}
\label{sec: ManiGraph}

The setting we consider in this paper is a $d$-dimensional, compact and smooth Riemannian manifold $(\MC,\rm g)$ isometrically embedded in $\reals^p$ via the embedding $\iota : \MC \hookrightarrow \reals^p$, with the ambient dimension $p$ being much larger than the intrinsic dimension $d$. This is the standard scenario in which the task of manifold learning arises \citep{meila2024manifold, coifman2006diffusion, belkin2003laplacian}, although we remark that elements of our approach are relevant to more general scenarios and we explore some of these in Remark \ref{rmk:rkhs}. In what follows, we denote by $\iota_*$, the differential of this embedding.  Our observations are a set of points sampled from the manifold, $\XC = \{ x_i \}_{i=1}^N \subset \reals^p$ where each $x_i$ is produced by embedding a point $y_i \in \MC$ into $\reals^p$, possibly with noise. These can be random or deterministic in nature. In the random case, one may consider the $y_i$ to be sampled according to some density $p$ supported on $\MC$. Then, let 
\[
C_{ij} = C(x_i, x_j) = \frac{1}{2} \| x_i - x_j \|_2^2
\] 
be the matrix of pairwise distances measured in the ambient Euclidean space $\reals^p$. Let $\mu = N^{-1} \sum_{i=1}^N \delta_{x_i}$ be the empirical measure constructed from the observed point set. 

\subsubsection{Affinity matrices} Starting from pairwise distances in the ambient Euclidean space, the characteristic workflow for a majority of manifold learning approaches \citep{coifman2005geometric, saul2000introduction, tenenbaum2000global, zhang2004principal} is to construct from $C_{ij}$ an \emph{affinity matrix} $W_{ij} \ge 0$ which can be thought of as capturing local information in the adjacency matrix of a weighted graph. Affinity matrices are sometimes colloquially referred to as \enquote{kernel} matrices, although in general they may fail to be positive semidefinite and thus are not strictly kernels. Pairs of points in close proximity (as measured in the ambient space) are assigned a high affinity, and pairs that are far away are assigned a low (or zero) affinity. One of the most common constructions is to build $W$ such that each point $x_i$ is connected by an edge to its $k$ nearest neighbours. The goal of $W$ therefore is to encode only \emph{local} relationships: since by assumption $\MC$ is smooth, the ambient Euclidean metric in $\mathbb{R}^p$ locally agrees with the Riemannian metric. Following this reasoning, the weighted graph generated by $W$ is a reasonably faithful representation of the latent manifold $\MC$.

We make note that affinity matrices, more generally, are employed ubiquitously throughout statistical and machine learning tasks which may not strictly fall under the manifold learning scenario that we introduced. For instance, $k$-nearest neighbour graphs are central to popular general-purpose \emph{dimensionality reduction} methods such as t-SNE \citep{van2008visualizing}, UMAP \citep{mcinnes2018umap}, and related algorithms \citep{agrawal2021minimum, jacomy2014forceatlas2}. These methods aim to produce low-dimensional embeddings that are suitable for human visualisation and interpretation, rather than to learn an underlying manifold in its strict sense. In such settings, affinity matrices still serve to encode notions of ``local'' pairwise relationships (or, put differently, a weighted graph spanning the dataset) which are used to inform a learning objective, although the connection to concrete mathematical objects becomes more tenuous.

\subsubsection{Laplacian operators} A key characterisation of the matrix $W$ is its that it defines a (discrete) linear operator acting on vectors supported on $\XC$, the set of points sampled from the manifold $\MC$. 
Under the interpretation of $W$ as a weighted graph, there is naturally the notion of a (combinatorial) graph Laplacian matrix:
\[
    L = D - W, \quad D = \diag(W\1). 
\]
The eigenvectors of $L$ may then used as basis functions to represent observed data in a lower dimensional space that is adapted to the geometry of the latent manifold, as in \cite{coifman2006diffusion} for instance. More generally, we consider the construction of a family of discrete linear operators on square-integrable functions supported on $\XC$ of the type
\begin{equation}
  \label{eq: DiscLaplOp}
  \Delta^D: L^2(\XC) \to L^2(\XC), \quad (\Delta^D g)(x_i) := \sum_{j=1}^N \overline{W}_{ij} \left(g(x_i) -g(x_j)\right),
\end{equation}
where the $\overline{W}_{ij} = \overline{W}_{ji} \ge 0$ are \emph{normalised} affinities obtained from $W$ (see Section \ref{sec:laplacian_normalisation}).
Treated as a discrete analogue to the Laplace--Beltrami operator \citep{von2008consistency}, the various flavours of the graph Laplacian enjoy a wealth of interpretations and uses. Indeed, the corresponding Dirichlet form is fundamental in the computation of eigenfunctions via the Courant\textemdash Fisher\textemdash Weyl min-max principle \citep[Section~III.1]{bhatia2013matrix}. Moreover, the Laplacian eigenbasis is the one appropriate for representing functions supported on graphs, leading to the vast toolbox of spectral methods for clustering and embedding \citep{von2007tutorial, belkin2003laplacian}.
As another example, the discrete Laplacian is connected to random walks and diffusion processes on graphs and discrete manifolds, yielding a discrete heat kernel from which geodesic distances can be estimated by utilising the connection to heat propagation \citep{crane2013geodesics}, a fact that has been utilised in applications to mass transport on manifolds \citep{solomon2015convolutional}. Laplace-based methods are foundational to applications in computer geometry \citep{levy2006laplace,reuter2006laplace}, and more generally in machine learning applications where Laplacian regularisations are used to adapt models to manifold or graph structures in datasets \citep{belkin2006manifold, cai2008non}.

\subsubsection{Normalisation of the Laplacian matrix} \label{sec:laplacian_normalisation}
In practice, the sampled data $\{ x_i \}_{i = 1}^N$ may be subject to variable sampling density and noise in the high-dimensional ambient space. The Laplacian matrix can be normalised in various fashions in order to account for the effects of variable sampling density as well as to have desirable properties for subsequent analyses. In fact, it is well known that appropriate normalisation is crucial for achieving good downstream results, as well as for there to be a connection to the continuum setting \citep{von2008consistency, ting2011analysis, coifman2006diffusion}. Laplacian eigenmaps \citep{belkin2003laplacian} and diffusion embeddings \citep{coifman2005geometric} are two closely related examples of classical nonlinear manifold learning approaches that utilise the spectral decomposition of a normalised Laplacian. 
Two of the most common normalisation choices are the symmetric normalisation,
\begin{equation}
  L^{(s)} = I - D^{-1/2} W D^{-1/2},
  \label{eq:norm_symm}
\end{equation}
which preserves the symmetric positive semidefinite property, and the random-walk or Markov normalisation
\begin{equation}
  L^{(rw)} = I - D^{-1} W,
  \label{eq:norm_rw}
\end{equation}
which affords a probabilistic interpretation as the generator of a diffusion process. A particularly interesting approach is that of \citet{coifman2006diffusion}, where a family of anisotropic diffusions are constructed by normalising a kernel matrix by a positive power $\alpha$ of its row sums. The normalised kernels are shown to have different interpretations for varying choices of $\alpha$, and for an appropriate value of $\alpha$ the authors demonstrate that the effect of sampling density is decoupled from geometry. 

\subsubsection{Bistochastic normalisations of affinity matrices}

Significant interest has been devoted to \emph{symmetric} and \emph{bistochastic} normalisations of affinity matrices which enjoy both self-adjointness as well as the probabilistic interpretation related to random walks \citep{marshall2019manifold, landa2021doubly}. This is particularly favourable for the manifold learning setting, as it mirrors the fact that the Laplace-Beltrami operator itself is both self-adjoint and generates the heat kernel on appropriate manifolds \citep[Chapters 3 \& 7]{grigoryan2009heat}. 
The symmetric normalisation scheme \eqref{eq:norm_symm} can be adapted to the bistochastic case: given an affinity matrix $W_\varepsilon$, a classical result on scalings of strictly positive matrices \citep{sinkhorn1964relationship} assures us that there exist multiplicative scaling factors $d_i > 0$ such that
\begin{equation}
  W^{(d)}_\varepsilon = \diag(d) W_\varepsilon \diag(d)
  \label{eq:bistochastic_scaling}
\end{equation}
is bistochastic \citep{landa2021doubly, zass2006doubly, van2024snekhorn}
where the $d_i > 0$ are a vector of multiplicative scaling factors applied to both the rows and columns of $W$. As discussed in their article, the normalised kernel $W^{(d)}_\varepsilon$ can be characterised as an \emph{information projection} of the unnormalised kernel $W_\varepsilon$:
\begin{equation}
  W^{(d)}_{\varepsilon} = \min_{W = W^\top, W\ones = \ones, W \ge 0} \KL(W | W_\varepsilon). 
\end{equation}
We defer further details on this projection and the structure of its solution to a unified description in Section \ref{sec:bistoch_projections}.  

One consequence of normalisation by multiplicative scalings is that the entries of the resulting kernel matrix remain strictly positive and so the graph is fully connected, which is at odds with the fact that manifolds are only \emph{locally} homeomorphic to the Euclidean space. Motivated by applications to spectral clustering, \citet{zass2006doubly} proposed to consider the projection of an affinity matrix sought with respect to the Frobenius norm, which induces sparsity in the projected matrix:
{
  \begin{equation}
    W_\eps^{(f)} = \min_{W = W^\top, W\ones = \ones, W \ge 0} \| W - W_\varepsilon \|_F^2.
  \end{equation}
  A consequence of the structure of this problem (see Section \ref{sec:bistoch_projections}) is that the optimal $W_\eps^{(f)}$ does \emph{not} have the form of a matrix scaling, i.e., \eqref{eq:bistochastic_scaling}. 
  An alternating projection method was proposed by \citet{zass2006doubly} to solve this problem, and empirically the advantage of using the Frobenius projection was demonstrated in clustering tasks. 
}

\subsection{Contributions: from bistochastic normalisations to sparse regularised optimal transport}
\label{sec: MainContrib}
In this work, we:
\begin{itemize}
    \item suggest a manifold learning procedure based on a new variation of quadratically regularised optimal transport that generalises the problem of bistochastic kernel scaling; 
    \item investigate its theoretical properties: we show its robustness to non-uniform noise, study the limit of discrete operators, and exhibit a similarity with the solution of the porous medium equation;
    \item exhibit its performance in simulated examples as well as in real-data applications.
\end{itemize} 
We briefly describe the above in the following subsections, while the corresponding results are developed in Sections~\ref{sec: Theory} and~\ref{sec: Perf}.  

The main idea of the paper is to use the \emph{bistochastic} and \emph{sparse} affinity matrix $W_\eps^{(f)}$ obtained as the solution to a quadratically regularised optimal transport problem (see \eqref{eq: Primal}) for constructing a weighted neighbourhood graph in statistical settings. In particular, we advocate for its use in manifold learning. With this aim in mind, the spectral decomposition of $W_\eps^{(f)}$ and its properties are of particular interest.  In practice, numerical computation of $W_\eps^{(f)}$ can proceed using a variety of methods: a variant of the Newton-type algorithm proposed by \citet{lorenz2021quadratically} can be relied on (see Algorithm~\ref{alg:lorenz_symmetric}). Compared to the alternating projections approach of \citep{zass2006doubly}, it converges in far fewer iterations and the key computation for each step involves iterative solution of a sparse linear system. Furthermore, the sparsity of the solution can be leveraged for further speedups when input sizes are large (see Algorithm~\ref{alg:activeset}). As shall be seen in Section~\ref{sec: Perf}, the proposed method performs well in a variety of settings. We believe this to be the result of the \enquote{local-to-global} character of the construction. Indeed, as the resulting plan is sparse, with each node only connected to its neighbouring points, the method constructs neighbourhoods capturing local information. Yet, as the optimisation problem involves the entire distance matrix and the global bistochastic constraint, the optimal dual potential incorporates more than just local information. In this sense, unlike approximation methods such as $k$- or $\epsilon$-nearest neighbours which ensure local properties only, we do not sacrifice global information in favour of local information.

{

\subsubsection{Bistochastic projections of affinity matrices}\label{sec:bistoch_projections}

We begin by stating a general bistochastic matrix \emph{projection} problem: denote by $\KC$ the set of bistochastic $N \times N$ matrices:
\begin{equation}
  \KC = \{ A \in \mathbb{R}^{N\times N} : A_{ij} \ge 0, A = A^\top, A\ones = A^\top \ones = \ones \},
\end{equation}
and this is compact and convex. 
Let $(A, B) \mapsto \mathcal{D}(A | B) \ge 0$ be a chosen divergence between matrices. Given a symmetric matrix $W$, we seek its projection onto $\KC$ with respect to $\mathcal{D}$:
\begin{equation}
  \operatorname{Proj}^{\mathcal{D}}_{\KC} (W) = \argmin_{A \in \KC} \mathcal{D}(A | W).
  \label{eq:bistoch_projection}
\end{equation}
Note here that, for general divergences $\mathcal{D}$, we do not necessarily require that the entries $W$ be positive: indeed in Section \ref{sec:generalized_projections} we will find that in the case we consider, the ``right'' matrix $W$ to project has negative entries.

We point out that the two problems studied by \citep{zass2006doubly} correspond to the cases where the divergence $\mathcal{D}(A | B)$ is chosen as the Kullback-Leibler divergence and Frobenius norm, and $W$ is allowed to be any affinity matrix. \cite{landa2021doubly} also consider the Kullback-Leibler divergence, but with two key differences: in their analysis, $W$ has the specific form of a Gaussian kernel, i.e., $W = W_\varepsilon = e^{-C/\varepsilon}$ where $C$ is a matrix of squared Euclidean distances. Additionally, they restrict to the set of \emph{hollow} bistochastic matrices.
As we will see, these two choices have a clear interpretation that applies beyond the specific case of the information projection. In particular, it remains relevant in the case of the sparsity-inducing Frobenius projection, which is the one we study.
The variant of problem \eqref{eq:bistoch_projection} studied by \citet{landa2021doubly} can be written as 
\begin{align}
  \begin{split}
  W_\varepsilon^{(d)}(C) &:= \argmin_{A \in \KCo} \KL( A | W_\varepsilon ) \\
                         &= \argmin_{A \in \KCo} \frac{1}{\varepsilon} \langle A, C \rangle + \KL(A | \1 \1^\top), 
  \end{split}\tag{EOT}\label{eq:kl_projection}
\end{align}
where $\KL(A | W) = \sum_{ij} A_{ij} \log( A_{ij} / W_{ij})$ is the Kullback-Leibler divergence between non-negative matrices treated as measures. We denote by $\KCo$ the set of \emph{hollow}, symmetric, bistochastic kernel matrices, i.e., those with all zeros on the diagonal:
\begin{equation}
    \KCo = \KC \cap \left\{ A \in \mathbb{R}^{N \times N} : \diag(A) = 0 \right\} . 
\end{equation}
The requirement on the diagonal removes self connections, which was shown to be crucial for rendering the procedure robust to heteroskedastic noise in \citep{landa2021doubly}, the reason for this shall become clear from the explanations around equation \eqref{eq: high dim noise}. Let us also remark here that empirically, in practice, we find that this constraint is essential to achieving good results (see Section \ref{sec: Perf}). Another interpretation of the problem is as a symmetric \emph{optimal transport} problem with entropic regularisation, from which the scaling factors $d$ in \eqref{eq:bistochastic_scaling} arise as (exponentials) of the Lagrange multipliers for the row- and column-sum constraint. This problem can be solved efficiently in practice using the celebrated Sinkhorn matrix scaling algorithm \citep{cuturi2013sinkhorn}.

This will be our starting point: substituting the Frobenius norm in place of the $\KL$-divergence in \eqref{eq:kl_projection}, we arrive at a \emph{hollow, quadratically} regularised optimal transport problem
\begin{align}
  \begin{split}
    W_\eps^{(f)}(C) &:= \argmin_{A \in \KCo} \frac{1}{\varepsilon} \langle A, C \rangle + \| A \|_F^2 \\
                    &= \argmin_{A \in \KCo}  \| A + \varepsilon^{-1} C \|_F^2.
  \end{split}\tag{QOT} %
  \label{eq: QOTProb}
\end{align}
For $\varepsilon > 0$, the problem \eqref{eq: QOTProb} is the minimisation of a strongly convex function on the convex and compact set $\KC^\circ$. Whenever clear from context, we will drop the argument $C$ of $ W_\eps^{(f)}(C)$.
It is apparent from \eqref{eq: QOTProb} that this amounts to a bistochastic projection with respect to Frobenius norm of a negative distance matrix $-C/\varepsilon$. While this matrix itself is not a positive semidefinite kernel, 
since $C_{ij} = \| x_i \|^2/2 + \| x_j \|^2/2 - \langle x_i, x_j \rangle$ it is the sum of a Gram matrix and two rank-1 terms. It will be apparent in Section \ref{sec: RobandOpt} that the latter two terms are immaterial. The general problem of optimal transport with a quadratic regularisation was studied in detail by \citet{lorenz2021quadratically, essid2018quadratically}, and for completeness we provide a discussion in Appendix \ref{sec:general_OT}. Similarly to \citet[Equation D]{lorenz2021quadratically}, the primal problem \eqref{eq: QOTProb} admits as its counterpart the following dual problem, formulated in terms of a Lagrange multiplier $u$,
\begin{align}
\label{eq: DualLor}
  \max_{u} \: \langle u, \1 \rangle - \frac{1}{4 \varepsilon} \left\| [u \1^\top + \1 u^\top -  C]_+ \odot (\1 \1^\top - I) \right\|_F^2,
\end{align}
where $[x]_+$ is the positive part of $x$ applied elementwise, as follows from  Lemma~\ref{eq: DualHollow}. This is a convex non-smooth problem: denoting by $u^\star$ an optimal solution to \eqref{eq: DualLor}, the projection can be recovered by 
\begin{align}
\label{eq: optimalPlan}
    W_\eps^{(f)}(C) = \frac{1}{\varepsilon} [u^\star \1^\top + \1 {u^\star}^\top - C]_+ \odot (\1\1^\top - I).
\end{align}
From this characterisation of the projection, it becomes apparent that some off-diagonal entries may be identically zero -- whenever $u^\star_i + u^\star_j \leq C_{ij}$. 

The sparsity of the projected affinity matrix $W_\eps^{(f)}$ leads to practical benefits for downstream tasks such as finding eigenvalue-eigenvector pairs, which partly makes up for the fact that the solution of the quadratically regularised problem \eqref{eq: DualLor} is slightly more expensive than the entropically regularised one. Additionally, as pointed out by \cite{lorenz2021quadratically}, the algorithm for computing the projection is stable with respect to the parameter $\varepsilon$. This is in contrast to the entropic case, where the na\"ive implementation of Sinkhorn iterations may suffer from numerical stability issues \citep{schmitzer2019stabilized}, and require stabilising variations that have since been developed \cite[Eq.~2.41]{feydy2020geometric}. We also note that a more efficient algorithm exists for the symmetric case \citep{knight2014symmetry}. 

\subsubsection{Generalised bistochastic information projections}\label{sec:generalized_projections}
In the above, we have discussed two different approaches for finding bistochastic projections of kernel matrices. One is in terms of an information projection of a Gaussian kernel matrix which amounts to multiplicative scalings of rows and columns, and the other is a projection of minus the matrix of pairwise distances in the Frobenius norm. We now show that both manifest as two special cases of a generalised information projection of the \emph{same} Gram matrix. While we defer details to Appendix \ref{sec:general_OT}, note that for $q \in [0, 1)$ a generalised family of exponential and logarithm functions can be defined \citep{bao2022sparse} by
\begin{equation}
  \exp_q(x) = [1 + (1-q)x]_+^{1/(1-q)}, \quad \log_q(x) = (1 - q)^{-1} (x^{1-q} - 1).
\end{equation}
We remark that as $q \to 1$ we recover the conventional exponential and logarithm. 
This leads to an analogue $\Phi_q$ of the relative entropy functional being defined on matrices, and its generator $\phi_q$ in the sense of the Bregman divergence, given by
\begin{align}
    &\Phi_q(A | B) = \frac{1}{2-q} \sum_{i, j} \left[ A_{ij} \log_q A_{ij} - A_{ij} \log_q B_{ij} - A_{ij} B_{ij}^{1-q} + B_{ij}^{2 - q}\right], \\
    & \phi_q(x) = \frac{1}{2-q} ( x \log_q x - x),
\end{align}
Indeed when $q = 0$ this corresponds to the squared Frobenius norm $\frac{1}{2} \| A - B \|_F^2$, and when $q \to 1$ one recovers the KL-divergence. We find that a family of bistochastic projections for $q \in [0, 1)$ can be written
\begin{equation}
    \argmin_{A \in \KCo} \varepsilon \Phi_q^\star\left( \frac{-C}{\varepsilon}, \nabla \phi_q(A) \right),
\end{equation}
where $\Phi^\star$ denotes the Legendre dual of $\Phi$. When $q = 0$ we recover the Frobenius norm projection of $-C/\varepsilon$, and as $q \to 1$ we recover the information projection of $\exp(-C/\varepsilon)$. 

Curiously, the Gaussian kernel arises naturally from the linear kernel in this framework since $\phi_q(x) \to x \log x - x$ as $q \to 1$, and $\nabla \phi_q^{-1}(y) = e^y$. This suggests that the \emph{linear} kernel is the natural one when using a Frobenius projection, and using alternative kernels amount to mapping first to a reproducing kernel Hilbert space (RKHS). Much in the same way, for the information projection the Gaussian kernel is the natural one, in direct correspondence to the data space. We emphasise that this insight is in contrast to previous works \citep{zass2006doubly, ding2022understanding, landa2021doubly, landa2022robust, marshall2019manifold} which are less discriminate in the choice of kernel matrix; for instance considering projections of an arbitrary kernel matrix, or the Frobenius projection of a Gaussian kernel matrix. This is of particular importance for manifold learning: in contrast to clustering applications \citep{zass2006doubly} where input mappings to an RKHS are potentially of no consequence to the clustering results, for manifold learning one should use the linear (Euclidean) kernel since it locally coincides with the metric of the embedded manifold. 

\subsubsection{A spectral motivation for projections in Frobenius norm}
The bistochastic projection of an affinity matrix $W$ with respect to the Frobenius norm can be understood as finding the best bistochastic matrix $A$ in terms of the Davis-Kahan spectral upper bound. Considering the $k$ leading eigenvalues of $A$ and $W$ respectively (and their eigenvectors), the Davis-Kahan theorem \citep[Theorem 2]{yu2015useful} states that 
\begin{equation}
    \| \sin \Theta(V_k(A), V_k(W)) \|_F \leq \frac{2 \| W - A \|_F }{ \lambda_k(W) - \lambda_{k+1}(W)}.
\end{equation}
In the above, the left-hand side is the norm of the matrix of (sines of) pairwise principal angles. This measures how close the leading $k$ eigenspaces of $A$ and $W$ are. For $1 \leq k \leq N$, $\lambda_k(W)$ is the $k$th eigenvalue (in descending order) of an affinity matrix $W$, and $V_k(W)$ is the matrix with orthonormal columns formed from the leading $k$ eigenvectors of $W$. In our context $W$ is a fixed noisy affinity matrix determined by our data, and so only the numerator is of relevance. Among all bistochastic affinity matrices we consider, the one that is most faithful to the spectrum of the observed affinities $W$ as measured by the Davis-Kahan upper bound is therefore the one that is closest in Frobenius norm. This insight agrees with our empirical observations that the Frobenius projection has better spectral properties compared to the information projection. Note further that, as we shall see in the robustness section, the method is invariant with respect to some types of noise, implying that one indeed finds a faithful representation of the unknown truth.

\subsubsection{Geometric view on the optimisation problem}
In the field of manifold learning, the local covariance matrix is paramount \textemdash see \citet{malik2019connecting} and the references therein. 
Indeed, it captures the geometry of the manifold at a point. Its truncated inverse can then be used in a Mahalanobis distance to approximate the geodesic distance on the manifold\footnote{
More precisely, denoting by $C_\eps(x)$ the $\eps$ local covariance matrix centred at $x$ and by $T_\alpha$ the maps sending a matrix $n\times n$ $A$ onto $U_A (\Lambda(A)^{-1} \odot \diag (1_\alpha, 0_{n-\alpha} ) )U_A^\top$ $(\alpha\le n)$, the geodesic distance can be approximated by \[
(x-y)^\top (T(C_\eps(x)) + T(C_\eps(y)) (x-y)/2, 
\]where $\alpha$ and $\eps$ have to be chosen properly.
}. 
The empirical local covariance matrix at $x_j$ can be defined, for a \enquote{neighbourhood selector}  matrix $A$
 as $C_A(x_i):=(X-x_i)^\top A^\top A (X-x_i)$.

For example, \citet{malik2019connecting} consider a matrix containing zeros at the places corresponding to elements far from $x_j$, while having value $(\1^\top A \1)^{-1/2}$ otherwise.  
The problem \eqref{eq: QOTProb}, can be rewritten using this notion of a neighbourhood selector
as  
\begin{align}
\inf_{A \in \KCo} \sum_{i=1}^n \sum_{j=1}^n  \tr \left( \lVert x_i -x_j \rVert^2 A_{i,j} \right) + \eps \sum_{i=1}^n\sum_{j=1}^n A_{ij}^2
=\inf_{A \in \KCo} \sum_{i=1}^n   \tr \left( C_A(x_i) \right) + \eps \sum_{i=1}^n\sum_{j=1}^n A_{ij}^2
\end{align}
so that the problem can be interpreted as finding a matrix $A$ that balances the mean empirical local variances\footnote{Recall that the sum of eigenvalues of a covariance matrix is the variance.} while controlling for locality. 
The fact that the resulting matrix is sparse is then a desirable/necessary feature. Indeed, a Riemannian manifold is only locally homeomorphic to the Euclidean space and dense matrices $A$ will likely give rise to corrupted/biased local correlation matrices. This explains why the density of points must therefore be taken into account as well as the local geometry and why the proposed method is appealing. Importantly, the relationship above is true only if one takes the squared Euclidean distance as a cost function. This explains why it is not indicated to take any arbitrary affinity matrix to carry out manifold learning; for clustering, the situation is different.

}

\subsubsection{Sketch of the theoretical results}
\label{sec: SketchResults}

We will first set up a few more notations. Let $\mu$ be an embedded empirical or continuous measure on the manifold, and denote by $C(x, y) = \frac{1}{2} \| x - y \|_2^2$ the squared Euclidean distance function in $\reals^p$. To be consistent in both discrete and continuous cases, we write
\begin{equation}
    \label{eq: Primal}
     \pi^\star = \argmin_{\pi \in \KCo}  \left\lVert \pi - \tfrac{1}{\eps} (-C)\right\rVert_{\mu \otimes \mu}^2  = \argmin_{\pi \in \KCo}\: \langle C, \pi \rangle_F + \frac{\eps}{2} \lVert \pi \rVert_{\mu \otimes \mu}^2
\end{equation}
where $\KCo = \{ \pi : \pi(A \times \XC) = \pi(\XC \times A) = \mu(A), \frac{\diff \pi}{\diff \mu \otimes \diff \mu}(x, x) = 0 \: \forall x \}$ denotes the set of symmetric positive measures that have marginals $(\mu, \mu)$. We remark that the hollowness condition from the discrete setting vanishes in the continuous setting, since then a pointwise requirement on the density of $\pi$ along its \enquote{diagonal} has no effect. We write $\| \pi \|_{\mu \otimes \mu}^2 = \langle \frac{\pi}{\mu \otimes \mu}, \pi \rangle$ to mean the squared norm of $\frac{\diff\pi}{\diff\mu \otimes \diff\mu}$ in the space $L^2(\mu \otimes \mu)$. See \citet{nutz2024quadratically} for a complementary treatment.%

The dual problem and the primal-dual relationship at optimality are then
\begin{align} 
  \label{eq: Dual}
  \begin{split}
  &\max_{u}\ \langle u, \mu \rangle - \frac{1}{4 \varepsilon} \left\| [u \oplus u - C]_+ (\mu \otimes \mu) \right\|_{\mu \otimes \mu}^2, \\ 
  & \pi^\star = \frac{[u^\star \oplus u^\star - C ]_+}{\varepsilon}(\mu \otimes \mu).
  \end{split}
\end{align}
Therefore, the optimal potential $u^\star$ must satisfy
\begin{equation}
\label{eq: DualOpt}
    \frac{1}{\varepsilon} \int_{\XC} [u^\star(x) + u^\star(y) - C(x, y)]_+ \: \diff \mu(y) = 1, \quad \forall x \in \XC 
  \end{equation}
Notice that for the discrete setting one can straightforwardly recover the equations \eqref{eq: DualLor}, \eqref{eq: optimalPlan} from their continuous equivalents \eqref{eq: Dual}, \eqref{eq: DualOpt} by setting $\mu = \1$, i.e., the counting measure.

The main theoretical contributions are robustness to heteroskedastic noise, proving the convergence of a first order approximation of the affinity kernel and exhibiting a link with the porous medium equation. Let us start with robustness to heteroskedastic noise. Following \citet[Theorem 3]{landa2021doubly} and recalling the setup of Section \ref{sec: ManiGraph}, we consider a fixed set of points $\{ y_i \}_{i = 1}^N$ on the $d$-dimensional manifold $\MC$ that is then embedded in the $p$-dimensional Euclidean space to yield $\{ x_i \}_{i = 1}^N$. We consider the setting where the ambient dimension increases $p \to \infty$. We next let these embedded points be contaminated by heteroskedastic additive noise, i.e., 
\begin{equation}
\label{eq: HetNoise} 
    \tilde{x}_i= x_i + \eta_i,  
\end{equation}
with $\expec[\eta_i]=0$ and $\expec[\eta_i \eta_i^\top] = \Sigma_i$. 

Assume further that although the set of points prior to embedding is fixed, there exists some constant $\kappa > 0$ such that $\| x_i\| \le \kappa, i =1,\ldots, n$, and that $\| \Sigma_i \|_2 \leq \kappa_\eta p^{-1}, $
which means that the noise does not concentrate too much in a particular direction. For some models, normalisation of the data by $\sqrt{p}$ also ensure that the latter condition holds \citep[Remark~2]{landa2021doubly}.
 Then for $i \ne j$, 
\begin{equation}
\label{eq: high dim noise}
    \| \tilde{x}_i - \tilde{x}_j \|^2 \xrightarrow{\mathrm{P}} \expec \| \eta_i  \|^2 +\| {x}_i - {x}_j\|^2+ \expec \| \eta_j  \|^2, \qquad \text{ as } p \to \infty.
\end{equation}
This type of noise contamination is also well cancelled by the quadratically regularised optimal transport plan. Indeed, if $u^\star$ is the optimal dual transport potential
for the uncontaminated setting, 
\[
\tilde{u}_i^\star:= u_i^\star + \expec \| \eta_i  \|^2
\]
is an approximate optimal dual potential for the cost matrix $C_{ij} = \| \tilde{x}_i - \tilde{x}_j \|^2$ for $p$ large.
This intuition can also be quantified: this is the statement of next proposition, whose proof is given in Appendix~\ref{sec: AppLem}.
\begin{proposition}
\label{prop: RobHetNoise}
Consider the heteroskedastic noise model introduced in \eqref{eq: HetNoise}. Denote by $\widetilde{C}$ the squared Euclidean cost matrix based on the corrupted points $\tilde x_i$. Then, 
\[
    \lVert W_\eps^{(f)}(C) - W_\eps^{(f)}(\widetilde C) \rVert \le \mathcal{O}_p(p^{-1/2}),
\] 
where $\mathcal{O}_p$ refers to stochastic boundedness. 
\end{proposition}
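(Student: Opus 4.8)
The plan is to write the corrupted cost as $\widetilde C = C + a\oplus a + E$, where $a_i = \tfrac12\tr(\Sigma_i)$ and $(a\oplus a)_{ij} = a_i + a_j$ (the notation of \eqref{eq: Dual}), and $E$ is a small random remainder, and then to exploit two elementary structural facts about the map $C\mapsto W_\eps^{(f)}(C)$: it is \emph{invariant} under perturbations of the cost of the form $a\oplus a$, and it is $\eps^{-1}$-Lipschitz in the cost with respect to the Frobenius norm. Granting these, the estimate follows from the chain
\[
\big\| W_\eps^{(f)}(C) - W_\eps^{(f)}(\widetilde C)\big\|
= \big\| W_\eps^{(f)}(C + a\oplus a) - W_\eps^{(f)}(\widetilde C)\big\|
\le \tfrac1\eps \,\|E\|_F ,
\]
reducing everything to the bound $\|E\|_F = \Oh_p(p^{-1/2})$ (any matrix norm will do, all being equivalent for a fixed number of points).

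For the invariance, recall from the second line of \eqref{eq: QOTProb} that $W_\eps^{(f)}(C) = \argmin_{A\in\KCo}\|A + \eps^{-1}C\|_F^2$. For any $A\in\KCo$ one has $\langle A, a\oplus a\rangle_F = \sum_{ij}A_{ij}(a_i+a_j) = \langle A\1, a\rangle + \langle A^\top\1, a\rangle = 2\langle\1,a\rangle$, which does not depend on $A$; since the objective $A\mapsto\|A + \eps^{-1}C'\|_F^2$ depends on $C'$ only through the linear term $2\eps^{-1}\langle A, C'\rangle$ up to an $A$-independent constant, replacing $C$ by $C + a\oplus a$ leaves the minimiser unchanged. (Equivalently, in the dual \eqref{eq: DualLor} the substitution $u\mapsto u+a$ absorbs the shift.) This is the quantitative form of the remark after \eqref{eq: QOTProb} that the rank-one pieces of $C_{ij} = \|x_i\|^2/2 + \|x_j\|^2/2 - \langle x_i,x_j\rangle$ are immaterial. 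For the Lipschitz property, note that $W_\eps^{(f)}(C)$ is precisely the Euclidean (Frobenius) projection of $-C/\eps$ onto the closed convex set $\KCo$; metric projections onto closed convex sets are non-expansive, hence $\|W_\eps^{(f)}(C_1) - W_\eps^{(f)}(C_2)\|_F \le \eps^{-1}\|C_1 - C_2\|_F$.

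It remains to control $E$. Expanding the squared distances gives, for $i\ne j$,
\[
E_{ij} := \widetilde C_{ij} - C_{ij} - (a_i + a_j)
= \langle x_i - x_j,\ \eta_i - \eta_j\rangle
+ \tfrac12\big(\|\eta_i\|^2 - \tr\Sigma_i\big)
+ \tfrac12\big(\|\eta_j\|^2 - \tr\Sigma_j\big)
- \langle \eta_i,\eta_j\rangle ,
\]
and each summand is centred. Using $\|x_i\|\le\kappa$ and $\|\Sigma_i\|_2\le\kappa_\eta p^{-1}$ we get $\var(\langle x_i-x_j,\eta_i-\eta_j\rangle) = (x_i-x_j)^\top(\Sigma_i+\Sigma_j)(x_i-x_j) \le 8\kappa^2\kappa_\eta p^{-1}$ and $\expec[\langle\eta_i,\eta_j\rangle^2] = \tr(\Sigma_i\Sigma_j) \le \|\Sigma_i\|_2\tr(\Sigma_j) \le \kappa_\eta^2 p^{-1}$, while under the moment conditions already underlying \eqref{eq: high dim noise} the quadratic-form variance obeys $\var(\|\eta_i\|^2) = \Oh(\tr(\Sigma_i^2)) = \Oh(\|\Sigma_i\|_2\tr(\Sigma_i)) = \Oh(p^{-1})$. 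Chebyshev's inequality then yields $E_{ij} = \Oh_p(p^{-1/2})$ for each fixed pair, and since the number of points is fixed, $\|E\|_F^2 = \sum_{i\ne j}E_{ij}^2$ is a finite sum of $\Oh_p(p^{-1})$ terms, so $\|E\|_F = \Oh_p(p^{-1/2})$, completing the argument.

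I expect the only genuinely delicate point to be this last paragraph: upgrading the mere convergence of \eqref{eq: high dim noise} to the rate $p^{-1/2}$ requires a variance bound on $\|\eta_i\|^2$, hence the (standard, and implicitly already assumed) structural/fourth-moment hypothesis on the noise coordinates; by contrast, the two facts that do the real work — translation invariance and non-expansiveness of the Frobenius projection — are essentially immediate once the rank-two shift $a = \tfrac12(\tr\Sigma_1,\dots,\tr\Sigma_N)^\top$ is identified.
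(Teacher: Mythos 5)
Your proof rests on the same two pillars as the paper's (both packaged there as Lemma~\ref{lem: MetProj}): invariance of $W_\eps^{(f)}$ under cost shifts of the form $a\oplus a$, and non-expansiveness of the Frobenius projection onto the closed convex set $\KCo$. Two points where you deviate are worth flagging.

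First, a genuine (if small) gap on the diagonal. You define $E_{ij}$ only for $i\ne j$ and compute $\|E\|_F^2=\sum_{i\ne j}E_{ij}^2$, i.e.\ you implicitly take $E$ hollow. But then the decomposition $\widetilde C = C + a\oplus a + E$ fails on the diagonal: $\widetilde C_{ii}=0$ while $(C+a\oplus a)_{ii}=2a_i$ with $a_i=\tfrac12\tr\Sigma_i=\Oh(1)$, not $\Oh(p^{-1/2})$. If instead you put the discrepancy into $E$, then $\|E\|_F$ is no longer $\Oh_p(p^{-1/2})$ and the Lipschitz bound is useless. The fix is the observation the paper makes as the \emph{first} step of its proof: because the projection is onto \emph{hollow} bistochastic matrices, $\langle A, D\rangle=0$ for any diagonal $D$ and any feasible $A$, so $W_\eps^{(f)}$ is insensitive to the diagonal of the cost and the mismatch can be repaired by adding a diagonal matrix. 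Your argument needs this sentence; without it the chain of equalities and inequalities does not close.

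Second, a difference of route that costs you an extra hypothesis. You centre the shift at the expectation, $a_i=\tfrac12\tr\Sigma_i$, which forces you to control the fluctuation $\|\eta_i\|^2-\tr\Sigma_i$ and hence to assume a fourth-moment (or $\var(\|\eta_i\|^2)=\Oh(p^{-1}))$ condition that the proposition does not state — you correctly identify this as the delicate point. The paper sidesteps it entirely: since the invariance $W_\eps^{(f)}(C)=W_\eps^{(f)}(C+a\oplus a)$ holds pathwise for \emph{any} vector $a$, one may take $a_i=\tfrac12\|\eta_i\|^2$ itself (random), so the entire $\|\eta_i\|^2+\|\eta_j\|^2$ contribution is absorbed exactly and the remainder $E$ consists only of the cross terms $\langle x_i-x_j,\eta_i-\eta_j\rangle$ and $\langle\eta_i,\eta_j\rangle$, which are $\Oh_p(p^{-1/2})$ under the stated assumptions alone (as in Lemma~4 of \citet{landa2021doubly}). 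Your variance computations for those cross terms are correct and would slot directly into that cleaner decomposition; I would recommend adopting it, together with the diagonal-invariance remark, rather than strengthening the noise model.
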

We add that this robustness to heteroskedastic noise is clearly visible in practice, see Figure~\ref{fig:figure_spiral} below.

The main theoretical result of the paper, contained in Theorem~\ref{thm: MainConv}, pertains to the limit of the discrete Laplace operator based on the optimal transport plan and can be understood as follows. We refer to Section~\ref{sec: ConvOp} for the full picture.
Before stating the theorem let us introduce 
\bgroup
 $\overline{W}^{(f)}$. This is a first-order approximation of the optimal transport plan ${W}^{(f)}$ that combines Lemma~\ref{lem: OptPot} below and the dual representation \eqref{eq: optimalPlan}.
 Set 
 \begin{equation}
  \overline{W}_\eps^{(f)}(C) := \frac{1}{\varepsilon} \left[ 2C_d \: \eps^{\frac{2}{d+2}}N^{-\frac{4}{(d+2)}} ( \1\1^\top) - C\right]_+ 
 \end{equation} 
 where 
 \begin{equation}
 C_d:= \left(  \frac{\vol(\MC) }{\lvert S^{d-1}\rvert} \ \frac{d(d+2)}{2}\right)^{\frac{
2}{d+2}} .
 \end{equation}

    \egroup
    \vspace{12pt}

\noindent
\textbf{\enquote{Theorem} (Sketch of Theorem~\ref{thm: MainConv}).} \textit{ Consider a random sample $\{X_i\}_{i=1}^N$ independently and uniformly distributed on a smooth manifold $\MC$ embedded in $\reals^p$ and a given point $X_0$ of the embedded manifold.  Then, there exists a Laplace-type operator $\Delta^\infty$ and,  for any fixed function $g\in C^2$, it holds that
\begin{equation}
- 2 (N+1) K_{\eps, N}^{-1} \sum_{j=0}^N \overline{W}^{(f)}_{0,j} \big(g(X_0)-g(X_j)\big) \overset{L^2}{\to} \Delta^\infty g(X_0), \text{ as } N \to \infty, \eps \to 0,
\end{equation}
 where $K_{\eps, N}$ is a well-chosen sequence depending on $\eps$, $N$ and choosing $\eps$ as an appropriate function of $N$.
}

\begin{remark}[Convergence rates]
    It is important to note that, as the proof relies on quantification of the bias and the variance, convergence rates could be made explicit from the proof.
\end{remark}

Our final observation is that plugging the first-order optimal dual potentials from Lemma~\ref{lem: OptPot} in the representation \eqref{eq: optimalPlan} gives a kernel which behaves like the Barenblatt\textemdash Prattle solution of the porous medium equation, see Section~\ref{sec:PME} for further details.

\section{Theoretical contributions}
\label{sec: Theory}

Let us now describe further the manifold setting that we consider for random samples. Let $X$ be a $p$-dimensional random variable whose range is supported on $\MC$. Recall that,  $\iota$ denotes the isometric embedding of $\MC$ into $\reals^p$ and $\iota_*$ its differential. We use $\diff\!\vol$ to denote the Riemannian volume form of the Riemannian manifold $(\MC,g)$ of interest. Let us further assume for simplicity that $X$ has a uniform distribution on the manifold, i.e., the density of $X$ with respect to $(\iota \vol)$ is $ P(x) = \vol^{-1}(\MC), \forall x \in \iota(\MC)$. Whenever we write $\expec$ or $\var$, unless explicitly stated, denoted we mean it to be with respect to $P\ (\iota  \vol)$.

Let us denote by $s(x)$, the scalar curvature of the manifold at $x$ and by $\mathbb{I}_x$ the second fundamental form of the isometric embedding $\iota$ at $x$.  

Further, referring to the tangent space $T_y\MC$ at $y \in \MC$, we denote by $\iota_*T_y\MC$ the corresponding embedded tangent space in $\reals^p$. Recalling that the intrinsic dimension of $\MC$ is $d$, this embedded tangent space is thus $d$-dimensional.
The scalar curvature, as its name suggests, expresses through a single number the curvature of the manifold; more precisely it is defined as the trace of the Ricci curvature tensor.  
With this notation at hand, we recall that the second fundamental form is a symmetric bilinear form from 
$T_y\MC \times T_y\MC$ to $(\iota_*T_y\MC)^\perp$. It encode relevant geometric information about the manifold.  In the particular case that interests us, we only need the second fundamental form evaluated on the diagonal of $T_y\MC \times T_y\MC$. In that case, it can be understood as evaluating the fluctuation of a tangent vector $v\in T_y\MC$ in the direction $v$ (covariant derivative) and retaining only the components of that fluctuation that are normal to the tangent space.

In the sequel, $\nabla$ will denote the covariant derivative while $\Delta$ will be the Laplace-Beltrami operator.
Further, set for each $x\in \MC$,
\[
    \omega(x)=\frac{1}{\lvert S^{d-1}\rvert } \int_{S^{d-1}} \lVert \mathbb{I}_x(\theta, \theta) \rVert^{2} \diff \theta,
\]
as well as 
\[
    \mathfrak{N}(x)=\frac{1}{\lvert S^{d-1} \rvert} \int_{S^{d-1}}  \mathbb{I}_x(\theta, \theta)  \diff \theta,
\]
where $S^{d-1}$ denotes the $(d-1)$-dimensional unit sphere in $T_x \MC$.

\bgroup

The quantity $\omega(x)$ can be understood as some directionally averaged measure of curvature at $x$, while 
$ \mathfrak{N}(x)$ is an average across all directions of the tangent space  of the second fundamental form evaluated at the pair of unit vectors $(\theta, \theta)\in T_y\MC \times T_y\MC$. The vector  $\mathfrak{N}(x)$ can be the zero vector at saddle points of the manifold if the curvatures cancel in a suitable way, for instance.
\egroup

For the sake of simplicity, let us make the following assumptions. 

\begin{assum}
\label{assum: Rot}
The manifold $\MC$ is correctly shifted and rotated so that $\iota_* T_{x_0}\MC$ is spanned by $e_1, \ldots, e_d$.
\end{assum}

\begin{assum}
\label{assum: Assum2}
The manifold is properly rotated and translated so that $e_{d+1}, \ldots, e_p$ diagonalise the second fundamental form $\mathbb{I}_{x_0}$. 
\end{assum} 
These two conditions are not particularly important, they just help simplify both notation and result statements.
Under these assumptions, we use the notation $\llbracket v_1, v_2\rrbracket$ for the vector $v$ whose $d$ first components are the vector $v_1$ and its $p-d$ last components are the vector $v_2$. The $p\times r$ matrix $\tilde{J}_{p,r}$ is then defined as 
\[
\tilde{J}_{p,r}:= \begin{pmatrix}
0_{p-r\times r} \\ I_{r\times r}
\end{pmatrix}
\]
In what follows, for some $\eta$ sufficiently small, define the fattened manifold in the embedded space by $\mathcal{N}$, i.e., the set $\mathcal{N}:= \{ x\in \reals^p: \inf_{y\in \MC} \lVert x-\iota (y) \rVert \le \eta\}$. 
We recall that the Laplace operator for a function $g$ on a certain embedded smooth manifold $\iota(\MC)$ is defined for $p \in \iota(\MC)$ as
\[
\Delta g( p) : = (\Delta_{\reals^p} g_{\text{ext}}) (p) 
\]
 with $\Delta_{\reals^p}$ the usual Laplace operator in the Euclidean space, $g_{\text{ext}}(x):= g ( \pi_\MC(x))$, $x\in \mathcal{N}$, and $\pi_\MC$ projects $\mathcal{N}$ onto the manifold. For the entire paper, \enquote{$r$ sufficiently small} must be understood as $r$ being smaller than the injectivity radius of the manifold.

\subsection{Nearly optimal potentials}
\label{sec: NearOptPot}

We now turn to the study of the optimal rates for the potentials in the discrete case.
Let us slightly change the setting and consider a sample of size $N+1$ where one point, $x_0\in \MC$, is fixed and the remaining ones are an i.i.d.\ random sample on the manifold. Set $X_0=\iota(x_0)$. We relabel the sampled points so that $\lVert X_0 - X_1 \rVert^2 \le \lVert X_0 - X_2 \rVert^2\le \ldots \le \lVert X_0 - X_N \rVert^2$. 

\begin{remark}[No loss of generality in choosing $X_0$]
\label{remark: FixPoint}
In the results below, the same analysis can be carried out for each point $X_i$. One can thus view our (convenient) choice of working with one distinguished, deterministic point as a conditioning on an arbitrary $X_i$. Still, as the expectations of the quantities for $X_0$ fixed are constants with uniformly decaying terms, the reasoning would apply for each $X_i$ using the tower property of conditional expectation.   
\end{remark}

\begin{lemma}
\label{lem: OptPot}
The optimal potential admits the following behavior.
\[
 u^\star(x) \sim K_{\eps, N}:=\left(  \frac{\vol(\MC) }{\lvert S^{d-1}\rvert} \ \frac{d(d+2)}{2}\right)^{\frac{
2}{d+2}} \eps^{\frac{2}{d+2}}N^{-\frac{4}{(d+2)}} = C_d \: \eps^{\frac{2}{d+2}}N^{-\frac{4}{(d+2)}},
\]
for all $x \in \iota\mathcal{M}$.
\end{lemma}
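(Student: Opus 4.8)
The proof rests on the dual characterisation \eqref{eq: DualOpt}: taking there $\mu$ to be the (normalised) empirical measure of the $N+1$ sample points, the optimal potential is the solution of the system $\frac{1}{\eps}\int_\XC [u^\star(X_i)+u^\star(y)-C(X_i,y)]_+\,\diff\mu(y)=1$, one equation per base point. The plan is to show that, to leading order, $u^\star$ is the \emph{constant} potential, and then to pin down that constant. So I would substitute the ansatz $u^\star\equiv K$ into the single equation at the distinguished point $X_0$, reducing it to the scalar relation $\frac{1}{\eps}\int_\XC[2K-C(X_0,y)]_+\,\diff\mu(y)=1$, and solve for $K$. The whole difficulty is then (a) evaluating the left-hand side asymptotically and (b) justifying that the true $u^\star$ deviates from this constant only by lower-order terms.

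For (a) I would argue in three moves. \emph{Localisation:} the integrand is supported on $\{y:\lVert X_0-\iota(y)\rVert^2\le 4K\}$, and since $K=K_{\eps,N}\to 0$ this is a geodesic ball $B(x_0,\rho)$ with $\rho\asymp\sqrt K$; on such a ball the ambient squared distance and the intrinsic one agree up to a factor $1+\Oh(\rho^2)$, so $C(X_0,y)=\tfrac12 d_g(x_0,y)^2\,(1+\Oh(\rho^2))$. \emph{Concentration:} I would replace the empirical average $\frac{1}{N+1}\sum_j$ by the population average against the uniform density $\vol(\MC)^{-1}$, controlling the error with a Bernstein bound on the random number of sample points landing in the shrinking ball $B(x_0,\rho)$. \emph{Computation:} passing to geodesic normal coordinates and using the volume-form expansion $\sqrt{\det g}=1+\Oh(r^2)$, the leading term is the elementary radial integral
\[
\frac{\lvert S^{d-1}\rvert}{\vol(\MC)}\int_0^{\rho}\left(2K-\tfrac12 r^2\right) r^{d-1}\,\diff r ,
\]
which evaluates to a fixed numerical multiple of $\vol(\MC)^{-1}\lvert S^{d-1}\rvert\,K^{(d+2)/2}$. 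Equating $N/\eps$ times this to $1$ (up to the lower-order corrections) yields a power equation for $K$ whose solution, after collecting all constants into $C_d$, is exactly $K_{\eps,N}=C_d\,\eps^{2/(d+2)}N^{-4/(d+2)}$.

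For (b) I would write $u^\star(X_i)=K(1+\delta_i)$, substitute into the full system and linearise: the curvature corrections enter at relative order $\Oh(K)$, the concentration error at relative order $(NK^{d/2})^{-1/2}$ (the reciprocal square root of the expected number of neighbours in $B(x_0,\rho)$), and the $\delta$-dependent terms at relative order $\max_j\lvert\delta_j\rvert$. A fixed-point argument exploiting strong convexity of the primal objective \eqref{eq: Primal} --- e.g.\ a Banach contraction or Newton--Kantorovich scheme for the normalisation map started at $K\1$ --- then forces $\max_i\lvert\delta_i\rvert=\oh(1)$, provided $\eps=\eps(N)$ is chosen so that $NK_{\eps,N}^{d/2}\to\infty$; this is the ``appropriate function of $N$''. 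Combining (a) and (b) gives $u^\star(X_i)/K_{\eps,N}\to1$ for the distinguished point, and by Remark~\ref{remark: FixPoint} the same holds at every sample point, which is the assertion; plugging $u^\star\approx K_{\eps,N}$ into \eqref{eq: optimalPlan} is then precisely what produces $\overline{W}^{(f)}_\eps$.

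The step I expect to be the main obstacle is the uniform control in (b): bounding $\lVert u^\star-K\1\rVert_\infty$ by something $\oh(K)$ uniformly over the random sample, simultaneously with the joint limit $\eps\to0$, $N\to\infty$. Strong convexity of \eqref{eq: Primal} gives stability of $u^\star$ under perturbations of the constraint data, but the operator that must be inverted --- the linearisation of the map $u\mapsto$ left-hand side of \eqref{eq: DualOpt} --- is itself $\eps$- and $N$-dependent and degenerates as the support of $\pi^\star$ shrinks, so one must track how its conditioning scales with $\rho_{\eps,N}$ and check that the contraction constant stays bounded away from $1$ at that scale; the geometric and stochastic errors from (a) then re-enter merely as the size of the initial residual.
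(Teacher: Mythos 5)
Your route to the constant is genuinely different from the paper's. The paper never localises-then-concentrates: it pushes the empirical squared distances through the approximate quantile map $p\mapsto\bigl(p\,d\,\vol(\MC)/\lvert S^{d-1}\rvert\bigr)^{2/d}$ (obtained from Lemma~\ref{lem: B5} with $f=1$), rewrites the dual constraint as a sum over uniform order statistics $U_{(j:N)}^{2/d}$, and evaluates $\sum_{j\le k}\expec U_{(j:N)}^{2/d}$ exactly via Beta-function moments before matching powers. Your localisation + Bernstein + radial-integral computation is the kernel-density-estimation analogue of the same calculation and buys a cleaner separation between the geometric error (volume expansion) and the stochastic error, at the price of an extra concentration step the paper avoids by working in expectation throughout. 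Your part~(b) — a contraction/Newton--Kantorovich argument bounding $\lVert u^\star-K\ones\rVert_\infty$ — is \emph{not} in the paper at all: the paper only derives the constant ansatz and then checks in Section~\ref{sec: Validity} that the constraint holds in expectation to leading order. So you are proposing to prove strictly more than the paper does; you correctly identify the uniform control as the obstacle, and neither you nor the paper actually closes it.

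There is, however, one concrete error you must fix: the normalisation of the dual constraint. You equate ``$N/\eps$ times'' the \emph{population average} to $1$, which yields $K^{(d+2)/2}\asymp\eps/N$ and hence $K\asymp\eps^{2/(d+2)}N^{-2/(d+2)}$ — not the claimed $N^{-4/(d+2)}$. The correct bookkeeping (see \eqref{eq: ApproxDual} and the scaling Remark following Section~\ref{sec: Validity}) is that the coupling has marginals $\tfrac{1}{N+1}\ones$ while the primal--dual relation is $\pi_{ij}=\eps^{-1}[u_i+u_j-C_{ij}]_+$, so the constraint reads $\tfrac{N+1}{\eps}\sum_j[\cdot]_+=1$, i.e.\ $(N+1)^2\eps^{-1}$ times the population average equals $1$; equivalently, the effective continuum regulariser is $\tilde\eps=\eps N^{-2}$ and $K\asymp\tilde\eps^{2/(2+d)}$. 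Since the $N$-exponent is the substance of the lemma, this factor of $N$ cannot be absorbed into ``collecting constants.'' Relatedly, your integrand $2K-\tfrac12 r^2$ (support $r\le 2\sqrt K$) follows a different convention from the paper's $K-r^2$ (support $r\le\sqrt K$) in Section~\ref{sec: Validity}, which changes the numerical value of $C_d$ by a power of $2$; pick one convention and carry it through, or the constant will not match the one stated in the lemma.
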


These results provide a reasonable \textit{ansatz}, still these are only approximations. We now assess the quality of this first order approximation of the solution by evaluating how the dual constraints are fulfilled when plugging-in the first-order approximation of the solution.

\subsubsection{Validity of the derived finite sample rate}
\label{sec: Validity}
We will use the function
\[
 f(y) = C_d \eps^{2/(d+2)} N^{-4/(d+2)} - \lVert \iota(x_0)- y \rVert^2  =  K_{\varepsilon, N} - \lVert \iota(x_0)- y \rVert^2 
\]
 and apply to the result of Lemma B.5 from \citet{wu2018think} to evaluate the constraints arising from the dual formulation of the problem, recall~\eqref{eq: optimalPlan} and~\eqref{eq: DualOpt}. Doing so,  one gets
\begin{align*}
&\expec \left( \frac{N+1}{\eps} \sum_{j=0}^N \left( K_{\eps,N} - \lVert X_j - \iota(x_0) \rVert^2 \right)_+ \right)  \\
&\qquad= \frac{N+1}{\varepsilon} \left\{ K_{\varepsilon, N} + N \expec \left[ f(X) \1\left\{ \| \iota(x_0) - X \| \leq K_{\varepsilon, N}^{1/2} \right\} \right] \right\} \\
  & \qquad = \frac{N+1}{\varepsilon} K_{\varepsilon, N} + \frac{N(N+1)}{\varepsilon} \frac{|S^{d-1}|}{d \vol(\MC)} K_{\varepsilon, N}^{1 + d/2} \\
    & \qquad\qquad+ \frac{N(N+1)}{\varepsilon} \frac{|S^{d-1}|}{d(d+2)\vol(\MC)} K_{\varepsilon, N}^{1 + d/2} \left[ -d + \frac{s(x_0) K_{\varepsilon, N}}{6} + \frac{d(d+2) \omega(x_0) K_{\varepsilon, N}}{24} \right] \\
    & \qquad\qquad+ \frac{N(N+1)}{\varepsilon} \Oh(K_{\varepsilon, N}^{\frac{5 + d}{2}} ) \\
      & \qquad = \frac{N+1}{\varepsilon} K_{\varepsilon, N} + \frac{N(N+1)}{\varepsilon} \frac{|S^{d-1}|}{ \vol(\MC)} K_{\varepsilon, N}^{1 + d/2} \left(\frac{1}{d} - \frac{1}{d+2} \right) + \Oh\left( \frac{N(N+1)}{\varepsilon} K_{\varepsilon, N}^{2 + d/2}  \right) 
\end{align*}

In the display above, terms have orders $\Oh(N \varepsilon^{-1} K_{\varepsilon, N}) = \Oh(\varepsilon^{-d/(d+2)} N^{(d-2)/(d+2)})$,  $\Oh(N^2 \varepsilon^{-1} K_{\varepsilon, N}^{2 + d/2})$ and we remark that $\Oh(N^2 \varepsilon^{-1} K_{\varepsilon, N}^{1 + d/2}) = \Oh(1)$. 
This latter term is the leading order. Then, to fulfil the constraint, we need that
\[ 
C_d^{\frac{2+d}{2}} \frac{|S^{d-1}|}{\vol(\MC)} \left( \frac{1}{d} - \frac{1}{d+2} \right) = 1, 
\]
so we have verified by this different approach that
\[
C_d=  \left( \frac{\vol(\MC)}{|S^{d-1}|} \frac{d(d+2)}{2}  \right)^{\frac{2}{d+2}},
\]
which matches with the expression above.
One gets that the chosen rates for the potential gives the correct constraint in expectation at the first order. 

 One can rewrite the conditions that $\Oh(\varepsilon^{-d/(d+2)} N^{(d-2)/(d+2)})= \oh(1)$ as 
 $\varepsilon^{-d} N^{(d-2)} \to 0 $ and  the condition $\Oh(N^2 \varepsilon^{-1} K_{\varepsilon, N}^{2 + d/2})= \Oh(K_{\eps,N})= \oh(1)$  as $\varepsilon^{2} N^{-4} \to 0$. Note that the latter condition was already somewhat required to apply  Lemma~B.5 from \citet{wu2018think}. 
Together, these results indicate that the asymptotic scaling on $\varepsilon$ is
 $N^{1 - 2/d} \ll \varepsilon \ll N^2$  for the constraints to be asymptotically fulfilled in expectation. 
 Note that our analysis shows that $\varepsilon$ needs not go to zero asymptotically but needs to increase. Rather, the need is for $\varepsilon$ to be asymptotically sufficiently small relative to $N^2$. See the remark below. 
 
 \begin{remark}
 In the developments above, we considered a classical optimal transport without the hollow constraint. The only difference is the term $\Oh(N \varepsilon^{-1} K_{\varepsilon, N})$, which is asymptotically negligible. 
 \end{remark}

\begin{remark}
\label{remark: Uniformly}
    In the developments above, the result holds uniformly in $x_0$ under quite mild assumptions as, for a closed\footnote{Recall that a manifold is closed if it is compact and without boundary. } and smooth manifold, the different kinds of curvatures appearing in the expansions are bounded, recall Remark~\ref{remark: FixPoint}.
\end{remark}
Thanks to Remark~\ref{remark: Uniformly}, one can derive that all the constraints will asymptotically be fulfilled in expectation when replacing the sum of optimal potentials by $K_{\eps,N}$. 

\begin{remark}
There is a difference in scaling between the discrete and continuous settings in our analysis \textemdash to get empirical input distributions that are consistent with the continuous setting in the limit of large $N$, for samples $X_1, \ldots, X_N$ we take $\hat{\mu} = N^{-1} \sum_i \delta_{X_i}$ as the corresponding empirical distribution. Suppose $\pi$ is an admissible coupling for such a discrete problem. Then $\pi$ is concentrated on the support of $\hat{\mu} \otimes \hat{\mu}$ and admits a density, $(\diff \pi / \diff \hat{\mu} \otimes \diff \hat{\mu})(X_i, X_j) = N^2 \pi_{ij}$. Then, note that the corresponding empirical entropy term would behave like
\begin{align*}
    H(\pi | \hat{\mu} \otimes \hat{\mu}) = \int \diff \pi \log\left( \frac{\diff \pi}{\diff \hat{\mu} \otimes \diff \hat{\mu}} \right) = \sum_{ij} \pi_{ij} \log\left( N^2 \pi_{ij} \right)
\end{align*}
Up to a constant, this is equal to the discrete entropy of $\pi$, i.e., $\sum_{ij} \pi_{ij} \log \pi_{ij}$. Thus, we expect no scaling behaviour between $N$ and the entropic regulariser. 

On the other hand, for the quadratic regulariser, one would have
\begin{align}
\label{eq: Norm}
    \| \pi \|^2_{\hat{\mu} \otimes \hat{\mu}} = \int \frac{\diff \pi}{\diff \hat{\mu} \otimes \diff \hat{\mu}} \diff \pi = \sum_{ij} \pi_{ij} \frac{\pi_{ij}}{N^{-2}} = N^{2} \| \pi \|^2_F. 
\end{align}
Thus, there is the presence of a factor $N^{2}$. This can be understood in that $N$ appears in the density of $\pi$ w.r.t. empirical product measure, which is lost as an additive term in the case of a log, but remains in the quadratic case. Thus, noting that $\varepsilon N^{-2} \| \pi \|_{\hat{\mu} \otimes \hat{\mu}}^2 = \varepsilon \| \pi \|_F^2$, it is apparent that the requirement that the effective epsilon $\tilde{\varepsilon} = \varepsilon N^{-2} \to 0$ in the continuous setting corresponds to $\varepsilon \ll N^2$ in the discrete setting. This is in agreement with the scaling we derived earlier. %

\end{remark}

\subsection{Robustness and potential optimality of the method}
\label{sec: RobandOpt}

A natural question arising is whether the approach we propose can be \emph{proven} to be optimal and if so, in what precise sense or context. Our first points address the robustness of the approach in the case of perturbations of the cost matrix, while the second provides a link to the problem of choosing a kernel in statistical density estimation or regression problems.

\subsubsection{Robustness}
\label{sec: Rob}
Robustness was already hinted at in the main contributions, see Section~\ref{sec: SketchResults}. We thus only provide additional elements and details here below.

\begin{lemma}
  \label{lem: MetProj}
  Consider a cost matrix $C$ and a perturbed version of it, $\tilde C = C + E$, where $E$ is an error matrix resulting from measurement noise. Denote by $\pi_\eps(C)$ the unique optimal transport plan corresponding to $C$ obtained by solving \eqref{eq: optimalPlan}.
  Then, 
  \[
    \lVert W_\eps^{(f)}(C) - W_\eps^{(f)}(C+E) \rVert_F \le \varepsilon^{-1} \lVert E\rVert_F.
  \]
  Further, if $E$ is a rank-one perturbation, i.e., $E= \eta \oplus \eta$, then,
  \[
    W_\eps^{(f)}(C) = W_\eps^{(f)}(C+\eta \oplus \eta).
  \]
\end{lemma}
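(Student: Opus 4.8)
The plan is to prove the two claims of Lemma~\ref{lem: MetProj} separately, using the characterisation of $W_\eps^{(f)}$ as a Frobenius projection onto the convex set $\KCo$.

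\textbf{Lipschitz stability.} First I would exploit the fact, visible from \eqref{eq: QOTProb}, that $W_\eps^{(f)}(C) = \argmin_{A \in \KCo} \| A + \varepsilon^{-1}C \|_F^2$ is precisely the Euclidean (Frobenius) projection of the point $-\varepsilon^{-1}C$ onto the closed convex set $\KCo$. Write $P := \operatorname{Proj}_{\KCo}$ for this projection. Since $\KCo$ is convex, $P$ is firmly nonexpansive, and in particular $1$-Lipschitz in Frobenius norm: $\| P(X) - P(Y) \|_F \le \| X - Y \|_F$. Applying this with $X = -\varepsilon^{-1}C$ and $Y = -\varepsilon^{-1}(C+E)$ gives
\[
\| W_\eps^{(f)}(C) - W_\eps^{(f)}(C+E) \|_F = \| P(-\varepsilon^{-1}C) - P(-\varepsilon^{-1}(C+E)) \|_F \le \varepsilon^{-1}\| E \|_F,
\]
which is the first claim. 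The only point to check is that $\KCo$ is indeed nonempty, closed and convex so that the projection is well defined and single-valued; nonemptiness follows because $\frac{1}{N-1}(\1\1^\top - I) \in \KCo$, and closedness/convexity are immediate from the defining linear equalities and inequalities (this is essentially restated in Section~\ref{sec:bistoch_projections}).

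\textbf{Invariance under rank-one shifts.} For the second claim, I would show that adding $E = \eta \oplus \eta$ (i.e. $E_{ij} = \eta_i + \eta_j$) to $C$ changes the objective in \eqref{eq: QOTProb} only by a term that is constant over the feasible set $\KCo$. Indeed, for any $A \in \KCo$ one has $\langle A, \eta\oplus\eta \rangle_F = \sum_{ij} A_{ij}(\eta_i + \eta_j) = \sum_i \eta_i (A\1)_i + \sum_j \eta_j (A^\top\1)_j = 2\langle \eta, \1\rangle$, using the bistochasticity $A\1 = A^\top\1 = \1$. Hence the linear part of the objective, $\varepsilon^{-1}\langle C + E, A\rangle + \| A \|_F^2$, equals $\varepsilon^{-1}\langle C, A\rangle + \| A \|_F^2 + 2\varepsilon^{-1}\langle \eta, \1\rangle$, and the last term does not depend on $A$. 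Since the minimiser of a strongly convex function is unchanged by adding a constant, the $\argmin$ over $\KCo$ is identical, i.e. $W_\eps^{(f)}(C) = W_\eps^{(f)}(C + \eta\oplus\eta)$. (One sees the same thing on the dual side of \eqref{eq: DualLor}: replacing $C$ by $C + \eta\oplus\eta$ and $u$ by $u - \eta$ leaves the quantity $u\1^\top + \1u^\top - C$ invariant while shifting $\langle u,\1\rangle$ by the constant $-\langle\eta,\1\rangle$, so the optimal plan recovered via \eqref{eq: optimalPlan} is unchanged.)

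\textbf{Main obstacle.} Neither step is genuinely hard; the one subtlety worth being careful about is making sure the "constant on the feasible set" cancellation is stated at the level of the marginal constraints and not the hollowness constraint, since it is bistochasticity alone ($A\1 = A^\top \1 = \1$) that makes $\langle A, \eta\oplus\eta\rangle$ constant — the diagonal constraint plays no role here. I would also remark that this invariance is exactly what justifies the claim made earlier in the text (around \eqref{eq: QOTProb}) that the two rank-one terms in the decomposition $C_{ij} = \tfrac12\|x_i\|^2 + \tfrac12\|x_j\|^2 - \langle x_i, x_j\rangle$ are "immaterial", and it dovetails with Proposition~\ref{prop: RobHetNoise}, since the high-dimensional noise limit \eqref{eq: high dim noise} perturbs $C$ precisely by a rank-one term $\expec\|\eta_i\|^2 \oplus \expec\|\eta_j\|^2$ plus a vanishing remainder controlled by the Lipschitz bound.
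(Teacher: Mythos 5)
Your proof is correct and follows essentially the same route as the paper: the Lipschitz bound via nonexpansiveness of the metric projection onto the nonempty, closed, convex set $\KCo$ is exactly the paper's argument, and your parenthetical dual-potential shift $u \mapsto u - \eta$ is precisely how the paper establishes the rank-one invariance. Your primal-side derivation (that $\langle A, \eta\oplus\eta\rangle = 2\langle\eta,\1\rangle$ is constant on the feasible set by bistochasticity alone) is a slightly more self-contained variant of the same observation, and your remark that the hollowness constraint plays no role there is a worthwhile clarification.
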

  
\begin{proof}
The first statement follows from the fact that the set of hollow bistochastic matrices is a closed and convex set. Computation of $W_\eps^{(f)}(C)$ amounts to a projection of $-C/\varepsilon$ onto the set of hollow bistochastic matrices in Frobenius norm. The set of hollow bistochastic matrix can be rewritten as the intersection of two closed sets and is thus closed. It is a well known result that the metric projection onto a nonempty, closed convex set is contractive. 
For the second point, it is directly seen from the dual formulation \eqref{eq: Dual}, that if $u^*$ is optimal for $C$, then $u^*+\eta $ is optimal for $C+\eta \oplus \eta$.
\end{proof}
In contrast, conventional normalisation approaches such as the frequently used Markov normalisation \citep{van2018recovering, coifman2006diffusion} and symmetric normalisation \citep{belkin2003laplacian} do not in general have the capacity to remove this type of rank-1 noise.

\begin{remark}[Extension to RKHS]
  \label{rmk:rkhs}
    Looking at the form of \eqref{eq: Primal} and noting that $C_{ij} = \| x_i \|^2 / 2 + \| x_j \|^2 / 2 - \langle x_i, x_j \rangle$, we see that the problem is equivalent to projecting the kernel matrix $K_{ij} = \langle x_i, x_j \rangle$ onto the set of bistochastic matrices under the Frobenius norm. Other kernel matrices can be used, depending on what underlying RKHS one aims to use. 
    This choice can be made depending on the model one has in mind. The structure of the noise will play an important role.

    For the Gaussian kernel for instance, it is directly seen that in the same noise model as in Section~\ref{sec: SketchResults}, 
\begin{align*}
\exp\left(-\tfrac1\eps \|\tilde{x}_i -\tilde{x}_j \|^2  \right) 
&= \exp\left(-\tfrac1\eps \|{x}_i -{x}_j \|^2  \right) \exp\left(-\tfrac1\eps \langle x_i -x_j , \eta_i - \eta_j\rangle \right)\exp\left(-\tfrac1\eps \|\eta_i -\eta_j \|^2  \right) 
\\
 &=\exp\left(-\tfrac1\eps \|{x}_i -{x}_j \|^2  \right)\big( 1 + \Oh_p(m^{-1/2})\big)\exp\left(-\tfrac1\eps \|\eta_i -\eta_j \|^2  \right)
\\
 &=\exp\left(-\tfrac1\eps \|{x}_i -{x}_j \|^2  \right)\big( 1 + \Oh_p(m^{-1/2})\big), 
\end{align*}
provided that $\exp\left(-\frac1\eps \|\eta_i -\eta_j \|^2  \right)= 1 + \Oh_p(1)$. This last condition was not required in the case of the square Euclidean matrix. Using a Gaussian kernel induces multiplicative noise. 
Still, as we have just shown, if the one has the condition $\exp\left(-\frac1\eps \|\eta_i -\eta_j \|^2  \right)= 1 + \Oh_p(1)$, the fact that the methods performs a metric projection ensures some robustness.
\end{remark}

\subsubsection{QOT, nonparametric statistics and optimality}
The form of the quadratically regularised optimal transport plan in \eqref{eq: optimalPlan} is very much alike an Epanechnikov kernel, which is very often used in nonparametric statistics with the form 
\[
 u \mapsto \frac{\Gamma(2+d/2)}{\pi^{\frac{d}{2}}}(1 - u^\top u) \1_{\{ u^\top u \le 1\}}.
\]
The Epanechnikov kernel is often claimed to be the optimal non-negative kernel (in terms of asymptotic mean integrated square error) for the estimation of a twice differentiable density (although this statement has been debated \citep[Section~1.2.4]{tsybakov2008introduction}). Thus, the compactness of its support and the fact that the optimal dual potential is a function which is more adaptive to the data than a uniform bandwidth, may explain its favourable performance observed in the numerical examples we consider.

\subsection{Convergence of approximate discrete operator}
\label{sec: ConvOp}
We can now state our main theorem. Note that we consider functions defined on the ambient space $\reals^p$, as opposed to only on $\MC$, since in the manifold learning problem $\MC$ is unknown.  
\begin{theorem} 
\label{thm: MainConv}
Consider $g \in C^2(\reals^p)$, $g: \reals^p \to \reals$. Denote by $L_0$ and $Q_0$ the gradient and Hessian of $g$ at $x_0$, respectively. To simplify notation, set $X_0=x_0$. Take an i.i.d. sample $\{X_j\}_{j=1}^N$ from the uniform distribution on $\MC$ after embedding in $\reals^p$. Then, under Assumptions~\ref{assum: Rot}~and~\ref{assum: Assum2}, defining 
 \[
 \Delta^{OT} g(X_0) := \sum_{j=0}^N \overline{W}_{0,j}^{(f)} \big( g(X_0) -g(X_j)\big), 
 \]
 with $\overline{W}^{(f)}$  the approximate solution of the quadratically regularised OT problem $W_\eps^{(f)}(C)$ as introduced in Section~\ref{sec: SketchResults}, it holds that 
 \[
- 2 (N+1) K_{\eps, N}^{-1}  \  \Delta^{OT} g(X_0) \xrightarrow{L^2}  d L_0 \left\llbracket  0 ,
 \frac{\tilde{J}_{p,p-d}^\top \mathfrak{N}(x_0)}{2}
\right\rrbracket  +\frac12\tr \left[Q_0 \begin{pmatrix} 
 I_{d\times d} & 0 \\ 0 & 0
 \end{pmatrix} \right],
 \]
 provided that $K_{\eps, N} \to 0$ and $N/\eps \to 0$ when $N\to \infty$.
\end{theorem}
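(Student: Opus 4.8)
The plan is to decompose the target into a sum of i.i.d.\ contributions and argue via first and second moments. Since $X_0=x_0$ is deterministic and $X_1,\dots,X_N$ are i.i.d.\ uniform on $\iota(\MC)$, the summand $j=0$ of $\Delta^{OT}g(X_0)$ vanishes identically (it equals $\overline{W}_{0,0}^{(f)}\cdot 0$), so $\Delta^{OT}g(X_0)=\sum_{j=1}^N Z_j$ with $Z_j:=\overline{W}_{0,j}^{(f)}\bigl(g(X_0)-g(X_j)\bigr)$ i.i.d. Writing $S_N:=-2(N+1)K_{\eps,N}^{-1}\Delta^{OT}g(X_0)$ we get $\expec[S_N]=-2(N+1)N K_{\eps,N}^{-1}\expec[Z_1]$ and $\var[S_N]=4(N+1)^2K_{\eps,N}^{-2}N\var[Z_1]$, so it suffices to show (i) $\expec[S_N]$ converges to the stated limit and (ii) $\var[S_N]\to0$; together these give the asserted $L^2$ convergence. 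By Remark~\ref{remark: FixPoint} the argument transfers, via conditioning, to any sample point.

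For the mean I would localise. Because $K_{\eps,N}\to0$, the weight $\overline{W}_{0,1}^{(f)}=\tfrac1\eps[\,2K_{\eps,N}-\lVert X_0-X_1\rVert^2\,]_+$ is supported on the shrinking set $\{\lVert X_0-X_1\rVert^2\le 2K_{\eps,N}\}$. Passing to normal coordinates $v\in T_{x_0}\MC\cong\reals^d$ via $X_1=\iota(\exp_{x_0}v)$, I would use the second-order expansion of the embedding $\iota(\exp_{x_0}v)-\iota(x_0)=\iota_*v+\tfrac12\mathbb{I}_{x_0}(v,v)+\Oh(\lVert v\rVert^3)$, the fact that the density of $X_1$ with respect to Lebesgue measure $\diff v$ on $T_{x_0}\MC$ equals $\vol(\MC)^{-1}(1+\Oh(\lVert v\rVert^2))$, Assumptions~\ref{assum: Rot}--\ref{assum: Assum2} to align axes, and the $C^2$ Taylor expansion of $g$ at $X_0$. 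This gives $\lVert X_0-X_1\rVert^2=\lVert v\rVert^2+\Oh(\lVert v\rVert^4)$ and
\[
g(X_0)-g(X_1)=-L_0^\top\iota_*v-\tfrac12 L_0^\top\mathbb{I}_{x_0}(v,v)-\tfrac12 v^\top Q_0^{(1,1)}v+\oh(\lVert v\rVert^2),
\]
where $Q_0^{(1,1)}$ is the top-left $d\times d$ block of $Q_0$. Rescaling $v=K_{\eps,N}^{1/2}u$, the $v$-linear term is odd and integrates to zero against the even, radially symmetric rescaled weight $[2-\lVert u\rVert^2]_+$, so the surviving contribution to $\expec[Z_1]$ is of order $K_{\eps,N}^{(d+4)/2}\eps^{-1}$ times spherical integrals, evaluated with $\int_{S^{d-1}}\mathbb{I}_{x_0}(\theta,\theta)\,\diff\theta=\lvert S^{d-1}\rvert\,\mathfrak{N}(x_0)$ and $\int_{S^{d-1}}\theta\theta^\top\,\diff\theta=\lvert S^{d-1}\rvert\,d^{-1}I_d$. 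Using $K_{\eps,N}^{(d+2)/2}=C_d^{(d+2)/2}\eps N^{-2}$, the normalisation $-2(N+1)N K_{\eps,N}^{-1}$ cancels all $N$- and $\eps$-dependence, and — with $C_d$ fixed exactly so that the dual constraint is met, as verified in Section~\ref{sec: Validity} — the prefactors collapse to leave
\[
dL_0\left\llbracket 0,\tfrac{\tilde J_{p,p-d}^\top\mathfrak{N}(x_0)}{2}\right\rrbracket+\tfrac12\tr\!\left[Q_0\begin{pmatrix}I_{d\times d}&0\\0&0\end{pmatrix}\right].
\]
The $\oh(\lVert v\rVert^2)$ Taylor remainder of $g$ and the curvature corrections to the weight and the volume element produce terms of strictly smaller order in $K_{\eps,N}$, which vanish as $K_{\eps,N}\to0$.

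For the variance, bound $\var[Z_1]\le\expec[Z_1^2]$ crudely: on the support of $Z_1$ one has $\overline{W}_{0,1}^{(f)}=\Oh(K_{\eps,N}\eps^{-1})$ and $(g(X_0)-g(X_1))^2=\Oh(\lVert v\rVert^2)=\Oh(K_{\eps,N})$, while the uniform law assigns mass $\Oh(K_{\eps,N}^{d/2})$ to that support; hence $\expec[Z_1^2]=\Oh(K_{\eps,N}^{(d+6)/2}\eps^{-2})$ and
\[
\var[S_N]=\Oh\!\bigl(N^3K_{\eps,N}^{-2}K_{\eps,N}^{(d+6)/2}\eps^{-2}\bigr)=\Oh\!\bigl(N^3K_{\eps,N}^{(d+2)/2}\eps^{-2}\bigr)=\Oh\!\bigl(N^3\cdot\eps N^{-2}\cdot\eps^{-2}\bigr)=\Oh(N/\eps)\longrightarrow0
\]
exactly under the hypothesis $N/\eps\to0$. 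Combined with (i) this yields $S_N\to$ the stated limit in $L^2$.

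The main obstacle is the geometric bookkeeping in the localisation step: extracting the correct second-fundamental-form and Ricci contributions in the expansions of $\lVert X_0-X_1\rVert^2$, of $g(X_0)-g(X_1)$, and of the Riemannian volume element (this is where a Lemma~B.5-type estimate à la \citet{wu2018think} enters, as in Section~\ref{sec: Validity}), and — more delicately — controlling these expansions uniformly over the shrinking neighbourhood so that the Taylor remainder of $g$, which for $g\in C^2$ is only $\oh(\lVert v\rVert^2)$ rather than $\Oh(\lVert v\rVert^3)$, stays negligible after multiplication by the large factor $N(N+1)K_{\eps,N}^{-1}$; this is fine because that remainder contributes $\oh(K_{\eps,N}^{(d+4)/2}\eps^{-1})$ whereas the principal term has exact order $K_{\eps,N}^{(d+4)/2}\eps^{-1}$. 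Once the moment expansions are in place, the variance bound and the passage to the $L^2$ limit are routine.
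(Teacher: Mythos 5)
Your proposal is correct and follows essentially the same route as the paper: Taylor-expand $g$ at $x_0$, evaluate the expectation of the linear and quadratic terms via local geometric expansions of Wu--Wu Lemma~B.5 type (which is exactly what the paper's Lemmas~\ref{lem: B5}--\ref{lem: FourthOrd} encode, including the vanishing of the tangential first-order term and the survival of the $\mathfrak{N}(x_0)$ contribution), use the defining relation $C_d^{(d+2)/2}\,\lvert S^{d-1}\rvert\,(d(d+2)\vol(\MC))^{-1}=\tfrac12$ to collapse the prefactors, and kill the variance. Your cruder bound $\var[Z_1]\le\expec[Z_1^2]$ yields the same $\var[S_N]=\Oh(N/\eps)$ rate as the paper's more explicit variance computation, so nothing is lost.
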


\subsection{Infinitesimal generator limit and spectral convergence}
A relatively general analysis of the convergence of graph Laplacians was carried out by \citet{ting2011analysis}, wherein consistency results are established for a general class of constructions leveraging connections to diffusion processes. We remark that when $\mathcal{M}$ is endowed with a uniform measure, a constant approximation of the potential is valid and so the operator resulting from quadratically regularised optimal transport falls under their framework \citep[Theorem 3]{ting2011analysis}. The assumptions are compatible with the ones that we make here, namely that $\mathcal{M}$ is a smooth, compact manifold, and the authors consider a general kernel of the form $K_N(x, y) = w_x^{(N)}(y) K_0\left( \frac{\| y - x \|}{h_N r^{(N)}_x(y)} \right)$.

In our setting where i.i.d. samples are drawn uniformly on $\mathcal{M}$, we invoke a constant potential approximation $u \sim \varepsilon^{\frac{2}{2 + d}} N^{\frac{-4}{2 + d}}$, we have (up to a multiplicative constant)
\begin{align*}
    K_N(x, y) &= \left[ \varepsilon^{\frac{2}{2 + d}} N^{\frac{-4}{d + 2}} - \| y - x \|^2 \right]_+ = \left[ 1 - \left( \dfrac{\| y - x \|}{ \varepsilon^{\frac{1}{2+d}} N^{\frac{-2}{d + 2}} }\right)^2\right]_+ = \varphi\left( \frac{\|y - x \|}{h^{(N)}}\right)
\end{align*}
Where the choice of kernel is the Epanechnikov kernel $\varphi(r) = (1 - r^2)_+$. The condition under which their theorem holds  is that $N h^{m+2}/\log N \to \infty$. In our case, this simplifies to $\varepsilon/(N \log N) \to \infty$, and this is compatible with the range of scalings $N^{1 - 2/d} \ll \varepsilon \ll N^2$ from our previous analysis.

\subsection{Similarity with the porous medium equation}
\label{sec:PME}
Consider again the continuous problem in equation~\eqref{eq: Primal}. Following \citet{lorenz2021quadratically} we write $\pi \in L^2(\MC)$ to be the density of a candidate transport plan w.r.t. product measure on $\MC \times \MC$, i.e., $\int \pi(x, y) \diff x \diff y = 1$. Then, the optimal transport plan $\pi^\star$ in the quadratically regularised problem must satisfy 
\[
    \frac{1}{\vM} \int_\MC \pi^\star(\iota(x_0), y) \diff y = \expec \left[ \pi^\star(\iota(x_0), X) \right] = \frac{1}{\vM^2}.
\]
Taking the relation $\pi^\star = \varepsilon^{-1} [u^\star \oplus u^\star - C]_+$ where $u^\star \in L^2(\MC)$ is the corresponding optimal dual potential, making the ansatz that $u^\star \sim \eps^\alpha$ and invoking \eqref{eq: B5} we have, for $\varepsilon \ll 1$,
\begin{align*}
\expec\left[ u^\star(\iota(x_0)) + u^\star(X) - \frac{1}{2} \| \iota(x_0) - X\|_2^2 \right]_+ &\sim \expec\left[ \left( \eps^\alpha - \lVert X - \iota(x_0)\rVert^2 \right) \1\{ \lVert X - \iota(x_0)\rVert\le \eps^{\alpha/2} \}  \right] \\
&\sim \frac{\lvert S^{d-1}\rvert}{d} \eps^{\tfrac{\alpha(d+1)}{2}}  + \Oh\left( \eps^{\tfrac{\alpha(d+2)}{2}}\right),
\end{align*}
(in the above multiplicative constants were dropped). The above quantity must behave asymptotically like $\varepsilon$ at leading order, and so matching exponents gives us
\[
\alpha+ \frac{d\alpha}{2} = 1 \Leftrightarrow \alpha = \frac{2}{2+d}.
\]
In the continuous case, the optimal potentials must thus behave in the first order like $\eps^\frac{2}{2+d}$.

There is a belief in the community that there should be some link between quadratically regularised optimal transport and a class of nonlinear partial differential equations known as the porous medium equation on $\reals^d$ for index $m = 2$ (see e.g., \citet{lavenant2018dynamical}), i.e., the equation
\[
\frac{\partial u}{\partial t }= \Delta(u^m), 
\]
where $u=u(x,t)$ and with an initial condition on $u$ at time $t=0.$ Starting from a Dirac mass of integral $\mathfrak{m}$ at the origin, the solution of the porous medium equation for $m =2$ is given by the Barenblatt-Prattle formula (\cite{vazquez2007porous}):
\[
u(x,t) = \max\left\{0, t^{-\frac{d}{2+d }}\left(\mathfrak{m} - \frac{ 1 }{4 (d+2)}\frac{ \lVert x\rVert^2 }{t^{\frac{2}{2+d}}}  \right)  \right\}. 
\]
or, rewriting terms, 
\[
u(x,t) = \max\left\{0, t^{-1}
\left(\mathfrak{m}\ t^{\frac{2}{2+d}}  - \frac{ 1 }{4 (d+2)} \lVert x\rVert^2   \right)  \right\}. 
\]
As the porous medium equation conserves mass, the integral of $u(x,t)$ over $\reals^d$ remains $\mathfrak{m}$. A key property of the porous medium equation which distinguishes it from the (linear) heat equation is that the solution remains compactly supported. As is evident from e.g., \eqref{eq: Dual}, this same property applies to the transport plans derived from quadratically regularised optimal transport. 

Perhaps closer to the theory of optimal transport, the porous medium equation of index $m$ can also be understood as the 2-Wasserstein gradient flow of the Tsallis entropy of order $m$ (see for example, the discussion in \citet{peyre2015entropic}). The Tsallis entropy generalises the Gibbs entropy: for $m = 1$, one recovers the Gibbs entropy, while for $m = 2$ it is corresponds to the squared $L_2$ norm of the density. It is remarkable that the squared $L_2$ norm is the functional that generates the porous medium equation as Wasserstein gradient flow, which is also the regularising functional used in quadratically regularised optimal transport exhibiting analogous sparsity and scaling behaviour. Furthermore in the entropy regularised setting where $m = 1$, optimal transport enjoys the celebrated connection to a theory of large deviations for Brownian motions and the Schr\"odinger problem (\cite{leonard2013survey}). One interesting theoretical question would be whether similar connections could hold in more general cases, e.g., $m > 1$. 

Although our work does not formally establish the existence of such a connection, it is interesting that the same types of exponents appear and that the solution of the porous medium equation is so close in form to the solution of the quadratically regularised optimal transport problem. Since the first version of this work, the understanding of the link between quadratically optimal transport and the porous medium equation has been improved, see \cite{garriz2024infinitesimal}.

\section{Results}
\label{sec: Perf}

\subsection{Application: manifold learning with heteroskedastic noise}\label{sec:ManifoldLearning}

We investigate application of bistochastic projections in Frobenius norm of the linear kernel (which from now on we refer to as the \emph{bistochastic L2} or \emph{QOT} kernel) in the setting of a one-dimensional manifold embedded in a high-dimensional ambient space, subjected to noise of varying intensity: this is the setting discussed in Section \ref{sec: RobandOpt}. Such a setting may occur for instance in single-cell RNA-sequencing data, where the underlying system may exhibit a continuous manifold structure \citep{moon2018manifold} and the noise level of cellular gene expression profiles may be determined by factors such as the number of reads sequenced for that cell \citep{landa2021doubly}.

We generate synthetic data shown in Figure \ref{fig:figure_spiral} comprised of $N = 10^3$ points, $\{ x_i \}_{i = 1}^N$ evenly spaced along the closed curve parameterised by 
\[
    \begin{bmatrix} x(t) \\ y(t) \\ z(t) \end{bmatrix} = \begin{bmatrix}\cos(t) (0.5\cos(6t)+1) \\ \sin(t) (0.4\cos(6t)+1) \\ 0.4\sin(6t)\end{bmatrix}, \quad t \in [0, 2\pi].
\]
We then randomly construct an orthogonal matrix $R \in \mathbb{R}^{d \times 3}$ for a chosen target dimension $d \gg 1$, i.e., $R^\top R = I$. Each point $x_i \in \mathbb{R}^3$ is then linearly embedded in the $d$-dimensional ambient space with additive noise via $\hat{x}_i = R x_i + \eta_i$: the noise model we use is 
\[
    \eta_i = \frac{Z_i}{\| Z_i \|} \rho(\theta), \quad 1 \leq i \leq N; \quad \rho(\theta) = 0.05 + 0.95\dfrac{1 + \cos(6\theta)}{2}, 
\]
where $Z_i$ follows the standard Gaussian distribution in $\mathbb{R}^{d}$. In order to eliminate the effect of scaling on the the spread parameter $\varepsilon$, we normalise our input data so that $N^{-2} \sum_{i, j = 1}^N \| \hat{x}_i - \hat{x}_j \|^2 = 1$. Given the noisy $d$-dimensional dataset $\{ \hat{x}_i \}_{i = 1}^N$, we construct affinity matrices using a suite of methods, and across a range of parameter values. For bistochastic constructions, we consider Frobenius projection of the linear kernel (QOT) and information projection of the Gaussian kernel (EOT), as well as Frobenius projection of a Gaussian kernel, which is the one used in \citep{zass2006doubly}. We also consider Epanechnikov and Gaussian kernels, which are related to the Frobenius and entropic bistochastic projections respectively, but do not satisfy the bistochastic constraint. Finally we also consider a standard $k$-NN construction, as well as the more sophisticated MAGIC method \citep{van2018recovering} which relies on $k$-NN graph construction as a first step. A summary of parameter choices is provided in Table \ref{tab:spiral_methods_parameters}. 

\begin{figure}[h!]
    \centering
    \includegraphics[width = \linewidth]{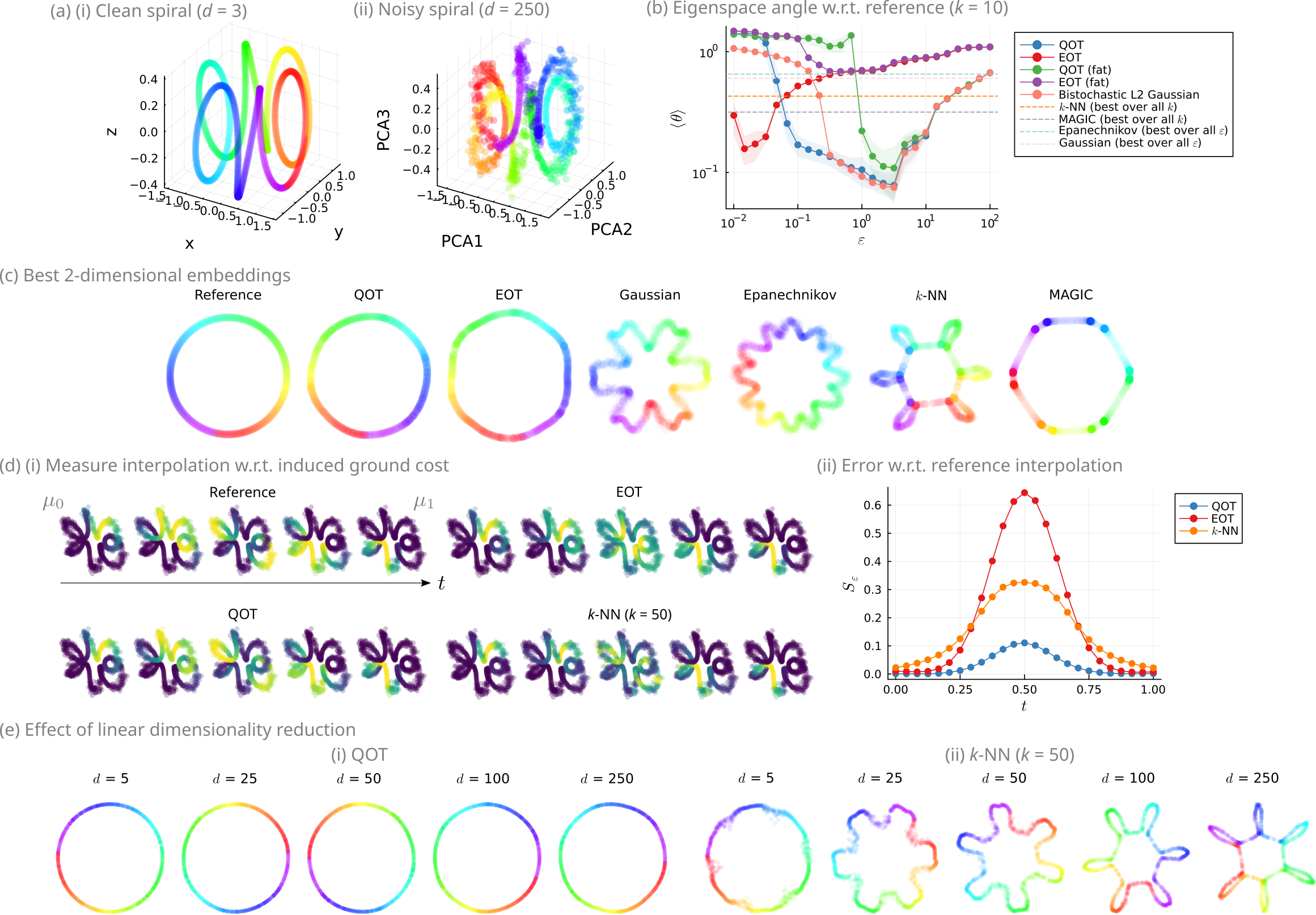}
    \caption{\textbf{Spiral in high dimension with non-uniform noise.} (a) (i) Clean points ($N = 1000$) sampled evenly from a closed spiral in 3 dimensions. (ii) Points embedded in $d = 250$ dimensions, subjected to non-uniform noise in the ambient high-dimensional space. (b) Laplacian eigenspace angles between a ground truth reference Laplacian and Laplacians obtained using various choices of affinity matrix construction and bandwidth parameter ($\varepsilon$) choices. (c) Best 2-dimensional Laplacian eigenfunction embeddings found for each affinity matrix construction. (d) (i) Interpolation between two measures $(\mu_0, \mu_1)$ supported on the spiral, computed as Sinkhorn barycenters using a ground cost induced by different affinity matrix constructions. (ii) Error (measured in terms of the Sinkhorn divergence between distributions) of measure interpolations with respect to a ground truth interpolation. (e) For fixed parameters, the spectral embedding obtained using the bistochastic Frobenius projection (QOT) and $k$-nearest neighbours ($k$-NN) with the top $d$ PCA coordinates as $d$ ranges from $5$ to $250$.}
    \label{fig:figure_spiral}
\end{figure}

In Figure \ref{fig:figure_spiral}(a), we show first in (i) the clean, low-dimensional data and in (ii) the noisy, high-dimensional data in $d = 250$ projected onto its first three principal components. We next investigate the spectral properties of different affinity matrix constructions from the noisy high-dimensional data. For an affinity matrix $W$ computed by one of the methods we consider, we first symmetrically normalise $W$ to form $\overline{W}$:
{
\begin{align*}
    \overline{W} = D^{-1/2} W D^{-1/2}, \quad D = \diag(W \1).
\end{align*}
}
The spectral decomposition of the corresponding (weighted, symmetrically normalised) Laplacian $\overline{L} = I - \overline{W}$ is then computed, yielding eigenvalues $\lambda_N \geq \lambda_{N-1} \geq \ldots \geq \lambda_{1} = 0$ and corresponding eigenvectors $v_N, \ldots, v_1$. The leading eigenvector $\lambda_1$ is trivial and so the $\ell$-dimensional \emph{eigenmap embedding} for $\ell = 1, \ldots, N-1$ is then the mapping 
\begin{align*}
    x_i \mapsto (v_{2}(x_i), v_{3}(x_i), \ldots, v_{\ell + 1}(x_i)).
\end{align*}
We construct a \emph{reference} graph from the clean 3-dimensional data $\{x_i\}_{i = 1}^N$ using $k$-nearest neighbours with $k = 3$ and compute from its Laplacian a set of reference eigenvectors $\{ v_i^\mathrm{ref} \}_{i = 2}^N$. Since the clean data are evenly spaced, this produces an embedding into a perfect circle in the first two non-trivial components $v_2^\mathrm{ref}, v_3^\mathrm{ref}$.

We reason that affinity matrix constructions that are best able to extract latent structure from the noisy, high-dimensional data should have leading Laplacian eigenspaces that are \emph{closer} to those of the reference Laplacian. To quantitatively measure agreement of eigenspaces, we compute the principal angle \citep{knyazev2002principal} between the eigenspaces spanned by the $k = 10$ leading eigenvectors: i.e., between $\operatorname{span}\{v_1, \ldots, v_{k}\}$ and $\operatorname{span}\{ v_1^\mathrm{ref}, \ldots, v_{k}^\mathrm{ref} \}$. Figure \ref{fig:figure_spiral}(b) summarises these results: it is clear that the graph constructions obtained using the Epanechnikov and Gaussian kernels perform quite poorly overall -- there are no parameter settings for which these methods perform well. This effect can be seen also in Figure \ref{fig:figure_spiral}(c), where the best eigenvector embeddings for Epanechnikov and Gaussian kernels exhibit clear high-frequency oscillations that arise from the effect of non-uniform noise. Methods based on $k$-NN graph constructions do better on the other hand, with MAGIC outperforming simple $k$-NN by some margin. The key distinction between these methods is that MAGIC has the capability of adaptively varying the effective bandwidth across regions of the dataset \cite{van2018recovering}, while $k$-NN uses a fixed unit edge weight. In Figure \ref{fig:figure_spiral}(c), artifacts reflecting the impact of noise are clearly visible for both methods, with $k$-NN more severely affected.

We find that bistochastic projections with respect to the Frobenius norm (QOT) and the relative entropy (EOT) significantly outperform all other affinity matrix constructions, in terms of eigenspace angle, and this can be visually confirmed in Figure \ref{fig:figure_spiral}(c) in which a near-perfect circular embedding is recovered. This agrees with the observations of Section \ref{sec: RobandOpt} and \citet{landa2021doubly} --  the process of bistochastic projection is able to remove the effect of the heteroskedastic noise when the dimensionality is sufficiently high. We find further that QOT improves upon the EOT in several respects: for a wide range of parameter values, the embedding obtained is closer in terms of the eigenspaces. More importantly, QOT yields good eigenspace agreement for a large range of $\varepsilon$ values, spanning over two orders of magnitude. On the other hand, EOT is much more sensitive to its parameter, yielding good results only within a narrow range of $\varepsilon$ values. We remark that since we plot the parameter $\varepsilon$ on a log-scale, our conclusions hold regardless of scale differences in $\varepsilon$ between methods.

To further illustrate the difference between the choice of bistochastic projection, we consider the problem of interpolating between two distributions $\mu_0, \mu_1 \in \PC(\XC)$ supported on the noisy spiral. For each affinity matrix construction, we consider the best parameter value found previously in terms of eigenspace angle. From an affinity matrix $\overline{W}$ we form the Laplacian $L = I - \overline{W}$ and its corresponding heat kernel $K = e^{-tL}$. It is not straightforward to directly compare different affinity matrix constructions since there is a priori no one-to-one correspondence between parameter choices, and so we choose $t$ for each affinity matrix construction so that the resulting heat kernel $K$ has a fixed mean \emph{perplexity} of 250. The perplexity of a probability distribution is
  \begin{equation}
    \operatorname{Perp}(p) = e^{-\sum_i p_i \log p_i}, \label{eq:perplexity}
  \end{equation}
  and can be understood as a measure of the effective neighbourhood size \citep{van2024snekhorn, van2008visualizing}. Indeed, for a uniform distribution $p$ supported on $k$ atoms one has $\operatorname{Perp}(p) = k$. A family of interpolating distributions is produced by solving a Sinkhorn barycenter problem \citep{peyre2019computational} between $(\mu_0, \mu_1)$ with varying weights $(1-s, s), 0 \leq s \leq 1$ using $K$ as the reference heat kernel \citep{solomon2015convolutional}. We show in Figure \ref{fig:figure_spiral}(d)(i) the interpolated distributions for the reference Laplacian (corresponding to the ground truth, noise-free data), both Frobenius (QOT) and information (EOT) bistochastic projections, as well as $k$-NN. We find that the interpolated distributions using the EOT-derived heat kernel shows evidence of ``short-circuiting'': this is likely due to the dense nature of the underlying weighted graph, allowing for connections between parts of the manifold that are far apart. On the other hand, the QOT-derived heat kernel yields interpolations that respect the underlying geometry and better resemble the reference interpolation. We quantify this in Figure \ref{fig:figure_spiral}(d)(ii) in terms of the Sinkhorn divergence \citep{feydy2019interpolating} between the interpolant and the ground truth as the interpolation coordinate $s$ varies. We find that using the QOT-derived heat kernel yields a much lower error compared to the other methods, providing further evidence that bistochastic projections in Frobenius norm are better suited to downstream applications by virtue of their sparsity.

In Figure \ref{fig:figure_spiral}(e) we consider the effect of linear dimensionality reduction via PCA, a pre-processing step that is standard when dealing with data of even moderately high dimension \citep{van2018recovering, schiebinger2019optimal, zass2006doubly, gorin2022rna}. We calculate PCA projections of the noisy data ranging from $d = 5$ to $d = 250$ (i.e., ambient) dimensions. For the optimal choice of parameter found previously, we show the Laplacian eigenvector embedding obtained from the QOT projection as the number of principal components varies. We do the same using $k$-NN for $k = 50$, and find that $k$-NN performance progressively degrades as the number of principal components is increased, and even when $d = 5$ the embedding exhibits irregularities. On the other hand, for a single fixed value of $\varepsilon$ the QOT projection achieves excellent performance across the full range of dimensionality reduction. This supports the results of Section \ref{sec: Rob}, that bistochastic kernel projections are robust to noise in high-dimensional settings. In this simple case PCA does not affect the results significantly, however in the case of an additive heteroskedastic noise model we note that preprocessing by PCA could impede the robustness as it modifies the contamination of pairwise distances.

Finally, we remark on the relevance of taking the bistochastic projection of the \emph{linear} kernel, and the necessity of the \emph{hollow} constraint. As mentioned in Section \ref{sec:generalized_projections}, the Gaussian kernel is the natural one (i.e., the one that corresponds to a local Euclidean geometry) for taking information projections, while the linear kernel is the natural one for Frobenius projections. In the context of the computational example we consider here, in Figure \ref{fig:figure_spiral}(b) we show the performance achieved for Frobenius projection of the linear kernel $-C/\varepsilon$ (QOT) and the Gaussian kernel $\exp(-C/\varepsilon)$ (Bistochastic L2 Gaussian), which we remind corresponds to first mapping into a RKHS. We observe that projecting the Gaussian kernel results in \emph{worse} performance than the linear kernel for $\varepsilon$ small, and for $\varepsilon$ large they begin to coincide, reflecting the fact that $\exp(-C/\varepsilon) \approx 1 - C/\varepsilon$ for $\varepsilon \gg 1$. For the hollowness constraint, comparing to the ``fattened'' case where a projection is sought among all symmetric bistochastic matrices rather than hollow ones, we find that the performance for both QOT and EOT projections is significantly worsened.

\subsection{Application: spectral clustering}

We consider spectral clustering \citep{von2007tutorial}, a problem which is closely related to -- but distinct from -- manifold learning. This problem was studied by \cite{zass2006doubly}, in which the setting considered was that of finding the bistochastic projection in Frobenius norm of an \emph{arbitrary} affinity matrix. On the other hand, our insight is that projection of the linear kernel (or equivalently, minus the matrix of pairwise distances) corresponds to the natural Euclidean space, and projecting a different kernel (e.g., the Gaussian kernel, as in \citep{zass2006doubly}) implicitly assumes a mapping first to a RKHS before taking the projection. Furthermore, as observed by \cite{landa2021doubly} it is important to project onto the set of \emph{hollow} bistochastic matrices in order to ensure the robustness property discussed in Section \ref{sec: RobandOpt}.  

We consider a mixture of $k = 3$ Gaussians, $\NC(\mu_i, \sigma_i), 1 \leq i \leq 3$ in dimension $d$ with $500$ points sampled from each mixture component. We choose 
\[
    \mu_1 = \begin{bmatrix} 0 \\ 1 \\ \0^{d-2} \end{bmatrix}, \:
    \mu_2 = \begin{bmatrix} \sin(2\pi/3) \\ \cos(2\pi/3) \\ \0^{d-2} \end{bmatrix}, \:
    \mu_3 = \begin{bmatrix} \sin(-2\pi/3) \\ \cos(-2\pi/3) \\ \0^{d-2} \end{bmatrix}.
\]
and $\sigma_1 = 0.3, \sigma_2 = 0.6, \sigma_3 = 1.0$. In Figure \ref{fig:gmm}(a) we show sampled points for $d = 50$ in PCA coordinates, coloured by the true labels. From these, we construct affinity matrices using QOT, EOT as well as $k$-NN. For QOT, we choose $\varepsilon = 1.0 \times \overline{C}$ where $\overline{C} = N^{-2} \sum_{ij} C_{ij}$ is the \emph{mean} of the squared Euclidean cost matrix. We recommend this heuristic as a general default, and we find it works well in benchmarking results. To ensure a fair comparison, the regularisation strength $\varepsilon$ for EOT is tuned until the mean perplexity matches that for QOT. The resulting perplexity distribution and affinity matrices are shown in Figure \ref{fig:gmm}(a)(ii-iii). Interestingly, while both methods produce a distribution over effective neighbourhood sizes, the EOT affinities show much more extreme variability than the QOT affinities. This is evident also from visual inspection of the EOT affinity matrix, where points within cluster 1 are much more tightly clustered than the others. 
The basis of spectral clustering is that cluster structure in data are reflected in the leading eigenvectors of the Laplacian matrix. To this end, we inspect the leading eigenvectors of Laplacians constructed using the QOT and EOT affinities in Figure \ref{fig:gmm}(b)(i-ii). It is immediately apparent that the first two non-trivial eigenvectors for the QOT affinities distinguish cleanly between clusters. On the other hand, for the EOT affinities effectively consist of only noise. 
Additionally, we contrast in Figure \ref{fig:gmm_supp_fig} the Laplacian spectra of QOT and EOT kernel matrices for $d = 10, 50, 250$. This highlights a stark difference between the two -- only the QOT projection exhibits clear gaps between the leading Laplacian eigenvalues, which is well known to be associated to presence of cluster structure \citep{von2007tutorial}.

For a systematic and unbiased assessment of performance, we benchmark QOT, EOT and $k$-NN constructions for $d = 10, 50, 250$ and varying hyperparameter values. We measure spectral clustering accuracy in terms of the normalised mutual information (NMI). Additionally, we propose a more direct measure for preservation of cluster information in the Laplacian spectrum. Introduce a \emph{template} basis encoded by $U \in \mathbb{R}^{N \times 3}$ where $U_{ij} = \ones_{y_i = j}$, where $y_i \in \{ 1, 2, 3\}$ is the ground truth cluster label of point $x_i$. In other words, $U$ corresponds to the Laplacian eigenbasis for an ``ideal'' affinity matrix with $k = 3$ disjoint components, each of which is a complete subgraph corresponding to a ground truth cluster. For a given affinity matrix constructed from data, we calculate the subspace angles between $U$ and the subspace spanned by the leading eigenvectors of the Laplacian. While in theory the leading $k$ eigenvectors should be sufficient to distinguish $k$ clusters, in practice more eigenvectors are often needed. Therefore we take the leading $2k = 6$ Laplacian eigenvectors. In Figure \ref{fig:gmm}(c) we find that the performance of $k$-NN rapidly worsens as the dimensionality $d$ increases, compared to QOT or EOT. On the other hand, the performance of QOT remains stable and outperforms EOT by a widening margin as $d$ increases. Our observations hold for both spectral clustering performance as measured by NMI as well as in terms of the subspace angle. Finally, in Figure \ref{fig:gmm}(d) we show performance with PCA dimensionality reduction as a preprocessing step, where the parameter value is chosen to be the optimal value found for $d = 250$. Similarly to the spiral example in Figure \ref{fig:figure_spiral}(e), we find that the performance of QOT remains virtually unchanged across the full range of $d_{PCA}$, while EOT appears to be unstable and $k$-NN rapidly deteriorates even for moderately large values of $d_{PCA}$.

\begin{figure}[h!]
  \centering
    \includegraphics[width = \linewidth]{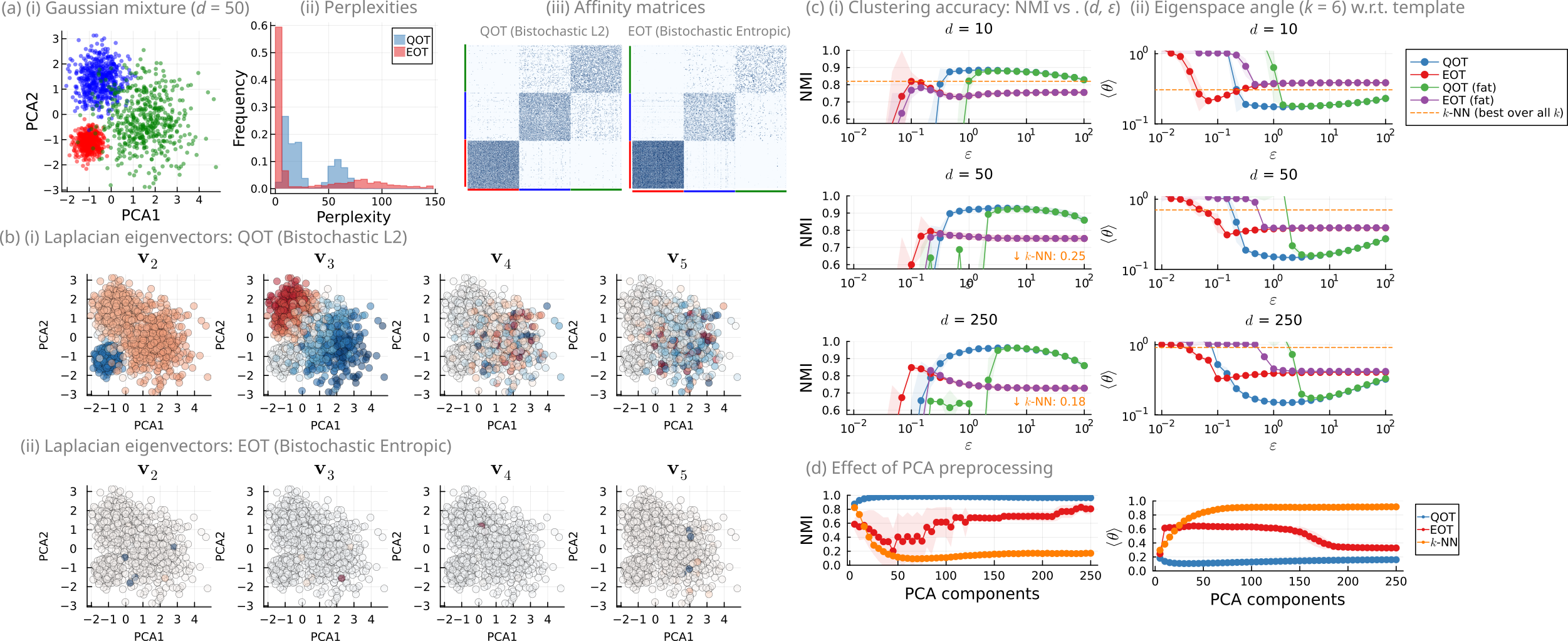}
    \caption{\textbf{Gaussian mixture model.} (a) (i) $N = 1500$ points sampled from a mixture of 3 Gaussians (500 points each) in $d = 50$ dimensions, shown in PCA coordinates and coloured by their true labels. (ii) Distribution of effective neighbourhood sizes (perplexities) for QOT (Frobenius) and EOT (entropic) bistochastic projections, where the mean perplexity is 30 in both cases. (iii) Corresponding affinity matrices. (b) (i-ii) Leading eigenvectors of the graph Laplacian constructed from each affinity matrix. (c) Performance for varying $(d, \varepsilon)$ measured in terms of (i) normalised mutual information (NMI) of spectral clustering result and (ii) eigenspace angle of leading 6 eigenvectors to template subspace. (d) Performance for different numbers of principal components with fixed parameters $\varepsilon$, $k$.}
  \label{fig:gmm}
\end{figure}

\begin{figure}[h!]
  \centering
    \includegraphics[width = \linewidth]{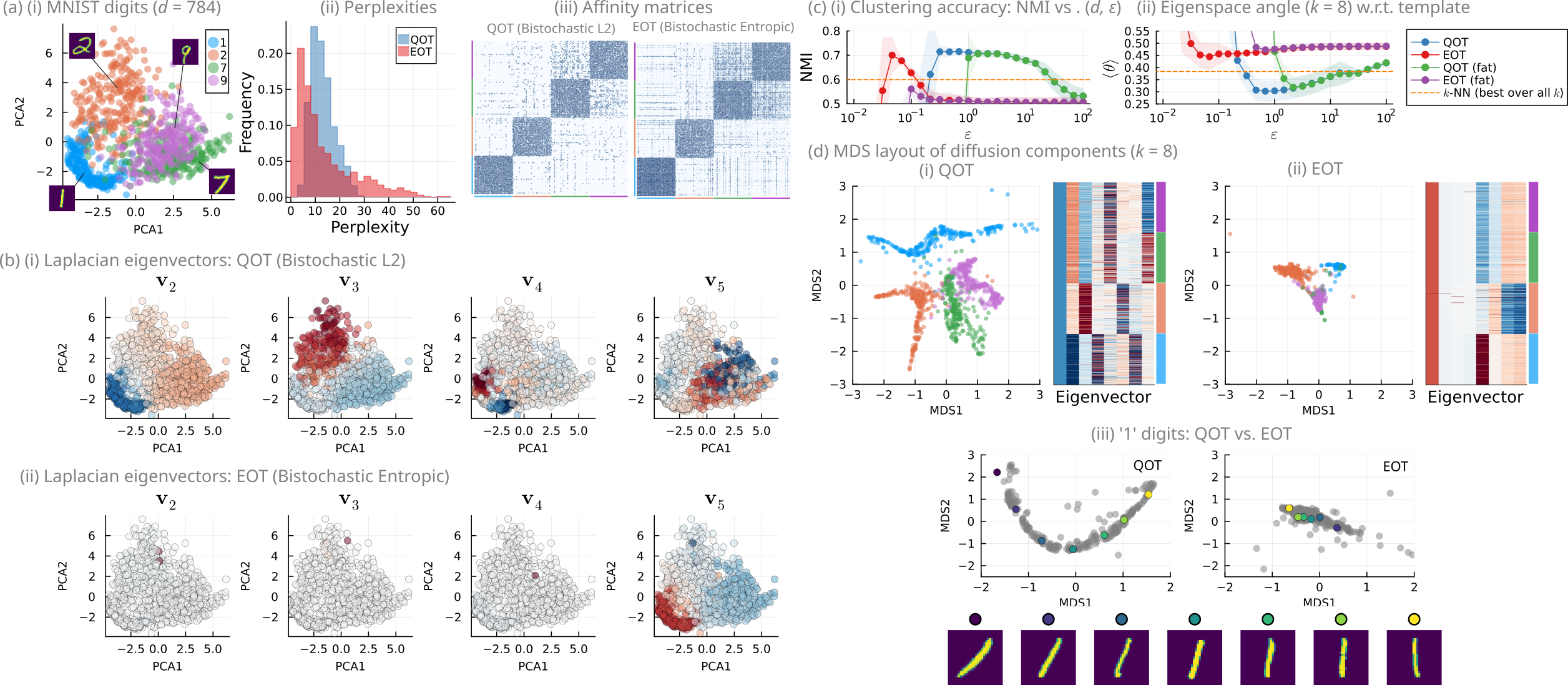}
    \caption{\textbf{MNIST dataset.} (a) (i) $N = 750$ MNIST sampled from \textsf{1}, \textsf{2}, \textsf{7}, \textsf{9} (250 images each) in $d = 784$ dimensions, shown in PCA coordinates and coloured by their true labels. (ii) Same as in Figure \ref{fig:gmm}(a)(ii) but with mean perplexity 13. (iii) Corresponding affinity matrices. (b) Same as in Figure \ref{fig:gmm}(b). (c) Performance measured in terms of (i) normalised mutual information (NMI) of spectral clustering result and (ii) eigenspace angle of leading 8 eigenvectors to template subspace. (d) (i-ii) MDS layouts of leading 8 diffusion components and corresponding eigenvectors, for L2 and entropic bistochastic projections. (iii) MDS layout of diffusion components for \textsf{1} digits recovers a 1-dimensional manifold corresponding to a rotation for bistochastic L2 projection, while this is corrupted in the entropic case.}
  \label{fig:mnist}
\end{figure}

\subsection{Application: MNIST handwritten digits}
  
We consider the MNIST image dataset of handwritten digits as an application to real world data. We consider the digits \textsf{1, 2, 7, 9} and sample 250 images each, constructing a dataset of $N = 750$ points in dimension $d = 784$. We illustrate these in PCA coordinates in Figure \ref{fig:mnist}(a)(i), coloured by the ground truth labels. For QOT, we choose $\varepsilon = 1.0 \times \overline{C}$ as previously, and for EOT we tune its regularisation strength to match the perplexity of QOT. In Figure \ref{fig:mnist}(a)(ii-iii) we show the distribution of perplexities as well as affinity matrices for both QOT and EOT affinities. As in the case of Gaussians (Figure \ref{fig:gmm}(a)), we find that there is more variability in the perplexities for the EOT affinities. In Figure \ref{fig:mnist}(b)(i-ii) we show leading Laplacian eigenvectors for the QOT and EOT affinities. As before in the case of QOT affinities, we find that the eigenvectors highlight underlying features in the data -- $v_2, v_3$ distinguish the \textsf{1}s and \textsf{2}s, while $v_4$ identifies variability among the \textsf{1}s. In contrast, we find that the leading EOT eigenvectors $v_2$-$v_4$ consist predominantly of noise, with $v_5$ distinguishing the \textsf{1}s.

Carrying out the same benchmarking as in the Gaussian case but this time for $k = 4$ clusters, we find that both QOT and EOT perform similarly and outperform $k$-NN by a significant margin in terms of NMI (Figure \ref{fig:mnist}(c)(i)). However, EOT yields good performance only for a narrow range of $\varepsilon$, while QOT is stable across a much wider range of parameter values. In terms of eigenspace angle relative to a ground truth cluster template (Figure \ref{fig:mnist}(c)(ii)), we find that QOT outperforms both $k$-NN and EOT by a significant margin. The observation on the eigenspace angles suggests that the QOT affinities better encode cluster information in their eigenspaces compared to EOT affinities, but this information may not all be used by the downstream spectral clustering algorithm, as indicated by similar NMI scores for the two methods. Figure \ref{fig:mnist}(d) provides evidence that affirms this conclusion: calculating diffusion components from each affinity matrix, we show a 2-dimensional MDS layout of the diffusion embeddings alongside the leading $2k = 8$ Laplacian eigenvectors. Visually, with the QOT affinities a clear separation between \textsf{7}s and \textsf{9}s is achieved. On the other hand, for the EOT affinities they are superimposed. The cause of this is immediately apparent from inspection of the eigenvectors: for QOT, $v_8$ clearly separates \textsf{7}s and \textsf{9}s, however by default spectral clustering only utilises the leading $k = 4$ Laplacian eigenvectors. By comparison, several of the leading EOT eigenvectors are corrupted with noise, and no distinguishing information between \textsf{7}s and \textsf{9}s is apparent from the spectral information under consideration. 

Finally, we again showcase the suitability of the QOT kernel for \emph{manifold} learning in this setting. The MNIST \textsf{1}s approximately lie on a one-dimensional manifold that describes the slantedness of the digits \citep{schmitz2018wasserstein, seguy2015principal}. In Figure \ref{fig:mnist}(d)(iii) we show MDS layouts of diffusion components on the subset of \textsf{1}s, for both QOT and EOT affinities. We find that the QOT affinities produce a clear 1-dimensional embedded structure along which the progressive change in angle is apparent. In contrast, the EOT affinities produce a much noisier embedding from which this trend is not as readily observed. 

\subsection{Application: single cell RNA sequencing data}

Datasets of high-throughput measurements made from biological systems are naturally high dimensional and usually subject to significant noise, both biological and technical. Single cell RNA sequencing datasets typically comprise of on the order of $10^3 - 10^6$ cells and have an ambient dimensionality on the order of $10^4$, or the number of expressed genes in an organism. In many cases, the measured cells are captured along a continuous biological process such as development. The high-dimensional cell state data can be thought of as an embedded lower dimensional manifold whose intrinsic geometry captures biological details \cite{moon2018manifold, van2018recovering}. 

We consider one such dataset, originally published in \citet{la2018rna} that captures the spectrum of cell states across the development of mouse hippocampus. Starting from a sample of 5,000 cells we apply the standard filtering and pre-processing pipeline (filter out low expression cells and genes, normalise reads per cell, $\log(x+1)$ transformation, filter for highly variable genes and standardisation \citep{luecken2019current}). This leads to a $5000 \times 2239$ matrix of normalised cell-by-gene expression values. We computed affinity matrices using QOT and EOT, as well as $k$-NN with unit edge weights. For QOT we use $\varepsilon = 1.0 \times \overline{C}$, and we tune the regularisation strength of EOT to match the perplexity of QOT. For $k$-NN, we chose $k$ to be equal to the mean perplexity of QOT. For each affinity matrix, we transformed the cells into a 10-dimensional spectral embedding. In the spectral embedding coordinates we calculated the squared distance of each cell to an origin cell which was selected based on origin-cell probabilities calculated by \citet[Figure 3]{la2018rna} from RNA velocity reconstructions. 

We show in Figure \ref{fig:single_cell} on $t$-SNE coordinates the distance-to-origin cell as measured in the Laplacian eigenmap embeddings calculated based on the QOT and EOT affinity matrices, as well as the $k$-NN adjacency matrix. We observe that the QOT kernel produces an embedding in which the ends of branches of developmental lineages are distant fro the origin cell. This is in agreement with the end-state predictions calculated by \citet[Figure 3]{la2018rna} from RNA velocity reconstructions. On the other hand, the EOT and $k$-NN kernel constructions miss certain lineages.

Despite being a staple of single cell data analysis pipelines \citep{moon2018manifold, la2018rna}, the use of $k$-nearest neighbours for constructing affinity matrices has rarely been questioned \citep{gorin2022rna}. Our findings suggest that construction of affinity matrices by $k$-NN may be suboptimal for high-dimensional data, a regime in which we find the QOT construction to perform favourably.

\begin{figure}[h!]
    \centering
    \begin{subfigure}{\linewidth}
        \includegraphics[width=\linewidth]{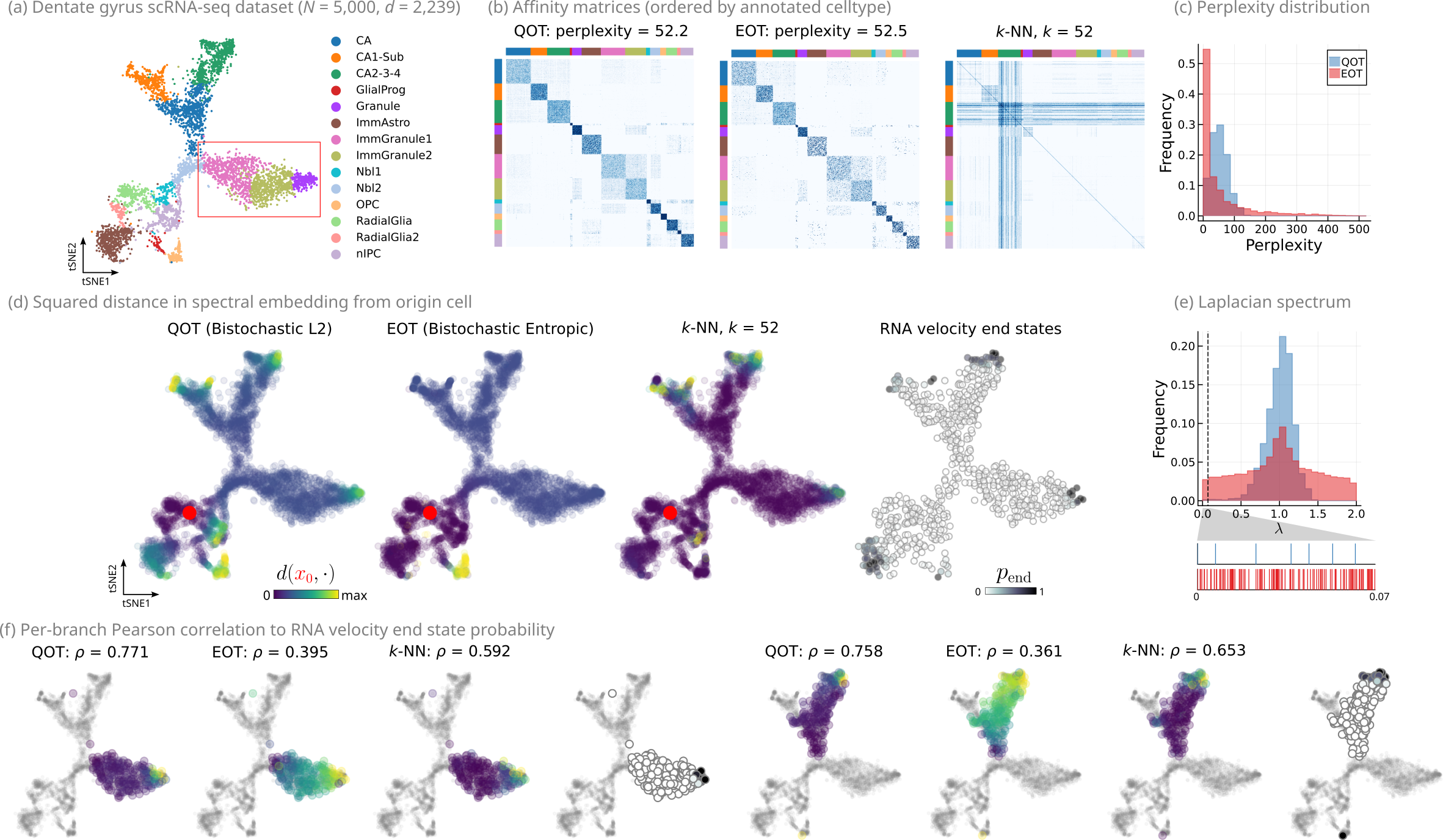}
    \end{subfigure}
    \caption{\textbf{Single cell RNA sequencing dataset.} (a) Sub-sampled dentate gyrus dataset \citep{la2018rna} (5000 cells, 2239 dimensions) visualised in $t$-SNE coordinates from the original publication, and coloured by celltype annotation. (b) L2 and entropic bistochastic projections of the linear kernel, ordered by celltype annotation. (c) Distribution of effective neighbourhood sizes (perplexities) for L2 and entropic bistochastic projections. (d) Left: Cells coloured by squared distance from origin cell (red) measured in terms of the 10-dimensional spectral embedding, for L2 and entropic bistochastic projections as well as $k$-NN affinity matrix constructions. Right: terminal cell states predicted by \citep{la2018rna} using RNA velocity estimates. (e) Distribution of Laplacian eigenvalues for L2 and entropic bistochastic projection. Inset shows occurence of leading eigenvalues in the range $[0, 0.07]$. (f) Same as (d) but shown for \textsf{Granule} and \textsf{CA} branches, together with the Pearson correlation between spectral embedding distance to origin and RNA velocity end state probability.}
    \label{fig:single_cell}
\end{figure}

\begin{figure}[h!]
    \centering
    \begin{subfigure}{\linewidth}
        \includegraphics[width=\linewidth]{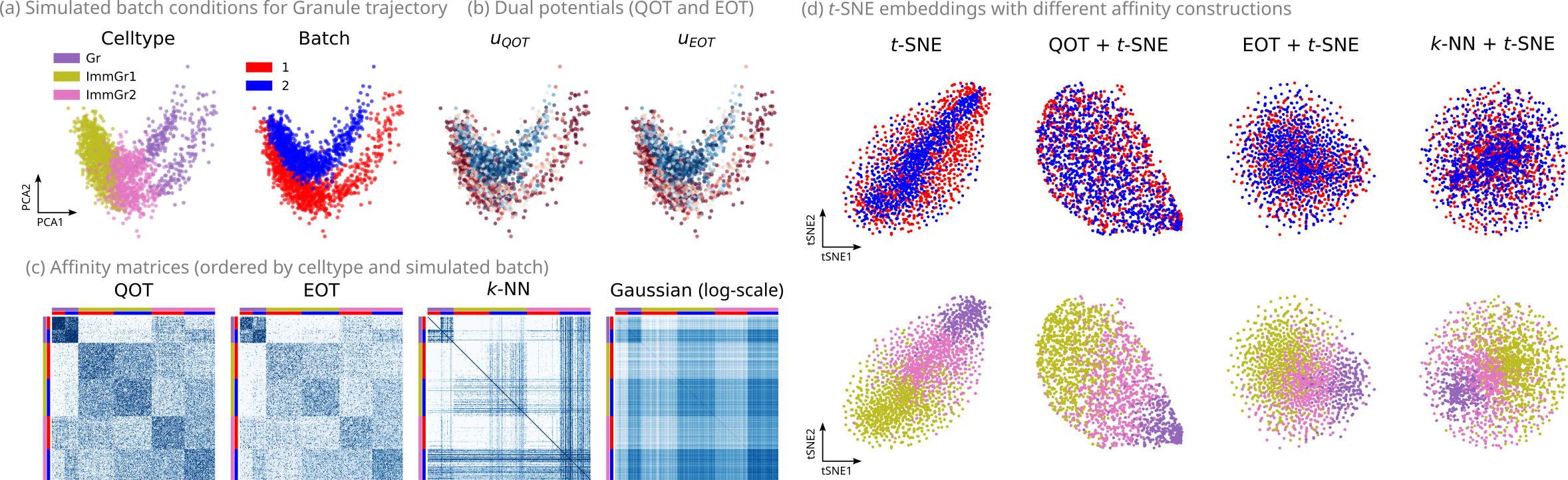}
    \end{subfigure}
    \caption{\textbf{Simulated scRNA-seq batch effects.} (a) 2000 cells sampled from Granule trajectory with simulated batch effect. (b) Dual potentials from QOT and EOT. (c) Affinity matrices for various constructions, coloured by celltype and batch. (d) $t$-SNE embeddings obtained using different affinity constructions. }
    \label{fig:batcheffect}
\end{figure}

As a further demonstration of the general-purpose potential of QOT affinity matrices manifold learning and dimensionality reduction, we focus on the sub-trajectory consisting of ImmGranule(1, 2) and Granule cells (see box in Figure \ref{fig:single_cell}(a)). In particular, to illustrate the robustness to heteroskedastic noise outlined in Sections \ref{sec: SketchResults} and \ref{sec: RobandOpt}, we simulate batch effects similarly to the approach described in \citep[Section 3.2]{landa2021doubly}. Starting with 2000 cells randomly sampled from the trajectory, we form a $2000 \times 2239$ cell-by-gene matrix $X_{ij}$ and row-normalise so that each cell $X_i$ corresponds to a probability vector $p_i$ over genes. A simulated gene expression profile for cell $i$ with batch effect is then obtained as a sample from $\mathrm{Multinomial}(p_i, n_i)$, where $n_i$ is the total number of reads assigned to cell $i$. In summary, for each cell $X_i$ a simulated expression profile $\tilde{X}_i$ is obtained following 
  \[
    \tilde{X}_i \sim \mathrm{Multinomial}(p_i, n_i), \quad p_{ij} = \frac{X_{ij}}{\sum_{j} X_{ij}}, 
  \]
  where $n_i = 100$ for 1000 randomly selected cells and $n_i = 250$ for the rest. Importantly, \citet{landa2021doubly} show in Section 3.2.3 of their paper that this noise model satisfies the requirements for the observation model for robustness to heteroskedastic noise (Proposition \ref{prop: RobHetNoise}). The same pre-processing steps as before are then applied to the simulated gene expression matrix $\tilde{X}$.

  We show in Figure \ref{fig:batcheffect}(a) the simulated data in PCA coordinates, from which presence of a batch effect is apparent. We then build affinity matrices using QOT and EOT constructions. As before, we tuned the parameters for each method to achieve a mean perplexity of 50. We illustrate in Figure \ref{fig:batcheffect}(b) the optimal potentials $u_{QOT}, u_{EOT}$ obtained from each method. As we expect from the discussion in \ref{sec: SketchResults}, the batches correspond to different noise levels and we find that this is empirically reflected in the values of the dual potential for both bistochastic projections. In Figure \ref{fig:batcheffect}(c) we show the affinity matrices obtained from QOT and EOT, as well as $k$-NN (with $k = 50$) and a Gaussian kernel with the same bandwidth as EOT (i.e., $\exp(-r^2 / 2\sigma^2)$ with $\varepsilon = 2\sigma^2$). For ease of visualisation, we show the Gaussian kernel on a log-scale. From these visualisations it becomes clear that both QOT and EOT are able to uncover structure corresponding to celltypes, although signal arising from the batches are still apparent. On the other hand, $k$-NN and the Gaussian kernel exhibit structure that is largely corresponds to variation in read number in the simulated batches. Finally, as an illustration of the practical consequences of the choice of affinity matrix, we produce $t$-SNE embeddings using different affinity matrices and show them in Figure \ref{fig:batcheffect}(d). Specifically, we employ default $t$-SNE (Gaussian affinities from 50 PCs and perplexity 30), as well as $t$-SNE using QOT, EOT and $k$-NN affinities (using openTSNE implementation \citep{polivcar2024opentsne}). Variation due to the presence of batches is clearly present in the case of default $t$-SNE, and using EOT and $k$-NN affinities produces embeddings with artifacts which do not clearly reveal the presence of an underlying trajectory. By contrast, $t$-SNE with QOT affinities is robust to batch effects (evidenced by the overlap between the two batches) and reveals the underlying trajectory.

\section{Conclusion and open questions}
\label{sec: Conclusion}

Our work opens the door for further applications and analysis of the bistochastic projection of affinity matrices in the Frobenius norm. Understanding finely the relationship between the porous medium equation and regularised optimal transport one concrete interesting theoretical problem; we only strengthen the apparent parallel in Section~\ref{sec:PME}. The understanding might further help explain why the methods perform well and more precisely help identify the weaknesses. Furthermore, determining the limiting operators that arise from this scheme for other settings is still an open question -- could one derive finer results on the potentials or further exploit the metric projection property?
Even though first results are given in that direction in Section~\ref{sec: RobandOpt},   providing a comprehensive, quantitative rationale for the over-performance observed in Section~\ref{sec: Perf} would be welcome. A finer description of the finite sample case would be valuable. Also, understanding when the method is not suitable would also be important for a controlled, widespread use.

The recent paper of \citet{van2024snekhorn} studies a variant of bistochastic scalings for affinity matrices, where a row-wise constraint on the effective neighbourhood size, or perplexity, is imposed. This is in contrast to the settings studied in the present paper as well as \citep{landa2021doubly, zass2006doubly}, where a global regularisation is imposed and the resulting effective neighbourhood size may vary across the dataset. While in this paper we focus on the latter and its corresponding theoretical properties, an interesting question for future work would be whether similar row-wise constraints can be introduced for Frobenius bistochastic projections, and if so, whether the induced sparsity improves performance for downstream tasks.

\acks{
S. Zhang acknowledges support from an Australian Government Research Training Program Scholarship and an Elizabeth and Vernon Puzey Scholarship. G. Mordant acknowledges the support of the DFG within CRC1456-A04. G. Schiebinger was supported by a MSHR Scholar Award, a CASI from the Burroughs Wellcome Fund and a CIHR Project Grant.}

\newpage

\appendix

\section{Supplementary theory}
\subsection{General regularised optimal transport}
\label{sec:general_OT}

For simplicity we will restrict our focus to discrete measures. 

\begin{definition}[Regularized optimal transport]
Consider a two discrete probability measures $\alpha, \beta$ supported on $n$ and $m$ points respectively. Recall that $\Pi(\alpha, \beta)$ denotes the set of couplings between $\alpha$ and $\beta$. Consider additionally  a smooth convex function $\Omega : \reals^{n\times m} \to \reals$ and a cost matrix $C \in \reals^{n\times m}$. Then, for $\eps >0$, the optimisation problem 
\[
        \min_{\pi \in \Pi(\alpha, \beta)} \langle C, \pi \rangle + \varepsilon \Omega(\pi)   
\]
   is called a \emph{regularised optimal  optimal transport problem}.
\end{definition}

\begin{lemma}[Dual of regularized optimal transport]
    Consider the regularised optimal transport problem with smooth, convex regulariser $\Omega$.
    \[
        \min_{\pi \in \Pi(\alpha, \beta)} \langle C, \pi \rangle + \varepsilon \Omega(\pi)   
    \]
    Then, the corresponding dual problem is 
    \[ 
        \sup_{u, v} -\langle u, \alpha \rangle - \langle v, \beta \rangle - \varepsilon \inf_{\rho \ge 0} \Omega^\star\left( - \frac{C + u \oplus v - \rho}{\varepsilon}\right).
      \]
    where $\Omega^\star$ is the Legendre transform of $\Omega$ in its argument. 
\end{lemma}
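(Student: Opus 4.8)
The plan is to derive the dual of the regularised optimal transport problem by introducing Lagrange multipliers for each of the constraints defining $\Pi(\alpha,\beta)$ and then recognising the resulting inner optimisation as a Legendre transform. Recall that a coupling $\pi \in \Pi(\alpha,\beta)$ must satisfy $\pi \ge 0$ (entrywise), $\pi \ones_m = \alpha$, and $\pi^\top \ones_n = \beta$. I would attach a multiplier $u \in \reals^n$ to the row-sum constraint $\pi\ones_m = \alpha$, a multiplier $v \in \reals^m$ to the column-sum constraint $\pi^\top \ones_n = \beta$, and a multiplier $\rho \in \reals^{n\times m}$ with $\rho \ge 0$ to the nonnegativity constraint $\pi \ge 0$. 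The Lagrangian is then
\[
\mathcal{L}(\pi, u, v, \rho) = \langle C, \pi\rangle + \varepsilon\Omega(\pi) - \langle u, \pi\ones_m - \alpha\rangle - \langle v, \pi^\top\ones_n - \beta\rangle - \langle \rho, \pi\rangle,
\]
where I have used the convention that $\rho\ge 0$ enters with a minus sign so that strong duality gives a $\sup$ over $\rho\ge 0$. Since $\Omega$ is smooth and convex and $\Pi(\alpha,\beta)$ is a nonempty polytope (assuming $\alpha,\beta$ have equal total mass), Slater's condition holds and strong duality applies, so the optimal value equals $\sup_{u,v,\rho\ge 0}\inf_{\pi}\mathcal{L}(\pi,u,v,\rho)$.

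Next I would rearrange the Lagrangian to isolate the dependence on $\pi$. Using $\langle u, \pi\ones_m\rangle = \langle u\oplus v, \pi\rangle$-type identities (more precisely $\langle u,\pi\ones_m\rangle + \langle v,\pi^\top\ones_n\rangle = \langle u\oplus v,\pi\rangle$ where $(u\oplus v)_{ij} = u_i + v_j$), one gets
\[
\mathcal{L} = \langle u,\alpha\rangle + \langle v,\beta\rangle + \varepsilon\Omega(\pi) + \langle C - u\oplus v - \rho, \pi\rangle.
\]
Now I would take the infimum over $\pi \in \reals^{n\times m}$ (unconstrained, since the nonnegativity has been dualised). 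Writing $Y = -(C - u\oplus v - \rho)/\varepsilon = -(C + (-u)\oplus(-v) - \rho)/\varepsilon$, we have $\inf_\pi\{\varepsilon\Omega(\pi) + \langle -\varepsilon Y,\pi\rangle\} = -\varepsilon\sup_\pi\{\langle Y,\pi\rangle - \Omega(\pi)\} = -\varepsilon\Omega^\star(Y)$, by definition of the Legendre transform $\Omega^\star$. Substituting back and then taking the sup over $\rho\ge 0$ (equivalently an $\inf_{\rho\ge 0}$ once we pull the minus sign out), and finally relabelling $u \mapsto -u$, $v\mapsto -v$ to match the stated sign conventions, yields exactly
\[
\sup_{u,v} \; -\langle u,\alpha\rangle - \langle v,\beta\rangle - \varepsilon\inf_{\rho\ge 0}\Omega^\star\!\left(-\frac{C + u\oplus v - \rho}{\varepsilon}\right),
\]
which is the claimed dual.

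The only genuinely delicate point is the justification of strong duality and the interchange of $\inf$ and $\sup$ — this requires that the primal be feasible (so $\alpha$ and $\beta$ must have the same total mass, which is implicit since they are probability measures) and that a constraint qualification hold; here the interior of $\Pi(\alpha,\beta)$ relative to the affine hull is nonempty (the uniform-product coupling lies in it), so Slater-type conditions are satisfied and convex duality theory (e.g. Rockafellar) gives zero duality gap with attainment on the primal side. A secondary technical matter is ensuring that $\Omega^\star$ is well-defined and finite on the relevant domain — for the quadratic and entropic choices of $\Omega$ this is standard, and in the general statement one simply carries $\Omega^\star$ along formally. Everything else is routine manipulation of inner products and the definition of the Legendre transform; I would present these rearrangements compactly rather than belabouring them.
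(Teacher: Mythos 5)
Your proposal is correct and follows essentially the same route as the paper: the paper gives no explicit proof of this lemma, but its proof of the hollow variant (Lemma~\ref{eq: DualHollow}) is exactly the Lagrangian saddle-point formulation you write down, with the inner infimum over $\pi$ recognised as $-\varepsilon\Omega^\star$ of the shifted cost. Your additional remarks on Slater's condition and the sign relabelling $u\mapsto -u$, $v\mapsto -v$ are sound and merely make explicit what the paper leaves implicit.
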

In the setting of optimal transport with a quadratic regularisation, we choose 
\[
\Omega(\pi) = \frac{1}{2} \| \pi \|_{\mu \otimes \nu}^2 = \frac{1}{2} \left\langle \frac{d\pi}{d\mu \otimes d\nu}, \pi \right\rangle.
\]
A straightforward computation reveals that 
\[
    \Omega^\star(\rho) = \frac{1}{2} \| \rho (\mu \otimes \nu) \|_{\mu \otimes \nu}^2. 
\]
It follows that the dual for the quadratically regularised optimal transport problem is 
\[
    \sup_{u, v} \langle u, \alpha \rangle + \langle v, \beta \rangle - \frac{1}{2\varepsilon} \| [u \oplus v - C]_+ (\mu \otimes \nu) \|_{\mu \otimes \nu}^2. 
\]
\begin{lemma}[Dual of hollow regularized optimal transport]
\label{eq: DualHollow}
    Consider the hollow regularised optimal transport problem with smooth, convex regulariser $\Omega$, denoting by $\hol$ the set of matrices with zero diagonal:
    \[
        \inf_{\pi \in \Pi(\alpha, \beta) \cap \hol} \langle C, \pi \rangle + \varepsilon \Omega(\pi)   
    \]
    Then, the corresponding dual problem is 
    \[ 
        \sup_{u, v,D} -\langle u, \alpha \rangle - \langle v, \beta \rangle - \varepsilon \inf_{\rho \ge 0} \Omega^\star\left( - \frac{ C+D + u \oplus v - \rho}{\varepsilon}\right),
    \]
    where D is in the set of diagonal matrices.
\end{lemma}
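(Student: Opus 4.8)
The plan is to reduce the hollow constraint to an affine linear constraint and then invoke the already-established duality for (non-hollow) regularised optimal transport, treating the new multiplier on the same footing as the marginal multipliers $u, v$. First I would observe that the constraint $\pi \in \hol$, i.e., $\operatorname{diag}(\pi) = 0$, is equivalent to the family of linear equalities $\langle E_{kk}, \pi \rangle = 0$ for $k = 1, \dots, n$ (where $E_{kk}$ is the matrix with a single $1$ in position $(k,k)$), and that $\Pi(\alpha,\beta)$ itself is cut out by the linear marginal equalities $\pi \mathbf{1} = \alpha$, $\pi^\top \mathbf{1} = \beta$. Hence the feasible set $\Pi(\alpha,\beta)\cap\hol$ is the intersection of $\{\pi \ge 0\}$ with an affine subspace, exactly as in the setting of the preceding lemma but with extra linear constraints. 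I would then write the Lagrangian, introducing multipliers $u \in \reals^n$, $v \in \reals^m$ for the marginals and a vector of multipliers collected into a diagonal matrix $D$ for the hollowness constraints $\operatorname{diag}(\pi)=0$; since $\langle D, \pi\rangle = \sum_k D_{kk}\pi_{kk}$, the $D$ term enters the Lagrangian precisely as an additive modification $C \mapsto C + D$ of the cost.

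The key computation is then identical in form to the proof of the non-hollow dual. Carrying out the partial minimisation over $\pi \ge 0$ (equivalently, introducing a slack multiplier $\rho \ge 0$ for the nonnegativity and Legendre-transforming in $\pi$), one obtains the inner term $-\varepsilon \inf_{\rho \ge 0}\Omega^\star\!\left(-\frac{(C+D) + u\oplus v - \rho}{\varepsilon}\right)$, with the linear-in-$u,v$ terms $-\langle u,\alpha\rangle - \langle v,\beta\rangle$ coming off the marginal constraints exactly as before; the hollowness multiplier $D$ contributes no additive constant because the constraint value is zero. Maximising over $u, v$ and $D$ (diagonal) then yields the claimed dual. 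To make the argument rigorous I would appeal to strong duality: for $\varepsilon > 0$ the primal is minimisation of a continuous strongly convex function over a nonempty compact convex set (nonempty provided $\alpha,\beta$ admit at least one hollow coupling, e.g.\ when $n = m$ and $\alpha = \beta = \mathbf 1/n$, the relevant case for \eqref{eq: QOTProb}), and the linear constraints satisfy a Slater-type / polyhedral constraint qualification, so there is no duality gap and the dual supremum is attained.

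The main obstacle I anticipate is not the formal manipulation but the bookkeeping around the constraint qualification and the interchange of $\inf$ and $\sup$: one must check that adding the diagonal constraints does not destroy feasibility of the primal and that the Lagrangian dual genuinely coincides with the convex-conjugate dual (so that the formula with $\inf_{\rho\ge 0}\Omega^\star(\cdot)$ is valid). Since all constraints here are polyhedral, Slater's condition can be replaced by the standard polyhedral constraint qualification, which holds automatically, so this obstacle is surmountable with a short lemma. A cleaner alternative, which I would mention as a remark, is to absorb the hollow constraint directly into the cost: minimising over $\pi \in \Pi(\alpha,\beta)\cap\hol$ with cost $C$ is the limit (and, by convexity, in fact equals for the relevant regularisers the exact value) of minimising over $\pi \in \Pi(\alpha,\beta)$ with cost $C$ replaced by $C$ plus an arbitrarily large diagonal penalty; passing to the dual of the penalised problem and letting the penalty range over all diagonal matrices recovers the stated $\sup_{u,v,D}$, which is perhaps the most transparent route.
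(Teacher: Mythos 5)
Your proposal is correct and follows essentially the same route as the paper's own (very brief) proof: introduce a diagonal multiplier $D$ for the hollowness constraints in the Lagrangian, note that it enters as a modification $C \mapsto C + D$ of the cost, and then invoke the classical dual of the non-hollow problem. Your additional care about the polyhedral constraint qualification and primal feasibility is a welcome refinement of a step the paper leaves implicit.
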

\begin{proof}
The Lagrange formulation of the problem reads
\[
   \inf_{\pi\geq 0} \sup_{u\in \reals^n ,v\in \reals^m,D} \langle C, \pi\rangle + \varepsilon \Omega(\pi) + \langle \alpha - \pi \1, u\rangle  + \langle \beta - \pi^\top \1, v\rangle + \langle D, \pi\rangle,
\]
where the optimisation for $D$ occurs on the set of diagonal matrices. On thus gets the classical dual formulation for the modified cost matrix.
\end{proof}

\begin{remark}[A note on the case of empirical measures]
    In the discrete setting where $\mu = N^{-1} \sum_i \delta_{x_i}, \nu = M^{-1} \sum_j \delta_{y_j}$, one has $\frac{\varepsilon}{2} \| \pi \|_{\mu \otimes \nu}^2 = \frac{\varepsilon N^2}{2} \| \pi \|_{\1\otimes\1}^2$. 
\end{remark}

\subsection{Details on generalised bistochastic information projections}

For a parameter $q \in [0, 1)$, a deformed analogue of the entropy, the $q$-entropy, is defined by \citet{bao2022sparse} as $\phi(x) = \frac{1}{2-q}( x \log_q x - x), x \ge 0$
where the $q$-analogues of the exponential and its inverse are defined for $q \in [0, 1)$ as
\begin{align*}
    \exp_q(x) &= [1 + (1-q) x]_+^{1/(1-q)},  \\ 
    \log_q(x) &= (1-q)^{-1} (x^{1-q} - 1).   
\end{align*}
Since $\phi$ is differentiable and convex on its domain, it generates a Bregman divergence on matrices with the formula 
\[
    \Phi(A | B) = \frac{1}{2-q} \sum_{i, j} \left[ A_{ij} \log_q A_{ij} - A_{ij} \log_q B_{ij} - A_{ij} B_{ij}^{1-q} + B_{ij}^{2 - q}\right].
\]
It is straightforward to verify that as $q \to 1$ one recovers the standard exponential and logarithm, and thus $\Phi$ becomes the KL divergence. When $q=0$, corresponding to the squared Frobenius norm, we have that $\phi(x) = x^2/2 - x + \iota(x \ge 0)$, $\phi^\star(u) = \frac{1}{2} [ 1 + u]_+^2, \nabla \phi^\star(u) = [1 + u]_+$ and so
\[
    \Phi(A | B) = \frac{1}{2} \| A_{ij} - B_{ij} \|_2^2. 
\]

Consider then the generalised Bregman projection problem
\begin{align}
    \argmin_{A \in \Pi(\1, \1)} \varepsilon \Phi^\star\left( \frac{-C}{\varepsilon}, \nabla \phi(A) \right) \label{eq:bregman_dual_projection}
\end{align}
where $\Phi^\star$ is the Bregman divergence generated by $\phi^\star$, the convex conjugate of $\phi$. When $q \to 1$, $\phi^\star(u) \to e^u$ and is smooth on its domain, so the objective function is exactly equal to $\varepsilon \Phi(A, \nabla \phi^\star(-C/\varepsilon))$. For other cases, however, $\phi^\star$ fails to be smooth and we must consider the problem \eqref{eq:bregman_dual_projection} to avoid loss of information. Expanding the definition of $\Phi$, 
\begin{align*}
    \varepsilon \Phi^\star\left( \frac{-C}{\varepsilon}, \nabla \phi(A) \right) &= \varepsilon \sum_{ij} \phi^\star(-C_{ij}/\varepsilon) - \phi^\star(\nabla \phi(A_{ij})) - \nabla \phi^\star (\nabla \phi(A_{ij})) (-C_{ij}/\varepsilon - \nabla \phi(A_{ij})) \\
    &= \varepsilon \sum_{ij} \phi^\star(-C_{ij}/\varepsilon) - \phi^\star(\nabla \phi(A_{ij})) \\
    &\quad+ \nabla \phi^\star(\nabla \phi(A_{ij})) C_{ij}/\varepsilon + \nabla \phi^\star (\nabla \phi(A_{ij})) \nabla \phi(A_{ij}) \\
    &= \varepsilon \sum_{ij} \phi^\star(-C_{ij}/\varepsilon) + A_{ij} C_{ij}/\varepsilon + \phi(A_{ij}),
\end{align*}
Which is, up to a constant, exactly the regularised optimal transport problem. Since $\phi$ is convex, by the above expansion we see that the problem in $A$ remains convex. In the above, since $A_{ij} \ge 0$, we have that $\nabla \phi \circ \nabla \phi^{-1} = \operatorname{id}$, and also $\phi^\star (\nabla \phi(x)) = x \nabla \phi(x) - \phi(x)$.

We remark that when $q = 1$, $\exp(-C/\varepsilon)$ is the well-known Gibbs kernel and for $q \in [0, 1)$ one may write $\exp_q(-C/\varepsilon)$ to be the $q$-Gibbs kernel \citep{bao2022sparse}. However, $\exp_q$ involves a thresholding of the cost matrix and thus is not equivalent to the regularised optimal transport problem. (This is a result of $\phi^\star$ failing to be smooth in such cases.) So, while the general problem is \enquote{formally} the projection of the $q$-Gibbs kernel, it is more precisely understood as being done via the dual formulation.

\subsection{Additional lemmas and proofs}
\label{sec: AppLem}
In this appendix, we collect some useful lemmas. 
\begin{proof}(Proof of Proposition~\ref{prop: RobHetNoise})
First we establish that projection in the set of hollow bistochastic matrices is invariant with respect to addition of diagonal matrices. Indeed, for a diagonal matrix $K$;
\[
 \min_{\pi \in \KCo} \sum_{i,j} \pi_{i,j} \tilde{c}_{i,j} + \frac{\eps}{2} \| \pi \|_F^2 =  \min_{\pi \in \KCo} \sum_{i,j} \pi_{i,j} ( \tilde{c}_{i,j}+K_{i,j}) + \frac{\eps}{2} \| \pi \|_F^2, 
\]
as $\pi_{i,i}= 0$. Then, in the error model that we have,
\[
\| \tilde{x_i} - \tilde{x_j}\|^2 = \| {x_i} - {x_j}\|^2 + 2\langle \tilde{x_i} - \tilde{x_j} ,\eta_i - \eta_j\rangle +  \|\eta_i\| ^2 +  \|\eta_j\| ^2 - \langle \eta_i , \eta_j\rangle.
\]
Note that, for $i\neq j$, by the same arguments as in \citet[Lemma~4]{landa2021doubly}
\[
 2\langle \tilde{x_i} - \tilde{x_j} ,\eta_i - \eta_j\rangle - \langle \eta_i , \eta_j\rangle = \mathcal{O}_p(m^{-1/2}).
\]
However, when $i=j$
\[
\| \tilde{x_i} - \tilde{x_j}\|^2 = \| {x_i} - {x_j}\|^2 =0.
\]
One thus has that the noisy cost matrix is the pure cost matrix plus a rank-one perturbation plus a noise matrix $E$ up to adding the lacking diagonal element of the rank-one perturbation.
We can thus add the diagonal missing part so that 
\[
 \min_{\pi \in \KCo} \sum_{i,j} \pi_{i,j} \tilde{c}_{i,j} + \frac{\eps}{2} \| \pi \|_F^2 =  \min_{\pi \in \KCo} \sum_{i,j} \pi_{i,j} ( {c}_{i,j}+\eta_i +\eta_j + E_{i,j} ) + \frac{\eps}{2} \| \pi \|_F^2, 
\]
where $E_{i,j}$ is hollow and $\Oh_p (m^{-1/2})$. 
Finally observe that, 
\[
\sum_{i,j} \pi_{i,j} \eta_i 
\]
is a constant and independent of the optimal permutation chosen.
Therefore, one gets that $W_\eps^{(f)}(\tilde{C})= W_\eps^{(f)}(C+E)$ and one can use Lemma~\ref{lem: MetProj}
to deduce the claim.
\end{proof}

\begin{lemma}
\label{lem: B5}
For $x_0 \in \MC$,  it holds for $r$ sufficiently small that  
\begin{align}
\label{eq: B5}
\nonumber
\expec\left[ f(X; r) \1\{ \lVert X - \iota(x_0)\rVert\le r \}  \right] &= \frac{\lvert S^{d-1}\rvert}{d \vM} f(\iota(x_0); r)   r^d  \\ \nonumber 
&\qquad +
\frac{\lvert S^{d-1}\rvert}{d(d+2)}\Big(
\frac{1}{2\vM} \Delta f(\iota(x_0); r) +\frac{s(x_0)f(\iota(x_0); r)}{6\vM}  \\ \nonumber
&  \quad\qquad\qquad \qquad  \qquad \qquad + \frac{d(d+2)\omega(x_0)f(\iota(x_0); r)}{24 \vM}
\
\Big)r^{d+2} \\ 
 &\qquad +\Oh( f(\iota(x_0); r) r^{d+3}). 
\end{align}
\end{lemma}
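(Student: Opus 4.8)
The plan is to reduce the integral over the embedded manifold to an integral over a geodesic ball in the tangent space via the Riemannian exponential map, Taylor-expand every ingredient to the required order, and then carry out the angular integration. First I would write the expectation as $\expec[f(X;r)\1\{\lVert X-\iota(x_0)\rVert\le r\}] = \vM^{-1}\int_{\MC} f(\iota(y);r)\,\1\{\lVert \iota(y)-\iota(x_0)\rVert\le r\}\,\diff\!\vol(y)$ using that $X$ is uniform. Next I would parametrise a neighbourhood of $x_0$ by $y=\exp_{x_0}(v)$, $v\in T_{x_0}\MC\cong\reals^d$, so that $\diff\!\vol(y) = \bigl(1 - \tfrac{1}{6}\mathrm{Ric}(v,v) + \Oh(\lVert v\rVert^3)\bigr)\diff v$, the standard expansion of the volume density in normal coordinates (this is where the scalar curvature $s(x_0)$ will eventually enter, after averaging $\mathrm{Ric}(v,v)$ over the sphere). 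In these coordinates the chord length in $\reals^p$ satisfies $\lVert \iota(\exp_{x_0}(v))-\iota(x_0)\rVert^2 = \lVert v\rVert^2 - \tfrac{1}{12}\lVert \mathbb{I}_{x_0}(v,v)\rVert^2 + \Oh(\lVert v\rVert^5)$, which follows from expanding $\iota\circ\exp_{x_0}$ to third order and using that $\mathbb{I}_{x_0}$ is the normal component of the second derivative while the tangential component contributes only at higher order to the squared norm; this is the term that produces $\omega(x_0)$ after angular averaging. Consequently the indicator region $\lVert\iota(y)-\iota(x_0)\rVert\le r$ corresponds, in $v$-coordinates, to a ball $\lVert v\rVert\le r\bigl(1+\Oh(r^2)\bigr)$, with a radially-dependent correction of order $r^2$ relative to the radius.

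With this change of variables in hand, I would pass to polar coordinates $v=\rho\theta$, $\rho\in[0,r']$, $\theta\in S^{d-1}$, with $\diff v = \rho^{d-1}\diff\rho\,\diff\theta$, and assemble the integrand as a product of three Taylor expansions: the volume-density factor $1-\tfrac{\rho^2}{6}\mathrm{Ric}(\theta,\theta)+\Oh(\rho^3)$; the correction to the radius of integration coming from the chord-length expansion (which shifts the upper limit and contributes a boundary term of one higher order in $r$); and the expansion of $f(\iota(y);r)$ itself about $f(\iota(x_0);r)$, i.e. $f(\iota(y);r) = f(\iota(x_0);r) + \nabla f\cdot(\iota(y)-\iota(x_0)) + \tfrac12 (\iota(y)-\iota(x_0))^\top \mathrm{Hess}\, f\,(\iota(y)-\iota(x_0)) + \ldots$, where after the angular integration the linear term vanishes (odd in $\theta$) and the quadratic term averages to $\tfrac{\rho^2}{2d}\Delta f(\iota(x_0);r)\cdot(1+\ldots)$ since $\int_{S^{d-1}}\theta\theta^\top\diff\theta = \tfrac{\lvert S^{d-1}\rvert}{d} I$ — this is the origin of the $\tfrac{1}{2\vM}\Delta f$ term. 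Collecting the $\rho^{d-1}$ term gives the leading $\tfrac{\lvert S^{d-1}\rvert}{d\vM}f(\iota(x_0);r)r^d$; collecting the three independent $\rho^{d+1}$ contributions (volume density $\leadsto$ Ricci/scalar curvature, radius shift $\leadsto$ second fundamental form/$\omega$, Hessian of $f$ $\leadsto$ $\Delta f$) and integrating $\int_0^r \rho^{d+1}\diff\rho = \tfrac{r^{d+2}}{d+2}$ gives precisely the three summands inside the bracket, each carrying the extra $\tfrac{1}{d+2}$ relative to the leading term; the remainder is $\Oh(f(\iota(x_0);r)r^{d+3})$ provided $f$ is $C^2$ and one controls the $\Oh(\rho^3)$ and $\Oh(\rho^5)$ tails uniformly, which holds for $r$ below the injectivity radius on the compact $\MC$.

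I expect the main obstacle to be the careful bookkeeping of the chord-length expansion $\lVert\iota(\exp_{x_0}(v))-\iota(x_0)\rVert^2 = \lVert v\rVert^2 - \tfrac{1}{12}\lVert\mathbb{I}_{x_0}(v,v)\rVert^2 + \Oh(\lVert v\rVert^5)$ and, crucially, propagating its order-$\rho^2$ correction to the upper limit of the radial integral without dropping or double-counting a contribution at order $r^{d+2}$ — this is exactly the step that distinguishes the intrinsic Laplace–Beltrami term from the extrinsic second-fundamental-form term $\omega(x_0)$, and getting the constants $\tfrac16$, $\tfrac{1}{12}$, $\tfrac{1}{24}$ right requires matching the third-order jet of $\iota\circ\exp_{x_0}$ against the Gauss equation relating $\mathrm{Ric}$, the ambient curvature (zero here, since the ambient space is Euclidean) and $\mathbb{I}$. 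Since the statement is attributed to \citet[Lemma B.5]{wu2018think}, the cleanest route is to cite that result directly after verifying that our normalisation conventions (uniform density $\vM^{-1}$, the definitions of $\omega$, $\mathfrak{N}$, $s$, and the factor $\tfrac12$ in $C$) match theirs; the proof above is the self-contained version one would give if that reference were unavailable.
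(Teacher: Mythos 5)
Your proposal is correct and ends up in the same place as the paper: the paper's entire proof is the one-line observation that this is a slight variation of Lemma~B.5 of \citet{wu2018think}, specialised to a uniform density and with $f$ allowed to depend on $r$ (hence the refined remainder), which is precisely the ``cleanest route'' you identify in your final paragraph. Your self-contained sketch via normal coordinates, the volume-density expansion, the chord-length correction, and angular averaging is a faithful outline of what underlies that cited lemma, so no gap to report.
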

\begin{proof}
This a slight variation of Lemma~B.5 of \citep{wu2018think} for a uniform density and in which the functions are allowed to depend on the parameter $r$. Because of this modification the asymptotic expansion has been slightly refined. 
\end{proof}
\begin{lemma}
\label{lem: LocCov}
For fixed $x_0 \in \MC$, $r$ sufficiently small and $X$ uniformly distributed on $\MC$ under Assumptions~\ref{assum: Rot} and ~\ref{assum: Assum2}, it holds that 
\begin{eqnarray*}
\lefteqn{\expec\left( f(X;r) (X- \iota(x_0))  (X- \iota(x_0))^\top  \1\{ \lVert X - \iota(x_0)\rVert\le r \}  \right)} \\
& \hspace{35mm}  = \frac{\lvert S^{d-1}\rvert}{d(d+2) \vol(\MC)} f(\iota(x_0);r)  r^{d+2}  
 \left(
 \begin{pmatrix} 
 I_{d\times d} & 0 \\ 0 & 0
 \end{pmatrix} 
 + \Oh(r^2)\right)
 \end{eqnarray*}
\end{lemma}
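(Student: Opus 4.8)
The plan is to prove this as the matrix-valued counterpart of Lemma~\ref{lem: B5}. Write $Y := X - \iota(x_0)$, and, using Assumption~\ref{assum: Rot}, let $P_T$, $P_N$ be the orthogonal projections of $\reals^p$ onto $\iota_* T_{x_0}\MC = \operatorname{span}(e_1,\dots,e_d)$ and onto its orthogonal complement. Since a symmetric matrix is determined by its quadratic form, it suffices to fix $\xi = \llbracket \xi_T, \xi_N \rrbracket \in \reals^p$, expand
\[
  I(\xi) := \expec\bigl[ f(X;r)\,(\xi^\top Y)^2\,\1\{\|Y\| \le r\} \bigr]
\]
with an error $\Oh\bigl( f(\iota(x_0);r)\, r^{d+4}\, \|\xi\|^2 \bigr)$ that is uniform in $\xi$, and then recover the matrix statement by polarisation.

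First I would parametrise $\MC$ near $x_0$ as a graph over its tangent space: for $r$ below the injectivity radius and $v$ in a small Euclidean ball of $\reals^d$, set $Y(v) := \iota(\exp_{x_0} v) - \iota(x_0)$. Since geodesics through $x_0$ have ambient acceleration $\mathbb{I}_{x_0}(\cdot,\cdot)$, a second-order expansion of $\iota \circ \exp_{x_0}$ gives $Y(v) = \llbracket v,\ \tfrac12 \mathbb{I}_{x_0}(v,v) \rrbracket + \Oh(|v|^3)$, where (Assumption~\ref{assum: Assum2}) $\mathbb{I}_{x_0}$ is diagonal in the normal frame; moreover the uniform law on $\MC$ pushes forward to $\vol(\MC)^{-1}\bigl(1 + \Oh(|v|^2)\bigr)\,\diff v$, and $\{\|Y\| \le r\}$ becomes $\{|v| \le r(1 + \Oh(r^2))\}$. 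Consequently
\[
  (\xi^\top Y(v))^2 = (\xi_T \cdot v)^2 + 2(\xi_T \cdot v)\bigl(\xi_N \cdot \tfrac12 \mathbb{I}_{x_0}(v,v)\bigr) + \Oh\bigl(|v|^4\,\|\xi\|^2\bigr),
\]
and the three summands are handled separately: the cross term is a homogeneous cubic in $v$, hence odd, so its integral over the nearly symmetric ball cancels up to an $\Oh\bigl(f(\iota(x_0);r)\,r^{d+4}\|\xi\|^2\bigr)$ error, while the normal--normal and remainder terms are $\Oh\bigl(f(\iota(x_0);r)\,r^{d+4}\|\xi\|^2\bigr)$ already from the $r^d$ volume factor. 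Only the tangent--tangent piece contributes at leading order.

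For that piece I would apply Lemma~\ref{lem: B5} to the scalar function $y \mapsto f(y;r)\,\bigl(\xi_T^\top P_T(y-\iota(x_0))\bigr)^2$, which together with its gradient vanishes at $\iota(x_0)$: hence the $r^d$-term (and the curvature terms) in Lemma~\ref{lem: B5} drop, and the surviving $r^{d+2}$-term is governed by the Laplace--Beltrami operator at $\iota(x_0)$. Using $\Delta\bigl[(\xi_T^\top P_T(\cdot - \iota(x_0)))^2\bigr](\iota(x_0)) = 2|\xi_T|^2$ and the isotropic identity $\int_{|v| \le \rho} v v^\top\,\diff v = \tfrac{|S^{d-1}|}{d(d+2)}\rho^{d+2} I_{d\times d}$, this gives
\[
  I(\xi) = \frac{|S^{d-1}|}{d(d+2)\,\vol(\MC)}\, f(\iota(x_0);r)\, r^{d+2}\, |\xi_T|^2 + \Oh\bigl(f(\iota(x_0);r)\, r^{d+4}\, \|\xi\|^2\bigr).
\]
Since $|\xi_T|^2 = \xi^\top \diag(I_{d\times d}, 0)\,\xi$, polarising over $\xi$ yields the claimed identity.

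I expect the main obstacle to be the uniform control of the error terms — in particular keeping track of the joint dependence of $f(\,\cdot\,;r)$ on the base point \emph{and} on the radius $r$ so that the curvature and volume-form corrections are genuinely of relative size $\Oh(r^2)$, which is exactly the refinement of \citet{wu2018think}'s Lemma~B.5 already incorporated into Lemma~\ref{lem: B5} — together with the bookkeeping for replacing $\{\|Y\| \le r\}$ by $\{|v| \le r(1+\Oh(r^2))\}$ (the boundary contribution is $\Oh(r^2)$ relative to the leading term). The graph parametrisation and the parity cancellation are then routine once $r$ is below the injectivity radius.
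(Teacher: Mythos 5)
Your argument is correct and is essentially the computation the paper itself invokes: the paper's proof of Lemma~\ref{lem: LocCov} is a one-line appeal to Proposition~3.1 of \citet{wu2018think}, whose proof is exactly this normal-coordinate expansion with the second-fundamental-form correction, parity cancellation of the odd cross term, and the isotropic moment identity $\int_{|v|\le \rho} v v^\top\,\diff v = \tfrac{|S^{d-1}|}{d(d+2)}\rho^{d+2} I_{d\times d}$. Your write-up simply fills in the details of that same route (routing the tangent--tangent piece through Lemma~\ref{lem: B5} is a harmless repackaging), and the caveat you raise about controlling the derivatives of $f(\cdot\,;r)$ relative to $f(\iota(x_0);r)$ is a genuine imprecision already present in the paper's own statement rather than a gap in your argument.
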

\begin{proof}
The proof follows along the same lines as Proposition~3.1 in \citet{wu2018think}.
\end{proof}

\begin{lemma}
\label{lem: FourthOrd}
For fixed $x_0 \in \MC$, $r$ sufficiently small and $X$ uniformly distributed on $\MC$ under Assumptions~\ref{assum: Rot} and ~\ref{assum: Assum2}, it holds that 
\begin{align*}
\expec \left[
f(X;r) e_k^\top(X-\iota(x_0))(X-\iota(x_0))^\top e_l e_m^\top(X-\iota(x_0))(X-\iota(x_0))^\top e_n \1\{ \lVert X - \iota(x_0)\rVert\le r \}
\right]\\
=
\frac{f(\iota(x_0);r)}{(d+4)\vM} r^{d+4} C_{k,l,m,n} + \Oh(r^{d+5}),
\end{align*}
where 
\[
C_{k,l,m,n} = \int_{S^{d-1}} \langle \iota_*\theta , e_k \rangle
\langle \iota_*\theta , e_l \rangle
\langle \iota_*\theta , e_m \rangle
\langle \iota_*\theta , e_n \rangle \diff \theta.
\]
\end{lemma}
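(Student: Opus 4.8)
The plan is to reduce the expectation to an integral over a Euclidean ball in the tangent space $T_{x_0}\MC$, Taylor expand the integrand, and read off the leading $r^{d+4}$ term; the argument is a one-order-further version of the computations behind Lemma~\ref{lem: LocCov} and Proposition~3.1 of \citet{wu2018think}. Using geodesic normal coordinates, write $X = \iota(\exp_{x_0}(v))$ for $v \in T_{x_0}\MC \cong \reals^d$, so that the expectation becomes $\vM^{-1}$ times an integral over a ball of radius $\approx r$ against the pulled-back Riemannian volume form $(1 + \Oh(\lVert v \rVert^2))\,\diff v$, the $\Oh(\lVert v\rVert^2)$ factor carrying the Ricci curvature. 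The embedding expands as $X - \iota(x_0) = \iota_* v + \tfrac12 \mathbb{I}_{x_0}(v,v) + \Oh(\lVert v\rVert^3)$, and since $\mathbb{I}_{x_0}(v,v) \perp \iota_* T_{x_0}\MC$ one obtains $\lVert X - \iota(x_0)\rVert^2 = \lVert v\rVert^2 + \Oh(\lVert v\rVert^4)$; consequently the indicator $\1\{\lVert X - \iota(x_0)\rVert \le r\}$ becomes $\1\{\lVert v\rVert \le r(1 + \Oh(r^2))\}$, and because $f(\,\cdot\,;r)$ depends on its spatial argument only through $\lVert \iota(x_0) - \cdot\rVert^2$, we have $f(X;r) = f(\iota(x_0);r) + \Oh(r^2)$ uniformly on the ball.

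Substituting the expansion of $X - \iota(x_0)$ into the quartic form, the leading contribution to the integrand is the degree-four polynomial $\langle \iota_* v, e_k\rangle\langle \iota_* v, e_l\rangle\langle \iota_* v, e_m\rangle\langle \iota_* v, e_n\rangle$. Passing to polar coordinates $v = \rho\theta$ with $\rho \in [0,r]$, $\theta \in S^{d-1}$ and $\diff v = \rho^{d-1}\,\diff\rho\,\diff\theta$, the radial part gives $\int_0^r \rho^{d+3}\,\diff\rho = r^{d+4}/(d+4)$ and the angular part is exactly $C_{k,l,m,n}$, producing the claimed main term $\frac{f(\iota(x_0);r)}{(d+4)\vM}\, r^{d+4}\, C_{k,l,m,n}$.

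For the remainder, the corrections come from (a) replacing $f(X;r)$ by $f(\iota(x_0);r)$, (b) the $\mathbb{I}_{x_0}$ and higher-order terms in the expansion of $X - \iota(x_0)$, (c) the $\Oh(\lVert v\rVert^2)$ volume-form factor, and (d) the $\Oh(r^2)$ perturbation of the integration radius. In each case the extra factor is $\Oh(\lVert v\rVert^2) = \Oh(r^2)$ relative to the degree-four leading term; crucially, the degree-five terms — one factor of $\mathbb{I}_{x_0}(v,v)$ together with three of $\iota_* v$ — integrate to zero over the (leading-order) symmetric ball by parity, so the first surviving correction is already degree six. Collecting, one gets a remainder of order $\Oh(f(\iota(x_0);r)\, r^{d+6})$, which is subsumed in the $\Oh(r^{d+5})$ bound stated in the lemma.

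I expect the main obstacle to be item (b): verifying that the second-fundamental-form corrections do not contaminate the $r^{d+4}$ coefficient. This rests on the two observations used above — the parity of homogeneous polynomials integrated over the (to leading order) symmetric ball, which annihilates the degree-five cross terms, and the orthogonality $\iota_* v \perp \mathbb{I}_{x_0}(v,v)$, which forced $\lVert X - \iota(x_0)\rVert^2 = \lVert v\rVert^2 + \Oh(\lVert v\rVert^4)$ and so prevented the indicator from introducing a degree-three distortion of the domain of integration. The rest is bookkeeping identical in spirit to Lemma~\ref{lem: LocCov} and Lemma~\ref{lem: B5}.
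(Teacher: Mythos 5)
Your proposal is correct and follows essentially the same route as the paper's proof: pass to geodesic normal coordinates, expand the embedding, the volume form, the integration radius and $f$ around $x_0$, and integrate the leading quartic term in polar coordinates to obtain $r^{d+4}/(d+4)$ times the angular integral $C_{k,l,m,n}$. Your additional parity argument showing the first correction is actually $\Oh(r^{d+6})$ is a (valid) sharpening that the paper does not bother with, since it only claims $\Oh(r^{d+5})$.
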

\begin{proof}
First set 
\[
\tilde{B}_r(x_0):=\iota^{-1}(B_r^{\reals^p}(\iota(x_0))\cap \iota(\MC)).
\]
Then, the quantity of interest can be written
\begin{align*}
\mathcal{I}:=\frac{1}{\vM}\int_{\tilde{B}_r(x_0)} 
\langle \iota(y)-\iota(x_0) , e_k \rangle
&\langle \iota(y)-\iota(x_0) , e_l \rangle\\
& \times \langle \iota(y)-\iota(x_0) , e_m \rangle
\langle \iota(y)-\iota(x_0) , e_n \rangle f(y)  \diff V(y).
\end{align*}
Recalling that for $(t,\theta )\in [0, \infty)\times S^{d-1}$
\begin{align*}
 \iota \circ \exp_{x_0} (\theta t) -\iota(x_0) &= \iota_* \theta t +\Oh(t^2)\\
\tilde r &= r + \Oh(r^3)\\
\diff V(\exp_{x_0} (\theta t))&= t^{d-1} + \Oh(t^{d+1})\\
f(\exp_{x_0} (\theta t))&= f(x_0) + \Oh(t),
\end{align*}
it holds that 
\begin{align*}
\mathcal{I}&= \frac{1}{\vM} \int_{S^{d-1}}\int_0^{\tilde r} f(x_0) t^{d+3} \langle \iota_*\theta , e_k \rangle
\langle \iota_*\theta , e_l \rangle
\langle \iota_*\theta , e_m \rangle
\langle \iota_*\theta , e_n \rangle + \Oh(t^{d+4}) \diff t \diff \theta \\
& =\frac{f(x_0)}{\vM(d+4)} r^{d+4}\int_{S^{d-1}}\langle \iota_*\theta , e_k \rangle
\langle \iota_*\theta , e_l \rangle
\langle \iota_*\theta , e_m \rangle
\langle \iota_*\theta , e_n \rangle \diff \theta + \Oh(r^{d+5}),
\end{align*}
as claimed.
\end{proof}

\begin{proof}[Proof of Lemma~\ref{lem: OptPot}]
Using Lemma~\ref{lem: B5} with $f = 1$, we get that the quantile function of the local distribution of squared distances at $x_0$ is approximately 
\[
p \mapsto \left(  \frac{p}{\lvert S^{d-1} \rvert d^{-1} \vol(\MC)^{-1}}\right)^{2/d}, 
\]
so that the duality constraint in the discrete problem is approximately 
\begin{equation}
\label{eq: ApproxDual}
 \frac{N+1}{\eps} \sum_{j=0}^N \left(u^\star(X_0)+ u^\star(X_j) - \left(  \frac{U_{(j:N)}}{\lvert S^{d-1} \rvert d^{-1} \vol(\MC)^{-1}}\right)^{2/d} \right)_+=1,   
\end{equation}
where $U_{(j:N)}$ is the $j$-th sorted element of an i.i.d.\ sample of size $N$ of random variables uniformly distributed on $[0,1]$ and we take $U_{(0:N)} = 0$. Note that although we write down all $N$ order statistics $U_{(j:N)}$ and the expression for the quantile function is only a good approximation for $p \ll 1$, as long as $u(x_0) + u(X_j)$ is small, only the first few terms will be nonzero.

We get the equivalent problem
\begin{equation}
\label{eq: ApproxDisc}
\frac{N+1}{\eps (N+1)^{2/d}}\ \kappa_d\ \sum_{j=0}^N \left( \tilde{u}(x_0)+ \tilde{u}(X_j) - U_{(j:N)}^{2/d} \right)_+=1, 
\end{equation}
where we have set
 \[
\tilde{u}(\cdot) = \frac{(N+1)^{2/d}}{\kappa_d} u^\star(\cdot).
\]
 Choose $k^{2/d} < 2 \tilde{u} \leq (k+1)^{2/d} $ and plug $\tilde{u}$ (choosing a constant approximation to the potential) as a choice for the potential in \eqref{eq: ApproxDisc}. 
 It yields, 
\begin{equation}
\label{eq: ConstDisc}
 \frac{N+1}{\eps (N+1)^{2/d}}\ \kappa_d\ \left[ 2(k+1)\tilde{u} - \sum_{j=1}^k U_{(j:N)}^{2/d} \right]=1.
\end{equation}

Let us turn to the size of the sum in  \eqref{eq: ConstDisc}.
First, basic calculations show that 
\[
\expec \left[ \left (U_{(j:N)}\right)^{2/d} \right] = \frac{\Gamma(\tfrac2d + j)\Gamma(N+1)}{\Gamma(\tfrac2d + N+1 )\Gamma( j)},
\]
So that understanding the problem \eqref{eq: ApproxDual}, even in expectation and for constant potentials is not so easy for $d>2$. 

For $d=1$, we get 
\[
\sum_{j=1}^k \expec \left (U_{(j:N)}\right )^{2/d} = \frac{1}{(N+2)(N+1)} \left [ \frac13  k(k+1)(k+1) \right],
\]
while for $d=2$, it holds that 
\[
\sum_{j=1}^k \expec \left (U_{(j:N)}\right )^{2/d} = \frac{1}{(N+1)} \left [ \frac12  k(k+1) \right].
\]
In general,  
\[
\sum_{j=1}^k \expec \left (U_{(j:N)}\right )^{2/d} \approx  (N+1)^{-2/d} \sum_{j=1}^k j^{2/d}\left(1+ \frac{2-d}{d^2 j} +\Oh\left(\frac{1}{d^2}\right)\right),
\]
so that the leading order is 
\[
(N+1)^{-2/d} \frac{d}{d+2} k^{\frac{d+2}{d}}.
\]
Equation~\eqref{eq: ConstDisc} then becomes, 
\[
\frac{\eps N^{2/d -1}}{\kappa_d} \approx \left(1 - \frac{d}{d+2}\right) k^{\frac{d+2}{d}} = \left( \frac{2}{d+2}\right) k^{\frac{d+2}{d}}
\]
so that
\[
\tilde u \approx \frac12   \left( \frac{2+d}{2\kappa_d}\right)^{\frac{
2}{d+2}} \eps^{\frac{2}{d+2}} N^{\frac{(2-d)2}{d(d+2)}} 
\]
and then 
\[
  u \approx  \kappa_d^{\frac{d}{d+2}} \left( \frac{d+2}{2}\right)^{\frac{
2  }{d+2} } \eps^{\frac{2}{d+2}} N^{\frac{(2-d)2}{d(d+2)}-2/d} =    \left(  \frac{\vol(\MC) d}{\lvert S^{d-1}\rvert} \ \frac{(d+2)}{2}\right)^{\frac{
2}{d+2}} \eps^{\frac{2}{d+2}} N^{\frac{-4}{(d+2)}}
\]
Finally, using this first order to approximately solve the equation yields the claim.
\end{proof}
\begin{proof}[Proof of Theorem~\ref{thm: MainConv}]
As $g$ is twice differentiable, we can write 
\[
g(X_j) = g(x_0) + L_0 (x_0-X_j) + \frac12 (X_j -x_0)^\top Q_0  (X_j -x_0) + \oh\left ( \lVert x_0 -X_j \rVert^2 \right),
\]
where $L_0$ is the gradient of $g$ at $x_0$ and $Q_0$ is the Hessian of $g$ evaluated at $x_0$.
Plugging this result in the definition of $( \Delta^{OT} g)(x_0)$, we derive 
\begin{align*}
&( \Delta^{OT} g)(x_0)  \\
& \qquad 
= \sum_{j=1}^N \overline{W}_{0,j}^{(f)} \left( - L_0 (X_0-X_j) - \frac12 (X_j -X_0)^\top Q_0  (X_j -X_0) + \oh\left ( \lVert X_0 -X_j \rVert^2 \right)
\right).
\end{align*}
We will split this sum into three terms and control each one separately. We will first consider the expectation and then the variance. We rewrite $W_{0,j}^\eps :=\overline{W}_{0,j}^{(f)}$ to streamline the notation in the proof. Recall that $X_0=\iota(x_0)$ is a given point.\\
\underline{\textit{Step~1: Expectation.}}

Let us start with the second term 
\[
- \frac12\sum_{j=0}^N W_{0,j}^\eps  (X_j -X_0)^\top Q_0  (X_j -X_0) =: B
\]
The quantity in the above display is a scalar so that it is equal to its trace. Further, using the linearity and the cyclical property of the trace, it holds that
\begin{align*}
B &= - \frac12 \tr \left [ Q_0 \sum_{j=0}^N W_{0,j}^\eps    (X_j -X_0) (X_j -X_0)^\top \right ] \\
&= - \frac12 \tr \left [ Q_0 \sum_{j=0}^N \frac{( K_{\eps, N} -c_{0,j})_+}{\eps} (X_j -X_0) (X_j -X_0)^\top  \right   ]. 
\end{align*} 
Using Lemma~\ref{lem: LocCov}, it holds that 
\begin{align*}
&\expec \sum_{j=1}^N \frac{( K_{\eps, N} -c_{0,j})_+}{\eps} (X_j -X_0) (X_j -X_0)^\top 
\\
& \qquad\qquad\qquad = 
\frac{N}{\eps} \frac{\lvert S^{d-1}\rvert}{d(d+2) \vol(\MC)} K_{\eps, N}^{\frac{d+2}{2}} K_{\eps, N}   
 \left(
 \begin{pmatrix} 
 I_{d\times d} & 0 \\ 0 & 0
 \end{pmatrix} 
 + \Oh(K_{\eps,N})\right).
\end{align*}
 
Let us now address the first term, i.e., 
\[
-L_0 \sum_{j=0}^N W_{i,j}^\eps   (X_0-X_j).
\]
The second part of Lemma~B.5 in \citet{wu2018think} reads, in our case,
\begin{align*}
&\expec \left[ (X-\iota(x_0)) f(X;r) \1\{ \lVert X - \iota(x_0)\rVert\le r \}  \right]  \\
& \qquad \qquad =  \frac{\lvert S^{d-1} \rvert}{(d+2)\vol(\MC)} \left\llbracket \frac{ J_{p,d}^\top\iota_*\nabla f(X_0;r)}{d},
\frac{f(X_0;r) \tilde{J}_{p,p-d}^\top \mathfrak{N}(x_0)}{2}
\right\rrbracket r^{d+2} + \Oh(r^{d+4}).
\end{align*}
It follows that 
\begin{align*}
&\expec \sum_{j=1}^N W_{0,j}^\eps   (X_0-X_j) \\
&\qquad\qquad = \frac{N}{\eps} \frac{\lvert S^{d-1} \rvert}{(d+2)\vol(\MC)} \left\llbracket  0 ,
\frac{K_{\eps, N} \tilde{J}_{p,p-d}^\top \mathfrak{N}(x_0)}{2}
\right\rrbracket K_{\eps,N}^{\frac{d+2}{2}}   \\
&\qquad\qquad\quad + \Oh\left( \frac{N}{\eps}K_{\eps,N}^{(d+4)/2}\right).
\end{align*}

In view of the developments above, the expectation of the Taylor residual is negligible. \\
\underline{\textit{Step~2: Variance.}}

Let us deal with \[
-L_0 \sum_{j=1}^N W_{0,j}^\eps   (X_0-X_j).
\]
We have that 
\[\var\sum_{j=1}^N W_{0,j}^\eps   (X_0-X_j) = \frac{N}{\eps^2} \var ( ( K_{\eps, N} -c_{0,j})_+ (X -X_0))
\]
Further, 
\begin{align*}
&\var \left[ \big( K_{\eps, N} -c(X,X_0)\big)_+ (X -X_0)\right] \\
& \qquad\qquad\qquad = \expec \left[\big( K_{\eps, N} -c(X,x_0)\big)_+^2 (X -X_0)(X -X_0)^\top\right] \\
&\qquad\qquad\qquad- \expec \left[\big( K_{\eps, N} -c(X,x_0)\big)_+ (X -X_0)\right] \expec \left[\big( K_{\eps, N} -c(X,X_0)\big)_+ (X -X_0)^\top\right]\\
&\qquad\qquad\qquad= \frac{\lvert S^{d-1}\rvert}{d(d+2) \vol(\MC)} \kappa_d^{\frac{d+2}{2}} K_{\eps, N}^2  \eps
 N^{-2}  
 \left(
 \begin{pmatrix} 
 I_{d\times d} & 0 \\ 0 & 0
 \end{pmatrix} 
 + \Oh(K_{\eps,N})\right) \\
 &\qquad\qquad\qquad-  \frac{\lvert S^{d-1} \rvert^2}{(d+2)^2\vol^2(\MC)} 
 K_{\eps, N}^2
 vv^\top \kappa_d^{{d+2}}  \eps^2
 N^{-4}   + \Oh\left(K_{\eps,N}^{(d+4)/2}\eps N^{-2} K_{\eps,N}\right), 
\end{align*}
where $v$ are vectors that depend on the curvature as above.
We finally get 
\[
\var \left[
\sum_{j=1}^N W_{0,j}^\eps   (x_0-X_j)
\right] = \Oh\left(  \frac{N}{\eps^2}\ K_{\eps,N}^{2}N^{-2} \eps  \right) = \Oh\left(   \frac{K_{\eps,N}^{2}}{\eps N}  \right).
\]
Let us now turn to the covariance matrix of 
\[
V:= \vect \left( \sum_j W_{i,j}^\eps (x_i-X_j)(x_i-X_j)^\top\right)
\]
which, using Equations~(1.3.14), (1.3.16) and (1.3.31) in \citet{kollo2005advanced},  is equal to 
\[
N \expec [(W_j^\eps)^2 (x_i -X_j)\otimes(x_i -X_j)^\top\otimes (x_i -X_j)\otimes(x_i -X_j)^\top ]- N \expec V\expec^\top V.
\]

Relying on Lemma~\ref{lem: FourthOrd}, we get that the leading order of the variance of $ V$ is 
\[
\Oh\left( \ \frac{N}{\eps^2}\ K_{\eps,N}^{2} K_{\eps,N}^{\frac{d+4}{2}} \right)= \Oh\left( \frac{K_{\eps, N}^3}{N\eps}  \right).
\]
The claim follows.
\end{proof}

\section{Numerical experiments}

\subsection{Simulations for optimal dual potentials}

We exhibit the behaviour of the optimal potentials for $N$ random points on the $d$-sphere and various values of the parameter $\eps$. In $d = 1, 2, 3$, $N = 10^3$ points were sampled uniformly from $S^{d}$ by sampling first a standard Gaussian in $\mathbb{R}^{d+1}$ and scaling to unit norm. We numerically solved the corresponding discrete optimal transport problem with $\varepsilon$ in the range $[10^{-2}, 10^6]$ and plotted $\log(\overline{u})$ against $\log(\varepsilon)$, where $\overline{u}$ denotes the averaged value of the resulting dual potential over the $N$ points. For $\varepsilon$ sufficiently large, we estimated the exponent $\alpha$ for the relationship $u \sim \varepsilon^{\alpha}$. Our empirical findings agree with the exponent $\frac{2}{2 + d}$.
\begin{figure}[h!]
	\begin{center}
		\includegraphics[width=0.32\textwidth, ]{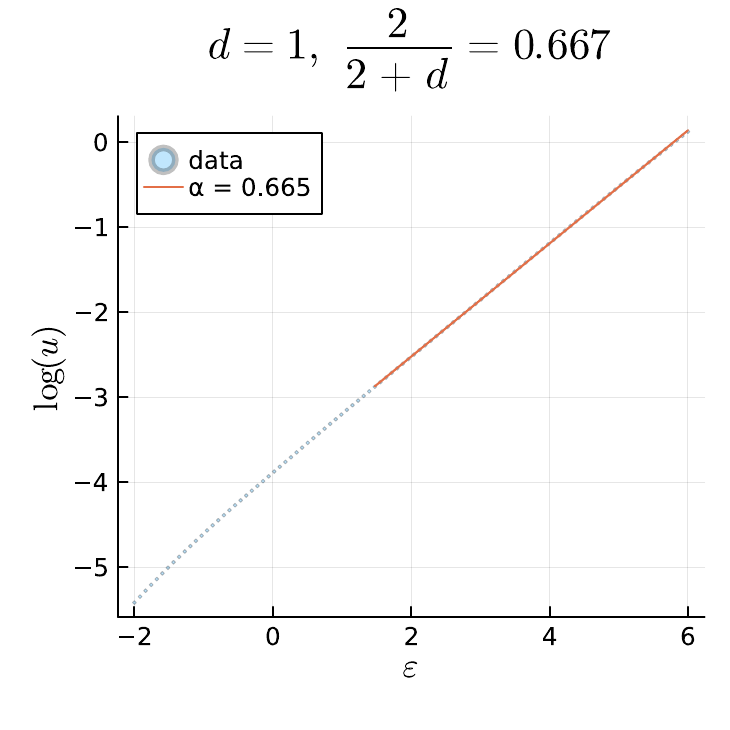}
		\includegraphics[width=0.32\textwidth, ]{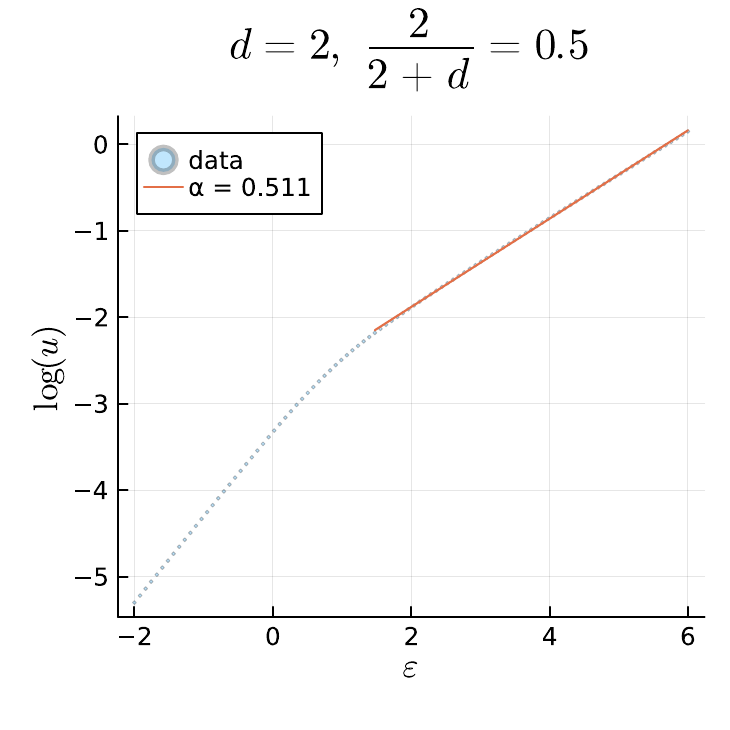}
		\includegraphics[width=0.32\textwidth, ]{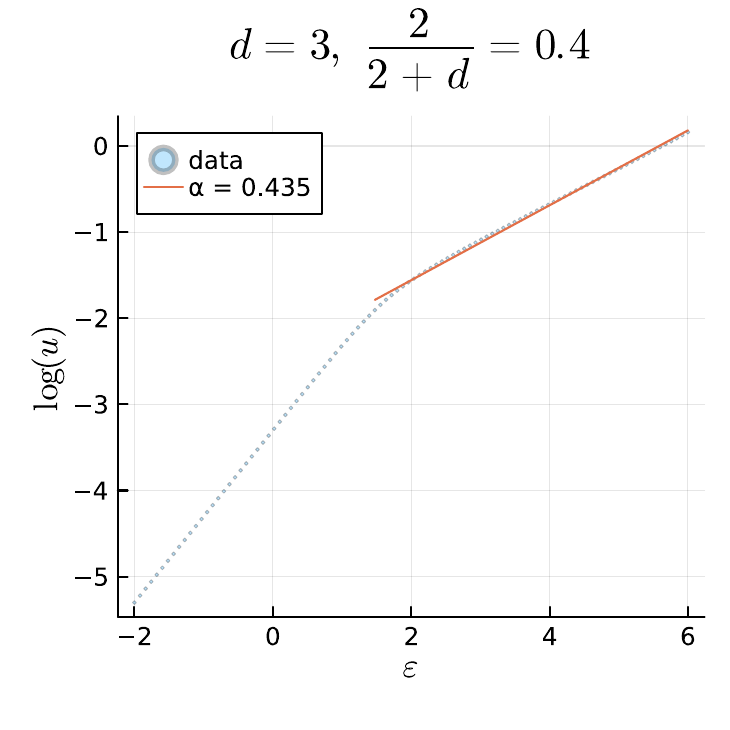}
	\end{center}
	\caption[]{Scaling of dual potential $u$ on the $d$-sphere, for $d = 1, 2, 3$. }
	\label{fig:Sphere}
\end{figure}

\subsection{Behaviour of the operator}

Next, we investigate the behaviour of the operator $\Delta^{OT}$ in the discrete setting where points are sampled from the uniform distribution on the 2-dimensional torus with major and minor radii $R = 1, r = 1/2$. We fix a point $(0, 1, 1/2)$ at which the tangent space $T_{x_0} \MC$ is spanned by $(e_1, e_2)$. We then consider a function $f(x, y, z) = \frac{1}{2}(3x^2 + 5y^2 + 7z^2)$. For $N = 100, 250, 500, 1000, 2500, 5000, 10,000$ points sampled uniformly from the torus, we calculated the $(N+1) \times (N+1)$ coupling $\pi$ by solving \eqref{eq: optimalPlan} and then computed the quantity $K_{\varepsilon, N}^{-1} (\Delta^{OT} f)(x_0)$. 

Motivated by the asymptotic scalings we derived, we tried setting $\varepsilon \propto N^\alpha$ for varying exponents: $\alpha = 2$ which should correspond to a fixed regularisation level in the continuous case (and we do not expect convergence to the Laplacian in this case), and $\alpha = 1.75, 1.5, 1.25$ which all fall within the regime where the results of Theorem \ref{thm: MainConv} apply. We show in Figure \ref{fig:Torus} the values of $K_{\varepsilon, N}^{-1} (\Delta^{OT} f)(x_0)$ over 25 independent samples at each value of $N$. 

We see that when $\alpha = 2$, the quantity $K_{\varepsilon, N}^{-1} (\Delta^{OT} f)(x_0)$ stabilises around a fixed value close to zero as $N$ increases. This agrees with our understanding that $\varepsilon \propto N^2$ in the discrete setting corresponds to the continuous case of empirical distributions with a fixed value of $\varepsilon$. On the other hand, when $1 < \alpha < 2$ we observe a pattern of values appears to converge around a different, positive, value. Importantly, for various $1 < \alpha < 2$, these values are similar -- this supports the scaling relation of Theorem 1 and suggests that the quantity is converging to the value (up to a constant independent of $\varepsilon, N$) of the Laplace--Beltrami operator at $x_0$. As a local averaging process takes place, setting $\eps$ too small means implies smaller neighbourhoods and thus a larger variances as is exhibited on the bottom right panel of Figure~\ref{fig:Torus}.

\begin{figure}[h!]
	\begin{center}
    	\includegraphics[width = 0.325\linewidth]{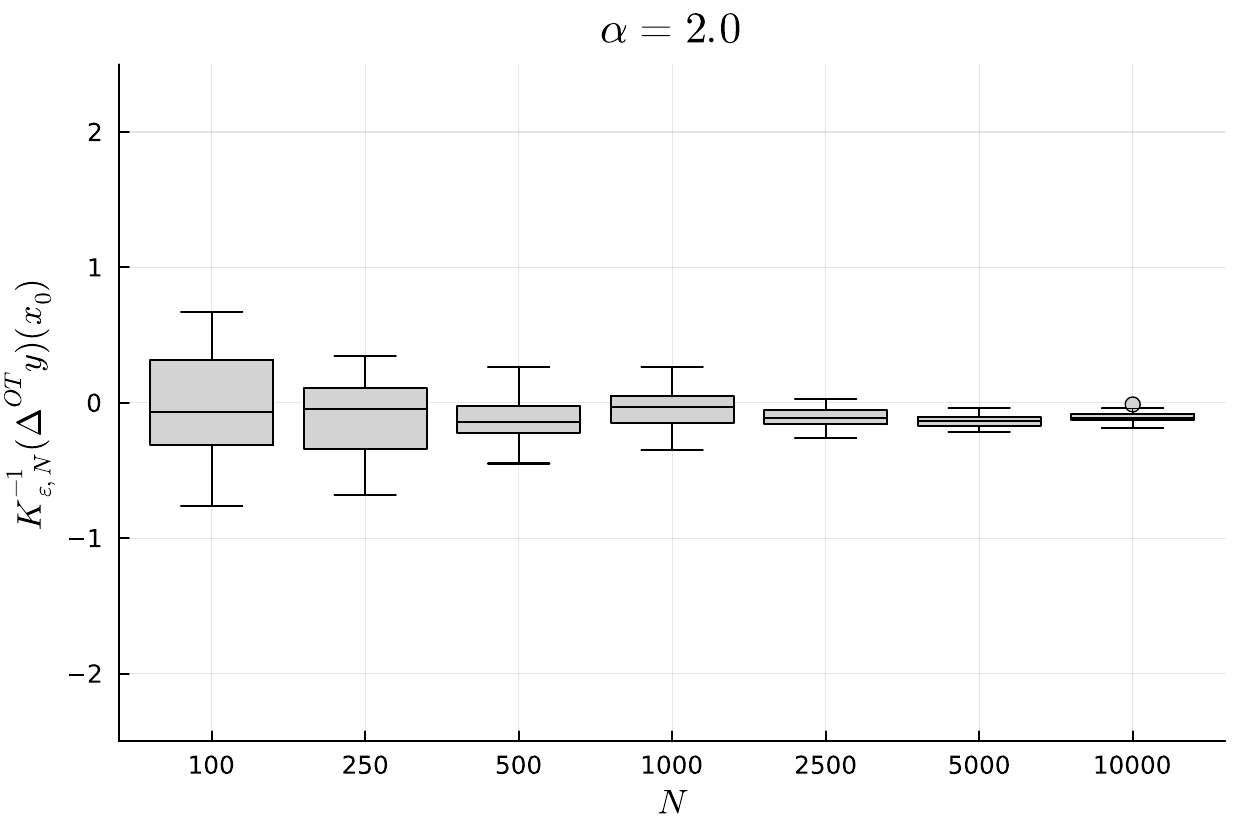}
    	\includegraphics[width = 0.325\linewidth]{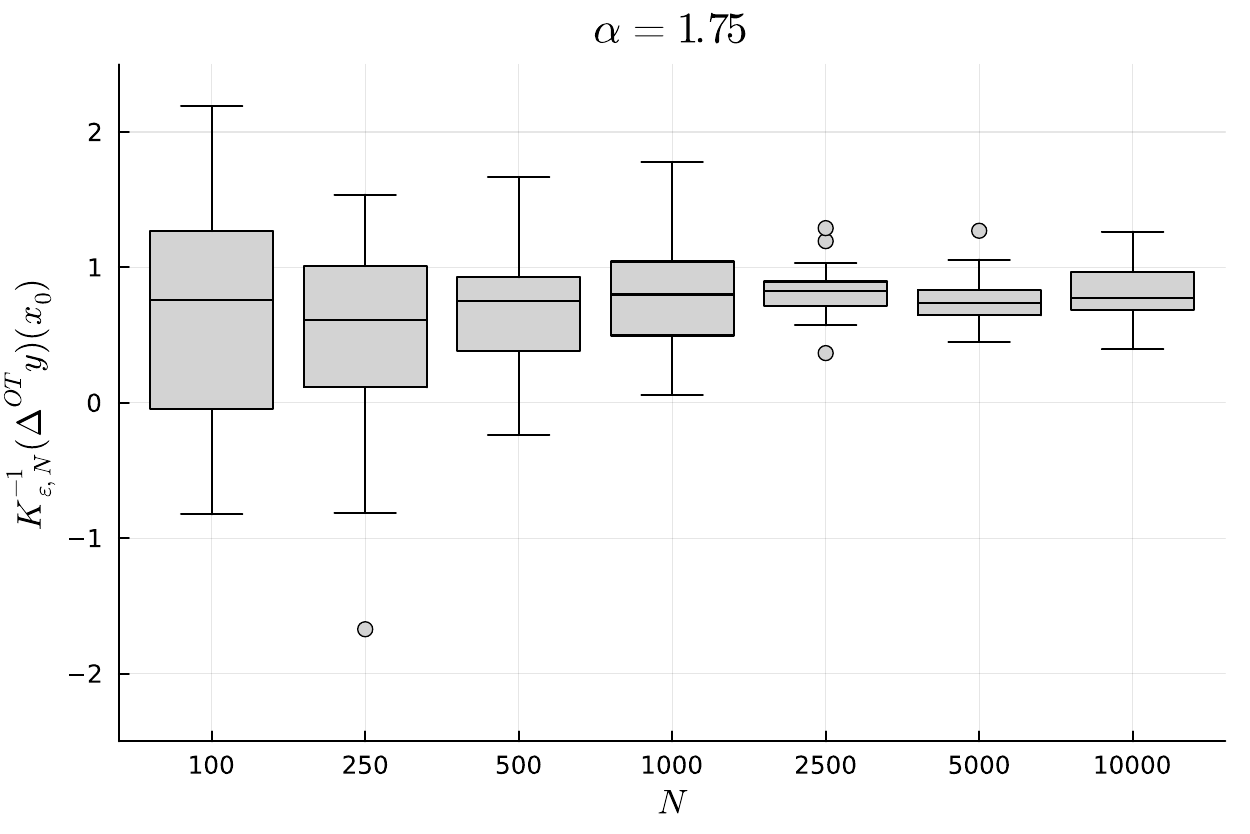} \\ 
    	\includegraphics[width = 0.325\linewidth]{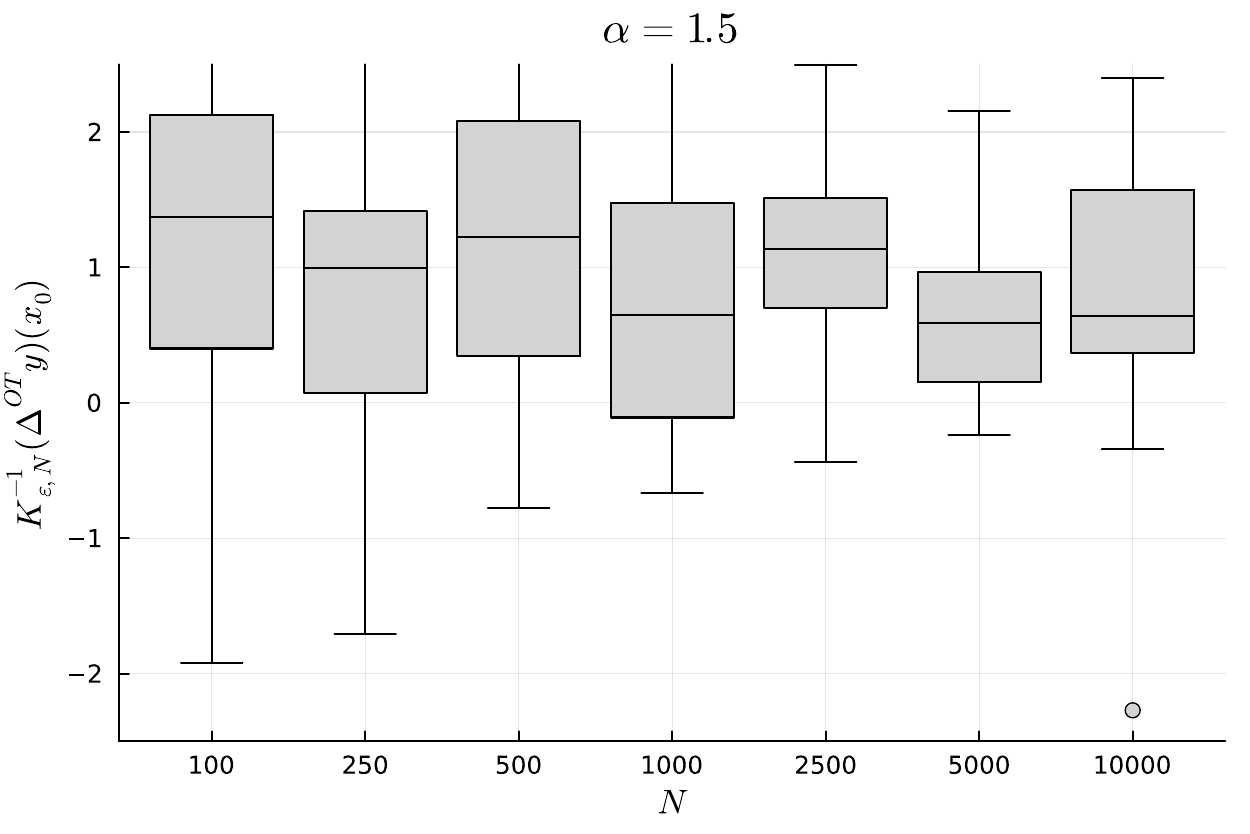}
    	\includegraphics[width = 0.325\linewidth]{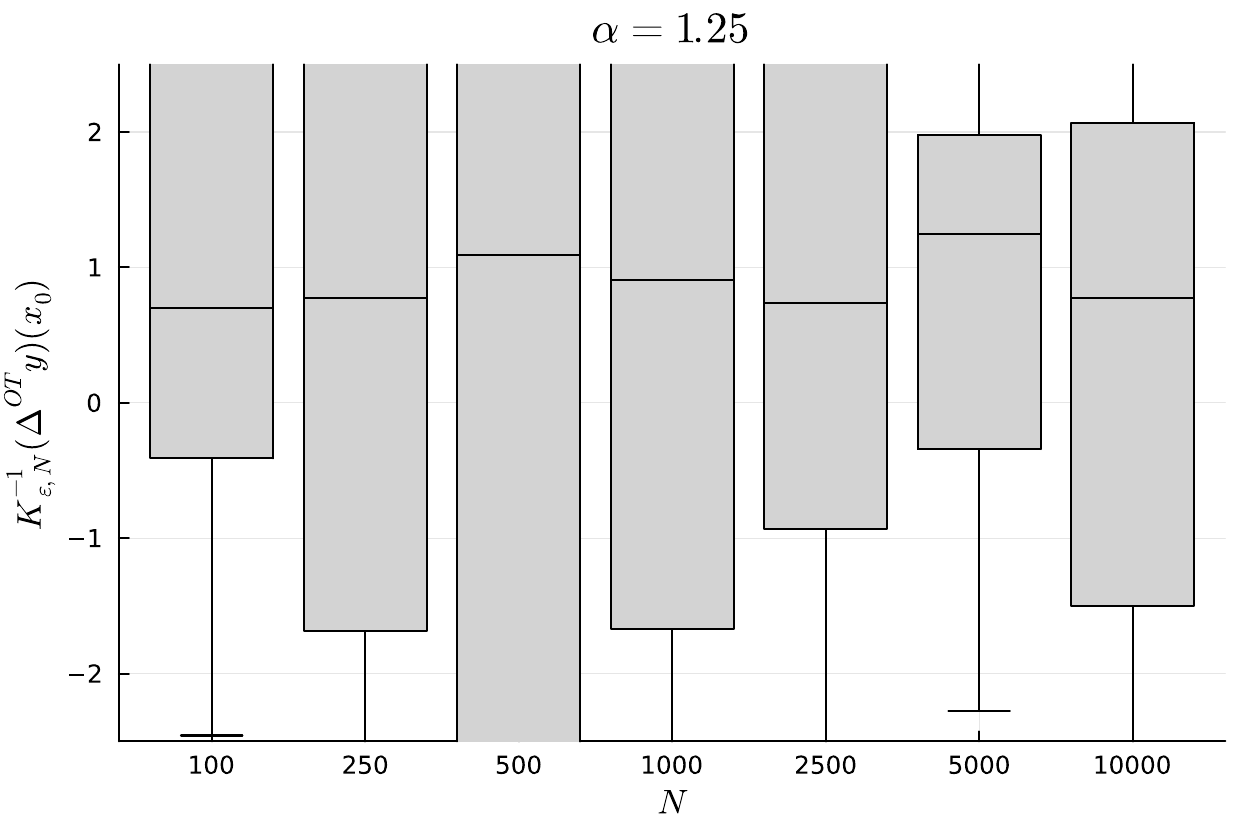} 
	\end{center}
	\caption[]{Estimate of Laplacian $K_{\varepsilon, N}^{-1} (\Delta^{OT} f)(x_0)$ for varying sample sizes $N$, and $\varepsilon \propto N^\alpha$, for various choices $  \alpha = 2, 1.75, 1.5, 1.25$.}
	\label{fig:Torus}
\end{figure}

\section{Computational scheme}

\paragraph{Symmetric semi-smooth Newton}  To solve the dual problem \eqref{eq: DualLor}, the algorithm proposed by \citet[Algorithm 2]{lorenz2021quadratically} can be adapted to the symmetric case in which case a single dual potential is solved for. 
This algorithm is a semi-smooth Newton algorithm, which explains the fact that one solves the linear system \eqref{eq:lorenz_linear_system} to obtain the update $\Delta u$.
We detail this below in Algorithm~\ref{alg:lorenz_symmetric}, and notably the symmetry halves the size of the linear system \eqref{eq:lorenz_linear_system}. Since we seek a projection onto the set of \emph{hollow} bistochastic matrices and not the set of bistochastic ones, the dual problem involves a term $\1 \1^\top - I$. Remark that, as in the dual formulation \eqref{eq: DualHollow}, the hollowness constraint can be achieved by putting $\infty$ on the diagonal of the cost matrix.

A simpler algorithm based on alternating projections onto the subspace of symmetric matrices with prescribed row-sums and the cone of non-negative matrices was proposed in \citep[Algorithm 1]{zass2006doubly}. We note that generally such schemes are only guaranteed to converge to a \emph{feasible} point in the intersection of two convex sets \cite{chu2023structured}, not necessarily the optimal one.  
  In Figure \ref{fig:convergence_comparison} we compare its convergence behaviour to the semi-smooth Newton method (Algorithm \ref{alg:lorenz_symmetric}). For $N = 250, 1000$, starting with a $N \times N$ matrix of standard Gaussians, we symmetrise it and compute its bistochastic projection in Frobenius norm using either algorithm. As a measure of convergence, we record at each iteration the violation of the bistochastic constraint: $\| \pi \ones - \ones \|_2$. In both the cases we consider, the semi-smooth Newton method converges in fewer than 10 iterations. On the other hand, the alternating scheme takes several orders of magnitude more iterations, and its convergence rate appears to become much slower for larger problem sizes. The much faster convergence of the semismooth Newton method is characteristic of second-order methods which typically exhibit quadratic convergence, while alternating projections typically exhibit linear convergence. This behaviour is consistent with the observations of \cite{chu2023structured}, who propose a structured BFGS method for solving the dual QOT problem.

\begin{algorithm}
    \caption{Algorithm 2 of \cite{lorenz2021quadratically}, specialised to symmetric inputs}
    \label{alg:lorenz_symmetric}
    \begin{algorithmic}
        \State Input: cost matrix $C_{ij} \ge 0$, regularisation parameter $\varepsilon > 0$, Armijo parameters $\theta, \kappa \in (0, 1)$, conjugate gradient regulariser $\delta = 10^{-5}$. 
        \State Initialise: $u \gets \1$
        \State \begin{equation}
            \Phi(u) = -\langle u, \1 \rangle + \frac{1}{4\varepsilon} \| [u \1^\top + \1 u^\top - C]_+ \|_2^2.
        \end{equation}
        \While{not converged}
            \State $P_{ij} \gets u_i + u_j - C_{ij}$
            \State $\sigma_{ij} = \1\{ P_{ij} \ge 0 \}$
            \State $\pi_{ij} = \max\{ P_{ij}, 0 \}/\varepsilon$
            \State Solve for $\Delta u$:
            \begin{equation}
                \left( \sigma + \diag{\sigma \1} + \delta I \right) \Delta u = -\varepsilon (\pi - I)\1.\label{eq:lorenz_linear_system}
            \end{equation}
            \State Set $t = 1$ and compute 
            \[
            d= \eps \sum_{ij}\pi_{ij}\left[ (\Delta u)_i +(\Delta u)_j\right] \ - 2 \eps \langle \Delta u, \mu\rangle
            \]
            \While{$\Phi(u + t \Delta u) \ge \Phi(u) + t \theta d$} 
                \State $t \gets \kappa t$
            \EndWhile
            \State $u \gets u + t \Delta u$
        \EndWhile 
        \State $\pi_{ij} = \max\{ 0, u_i + u_j - C_{ij}\}/\varepsilon$
    \end{algorithmic}
  \end{algorithm}

\begin{figure}
  \centering 
  \begin{subfigure}{0.4\textwidth}
    \includegraphics[width = \linewidth]{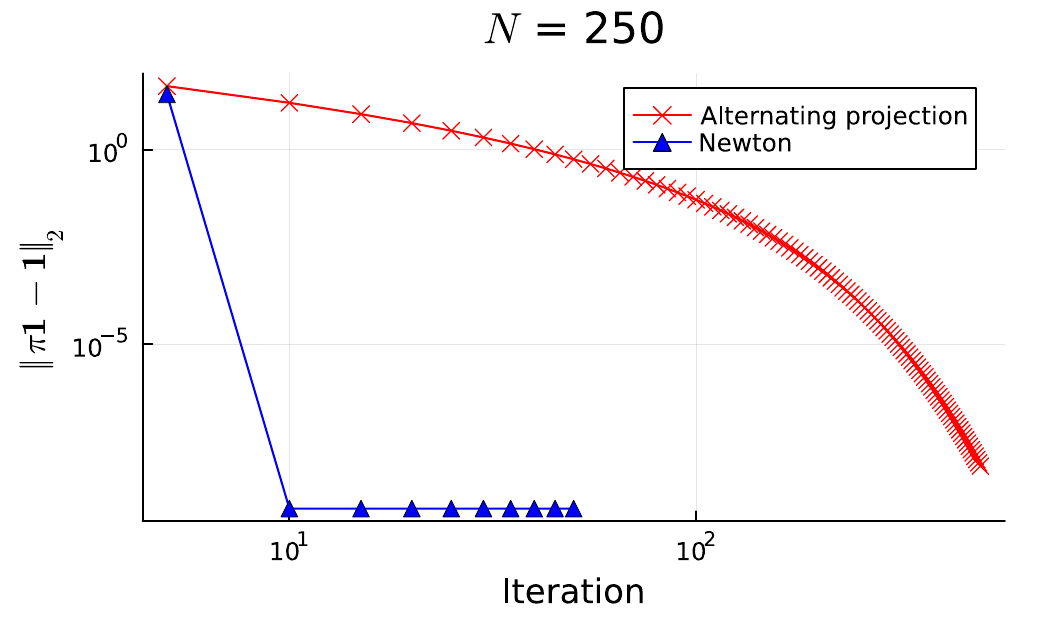}
  \end{subfigure}
  \begin{subfigure}{0.4\textwidth}
    \includegraphics[width = \linewidth]{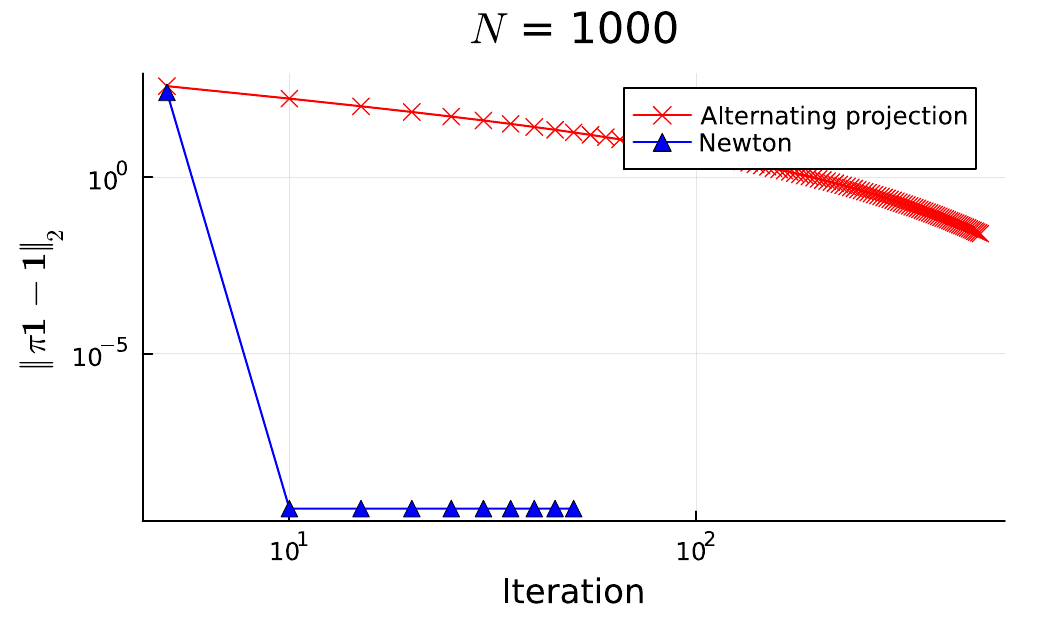}
  \end{subfigure}
    \caption{Comparison of alternating projections to symmetric semi-smooth Newton. Bistochastic constraint violation (measured in $L^2$, i.e., $\| \pi \ones - \ones \|_2$) vs. number of iterations. }
    \label{fig:convergence_comparison}
\end{figure}

\paragraph{Active-set method} For large datasets, we show that this method is amenable to an active set method like the one proposed in \citep{lim2020doubly}. This allows us to take advantage of the sparse nature of the bistochastic projection in Frobenius norm: most computations are done explicitly in terms of sparse matrices. We introduce a sparse support matrix $S \in \{ 0, 1\}^{N \times N}$ and write $S^c = 1 - S$. Consider then the primal problem with support restricted to $S$:
\begin{align}
    \min_{A \in \KCo, \: A \odot S^c = 0}  \| A + \varepsilon^{-1} C \|_F^2. \label{eq:primal-as}
\end{align}
If the feasible set for the restricted-support primal problem \eqref{eq:primal-as} has nonempty relative interior, the corresponding dual problem exists and strong duality holds, e.g., by Slater's condition. The corresponding dual problem is written:
\begin{align}
  \max_{u} \: \langle u, \1 \rangle - \frac{1}{4 \varepsilon} \left\| [u \1^\top + \1 u^\top -  C]_+ \odot S \right\|_F^2. \label{eq:dual-as}
\end{align}
In the above, we have absorbed the hollow constraint on the diagonal into the support matrix $S$ by assuming that $S_{ii} = 0$ for all $i$.
The problem \eqref{eq:dual-as} can be tackled using the same semi-smooth Newton algorithm as proposed in Algorithm \ref{alg:lorenz_symmetric}. The restriction on the support of $A$ can be formally understood as setting $C_{ij} = +\infty$ whenever $S_{ij} = 0$. The benefit of this approach is that solution of each restricted-support problem by the semismooth Newton algorithm \ref{alg:lorenz_symmetric} involves only $\Oh(|S|)$ nonzero entries, rather than $\Oh(N^2)$ in the dense case. For $N$ larger than a few thousand, we find that this leads to significant speedups. As proved by the analysis in \citep[Lemma 1; Proposition 2]{lim2020doubly}, it is sufficient to use bistochasticity of the primal solution as a stopping condition for Algorithm \ref{alg:activeset} and convergence within a finite number of steps is guaranteed. The critical factor for good behaviour in Algorithm \ref{alg:activeset} is the choice of initial support set $S$, in particular for duality to hold the support set $\{ A : A \in \KCo, A \odot S^c \}$ must remain feasible. This is discussed at length in \cite[Section A.3]{lim2020doubly}; in practice we find good results by initialising $S$ with $k$-nearest neighbours as an initial rough guess, and adding random permutation matrices to ensure feasibility.

We demonstrate in Figure \ref{fig:activeset_benchmark} the practical computational speedup achieved by implementation of Algorithm \ref{alg:activeset}.
We sample $N = 10^3, 5\times 10^3, 10\times 10^3, 25\times 10^3$ points from a standard Gaussian distribution in dimension $d = 250$ and form the matrix $C$ of pairwise squared Euclidean distances. Following our recommended heuristic, we normalise $C$ to have mean 1, and use either Algorithm \ref{alg:lorenz_symmetric} or \ref{alg:activeset} to compute the bistochastic Frobenius projection with parameter $\varepsilon = 1$. For Algorithm \ref{alg:activeset}, we initialise the support using $k$-nearest neighbours with $k = 50$.  For each problem instance and algorithm, we record the average total runtime and memory allocation across 10 calls. Figure \ref{fig:activeset_benchmark} shows the distribution over 10 independent problem instances. For the active-set Algorithm \ref{alg:activeset} we stress that the time and memory taken to initialise the support using $k$-NN is accounted for in our results. We find that the active-set solver achieves a very significant speed and memory advantage for problems sizes of about $N = 10^4$ and larger.

For reference, an implementation of the algorithms described in this article along with Julia code for reproducing numerical experiments is available at \url{https://github.com/zsteve/QROT}. 

\begin{algorithm}
  \caption{Active-set method based on \cite[Algorithm 2]{lim2020doubly}}
  \label{alg:activeset}
  \begin{algorithmic}
    \State Input: initial support set $S_{ij} \in \{ 0, 1\}$, cost matrix $C_{ij} \ge 0$, regularisation parameter $\varepsilon > 0$, Armijo parameters $\theta, \kappa \in (0, 1)$, conjugate gradient regulariser $\delta = 10^{-5}$.
    \State $u^0 \gets \1$
    \State $S^0 \gets S$
    \State $k \gets 1$
    \While{$\pi^k$ is not bistochastic}
      \State Solve for dual potential $u^k$ \eqref{eq:dual-as} using Algorithm \ref{alg:lorenz_symmetric}, started from $u^{k-1}$ and support set $S^{k-1}$.
      \State $\pi^k_{ij} \gets \max\{ 0, u^k_i + u^k_j - C_{ij}\}/\varepsilon$
      \State $S^k \gets S^{k-1} \cup \mathrm{supp}(\pi^k)$ 
      \State $k \gets k + 1$ 
    \EndWhile
  \end{algorithmic}
\end{algorithm}

\begin{figure}
    \centering
    \begin{subfigure}{0.7\textwidth}
        \includegraphics[width = \linewidth]{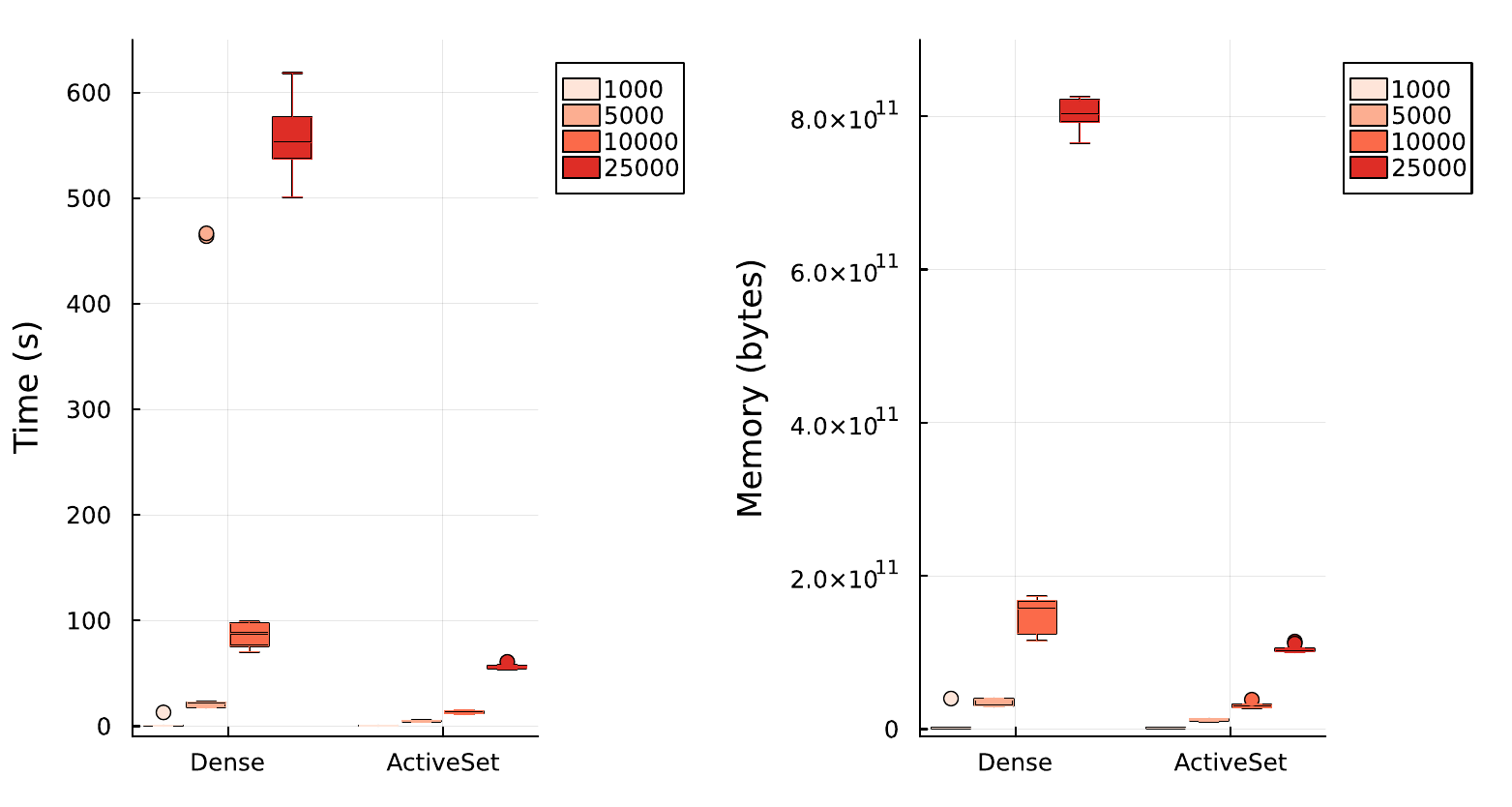}
    \end{subfigure}
    \caption{Benchmark of dense and active set algorithms. Left: algorithm runtime in seconds. Right: total memory allocation in bytes. All results are averaged over 10 repeats shown across 10 randomly generated problems for varying $N = 1000, 5000, 10000, 25000$, $d = 250$ and fixed $\varepsilon$. }
    \label{fig:activeset_benchmark}
\end{figure}

\begin{figure}
    \centering
    \begin{subfigure}{0.49\textwidth}
        \includegraphics[width = \linewidth]{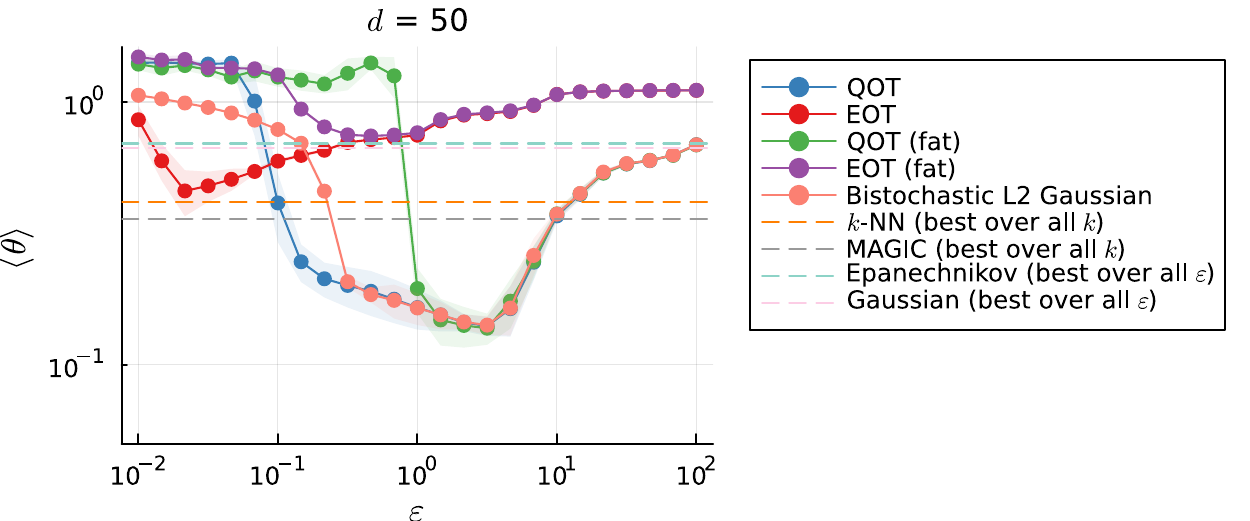}
        \caption{}
    \end{subfigure}
    \begin{subfigure}{0.49\textwidth}
        \includegraphics[width = \linewidth]{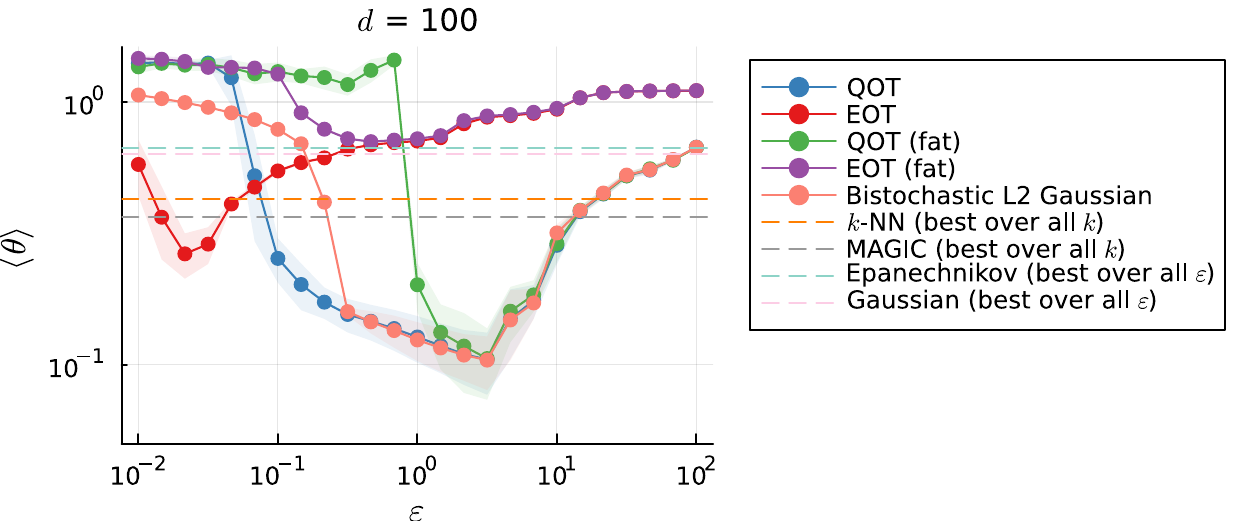}
        \caption{}
    \end{subfigure}
    \begin{subfigure}{0.49\textwidth}
        \includegraphics[width = \linewidth]{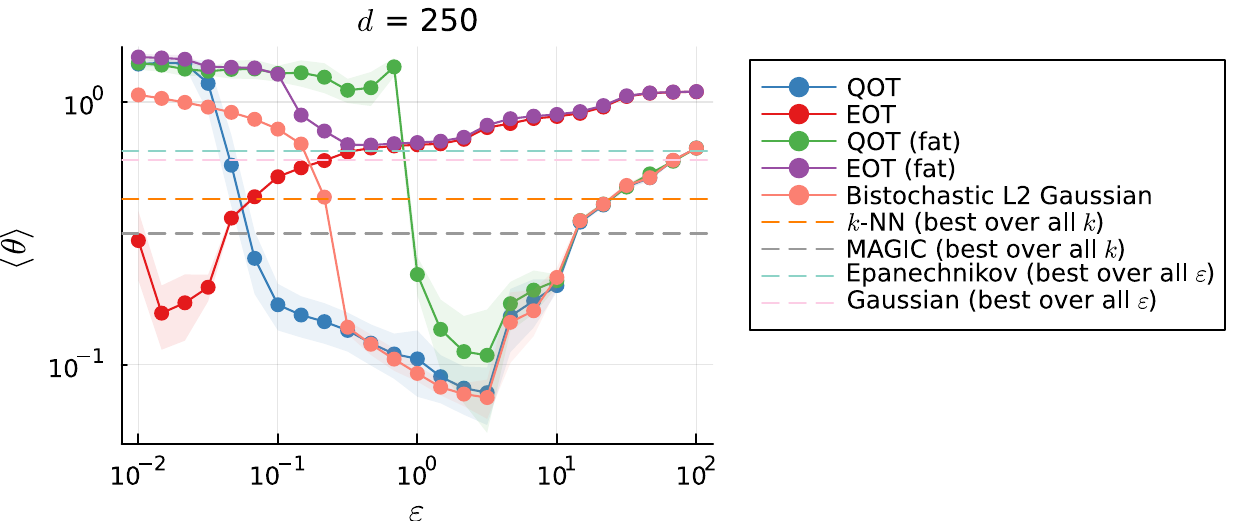}
        \caption{}
    \end{subfigure}
    \begin{subfigure}{0.49\textwidth}
        \includegraphics[width = \linewidth]{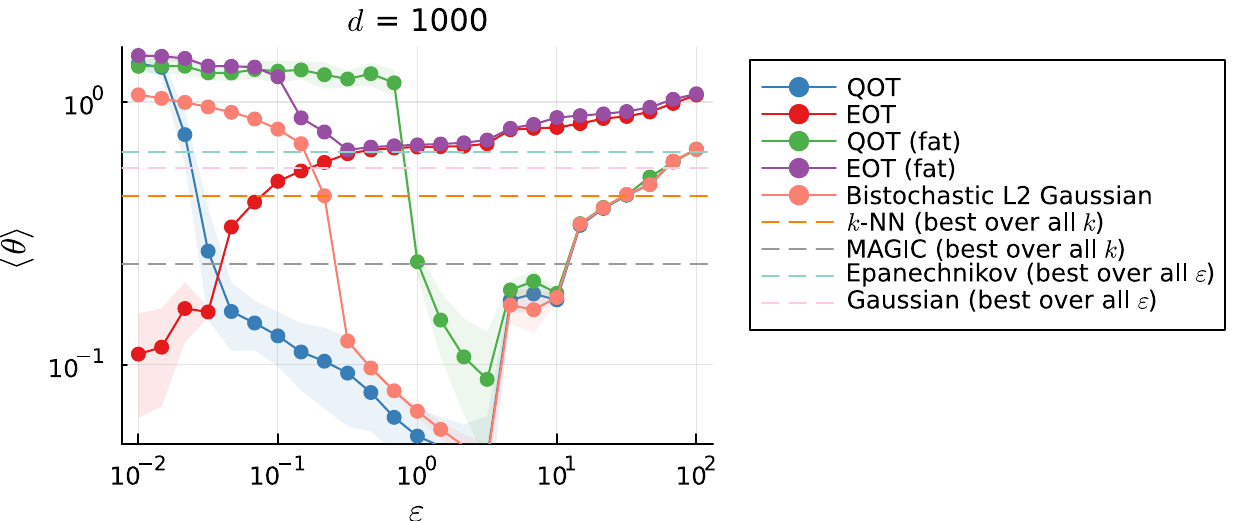}
        \caption{}
    \end{subfigure}
    \caption{Performance of different kernel constructions for spiral example as $d$ varies, measured in terms of the average principal angle between the leading Laplacian eigenspaces. Note that performance of bistochastic, hollow methods (both L2 and entropic) improves as $d$ increases. }
    \label{fig:spiral_supp_fig}
  \end{figure}

\begin{figure}
    \centering
    \begin{subfigure}{0.3\textwidth}
        \includegraphics[width = \linewidth]{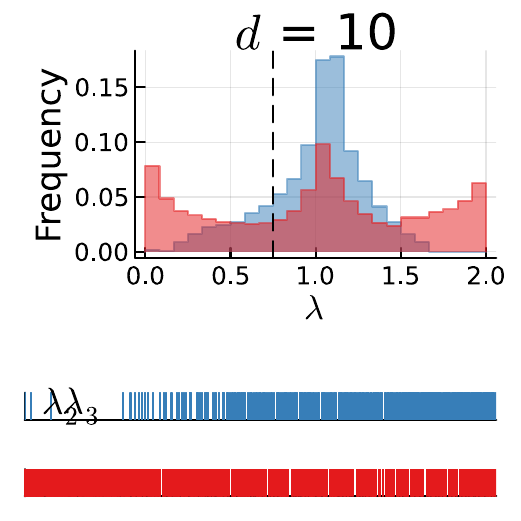}
        \caption{}
    \end{subfigure}
    \begin{subfigure}{0.3\textwidth}
        \includegraphics[width = \linewidth]{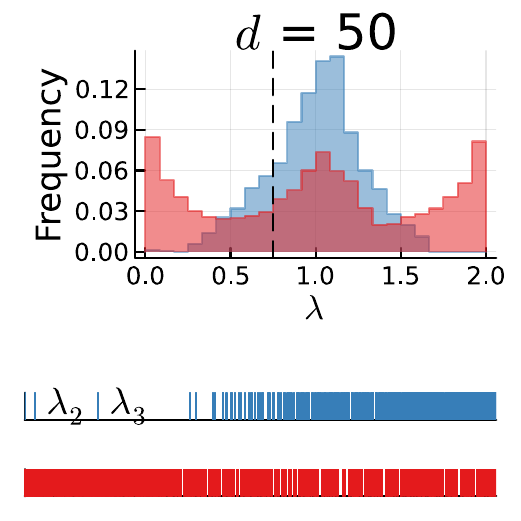}
        \caption{}
    \end{subfigure}
    \begin{subfigure}{0.3\textwidth}
        \includegraphics[width = \linewidth]{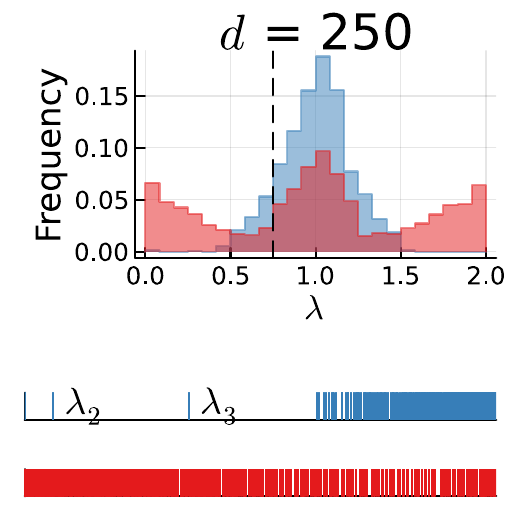}
        \caption{}
    \end{subfigure}
    \caption{Laplacian spectrum of QOT (blue) and EOT (red) affinity matrices for mixture of $k = 3$ Gaussians with varying $d$, as in Figure \ref{fig:gmm}(a-d). In all cases, $\varepsilon = 1.0\times \overline{C}$ for QOT and EOT is tuned to match the mean perplexity. }
    \label{fig:gmm_supp_fig}
\end{figure}

\begin{table}[h!]
  {
    \centering
    \begin{tabular}{|l|p{3cm}|p{7cm}|}
    \hline
    \textbf{Method}               & \textbf{Parameters}         & \textbf{Notes} \\ \hline
    Bistochastic L2               & $\varepsilon \in [10^{-2}, 10^2]$   & No hollowness constraint for ``fat'' variant \\ \hline
    Bistochastic Entropic         & $\varepsilon \in [10^{-2}, 10^2]$    & No hollowness constraint for ``fat'' variant \\ \hline
    $k$-NN         & $k \in [5, 125]$    & Symmetrised and symmetrically normalised \\ \hline
    Epanechnikov         & $h = \varepsilon^{2/3}$    & Bandwidth scaling consistent with $\varepsilon$, symmetrically normalised \\ \hline
    Gaussian & $h \in [10^{-2}, 10^2]$    & Symmetrically normalised \\ \hline
    Gaussian L2 & $h \in [10^{-2}, 10^2]$    & Bistochastic L2 projection of Gaussian kernel \citep{zass2006doubly} \\ \hline
    MAGIC \citep{van2018recovering} & $k \in [5, 125], t = 5$ & All other parameters are default \\ \hline
    \end{tabular}
    \caption{Parameter ranges chosen for each kernel construction method in the example of Figure \ref{fig:figure_spiral}.}
    \label{tab:spiral_methods_parameters}
  }
\end{table}

\vskip 0.2in
\bibliography{Refs}

\end{document}